\documentclass{article}

% if you need to pass options to natbib, use, e.g.:
% \PassOptionsToPackage{numbers}{natbib}
% before loading neurips_2023

% ready for submission
% \usepackage[preprint]{neurips_2024}
% \usepackage{neurips_2024}

% to compile a preprint version, e.g., for submission to arXiv, add the
% [preprint] option:
% \usepackage[preprint]
% \usepackage[numbers]{natbib}  % 或者 [authoryear]，根据格式需要

% to compile a camera-ready version, add the [final] option, e.g.:
\usepackage[final]{neurips_2025}
\usepackage{natbib}  

% to avoid loading the natbib package, add option nonatbib:
%    \usepackage[nonatbib]{neurips_2023}

\usepackage{pifont}
\usepackage[utf8]{inputenc} % allow utf-8 input
\usepackage[T1]{fontenc}    % use 8-bit T1 fonts
\usepackage{url}            % simple URL typesetting
\usepackage{booktabs}       % professional-quality tables
\usepackage{amsfonts}       % blackboard math symbols
\usepackage{nicefrac}       % compact symbols for 1/2, etc.
\usepackage{microtype}      % microtypography
\usepackage[table,usenames,dvipsnames]{xcolor}      % colors
\usepackage{wrapfig}
\usepackage[colorlinks,linktoc=all]{hyperref}
\usepackage{tikz}
\usepackage{amsmath,amsfonts,mathtools}
\usepackage{graphicx}
\usepackage{arydshln}
\usepackage{amsmath}
\usepackage{algorithm}
\usepackage{algpseudocode}
\hypersetup{citecolor=Blue}
\hypersetup{linkcolor=Blue}
\hypersetup{urlcolor=Blue}
% 定义高亮命令用于标记最好和第二好的结果

\usepackage{hyperref}
\usepackage{url}  % DO NOT 
% For theorems and such
\usepackage{amsmath}
\usepackage{amssymb}
\usepackage{mathtools}
\usepackage{amsthm}
\usepackage{xspace}
\usepackage{array}       % 增强表格功能
\usepackage{color}       % 文本颜色支持
\usepackage{colortbl}    % 表格中的颜色支持
\usepackage{xcolor} 
\usepackage{etoc}   % 用于生成局部目录（如附录目录）
\usepackage{minitoc}

\usepackage{cancel}

%%%%% NEW MATH DEFINITIONS %%%%%

\usepackage{amsmath,amsfonts,bm}

% Mark sections of captions for referring to divisions of figures

% Highlight a newly defined term

% Figure reference, lower-case.

% Figure reference, capital. For start of sentence

% Section reference, lower-case.

% Section reference, capital.

% Reference to two sections.

% Reference to three sections.

% Reference to an equation, lower-case.
\def\eqref#1{equation~\ref{#1}}
% Reference to an equation, upper case

% A raw reference to an equation---avoid using if possible

% Reference to a chapter, lower-case.

% Reference to an equation, upper case.

% Reference to a range of chapters

% Reference to an algorithm, lower-case.

% Reference to an algorithm, upper case.

% Reference to a part, lower case

% Reference to a part, upper case

\def\1{\bm{1}}

% Random variables

% rm is already a command, just don't name any random variables m

% Random vectors

% Elements of random vectors

% Random matrices

% Elements of random matrices

% Vectors

% Elements of vectors

% Matrix

% Tensor
\DeclareMathAlphabet{\mathsfit}{\encodingdefault}{\sfdefault}{m}{sl}
\SetMathAlphabet{\mathsfit}{bold}{\encodingdefault}{\sfdefault}{bx}{n}

% Graph

% Sets

% Don't use a set called E, because this would be the same as our symbol
% for expectation.

% Entries of a matrix

% entries of a tensor
% Same font as tensor, without \bm wrapper

% The true underlying data generating distribution

% The empirical distribution defined by the training set

% The model distribution

% Stochastic autoencoder distributions

 % Laplace distribution

% Wolfram Mathworld says $L^2$ is for function spaces and $\ell^2$ is for vectors
% But then they seem to use $L^2$ for vectors throughout the site, and so does
% wikipedia.

 % See usage in notation.tex. Chosen to match Daphne's book.

% \DeclareMathOperator*{\argmax}{arg\,max}

% \DeclareMathOperator*{\argmin}{arg\,min}

% if you use cleveref..
\usepackage[capitalize,noabbrev]{cleveref}

%%%%%%%%%%%%%%%%%%%%%%%%%%%%%%%%
% THEOREMS
%%%%%%%%%%%%%%%%%%%%%%%%%%%%%%%%
\theoremstyle{plain}
\newtheorem{theorem}{Theorem}[section]

\theoremstyle{definition}

\theoremstyle{remark}

% Todonotes is useful during development; simply uncomment the next line
%    and comment out the line below the next line to turn off comments
%\usepackage[disable,textsize=tiny]{todonotes}
\usepackage[textsize=tiny]{todonotes}
\usepackage{multirow}
\usepackage{subcaption}
\usepackage{xspace}

\usepackage{enumitem}

\usepackage{makecell}

\newcommand{\methodname}{{\textsc{StRap}}\xspace}

% \title{\mbox{\textsc{RAGraph}: A General Retrieval-Augmented} \\ Graph Learning Framework}

% \title{\mbox{\includegraphics[height=1.2em]{Graphs/logo.pdf} \vspace{-1.2cm}} \textsc{RAGraph}: A General Retrieval-Augmented\\ \quad \quad  Graph Learning Framework}

\title{
    % \parbox{0.185\textwidth}{%
    %     \vspace{-0.63cm}
    %     \centering
    %     \includegraphics[height=3.6em]{Graphs/logo.pdf}\\
    %     \vspace{-0.85cm}
    % }%
    \textsc{StRap}: Spatio-Temporal Pattern Retrieval\\ for Out-of-Distribution Generalization
    % General Retrieval-Augmented\\ \quad \quad  \quad Graph Learning Framework
}

% \title{
%   \mbox{\includegraphics[height=2.8em]{Graphs/logo.pdf}} \\[-0.2em] % 调整这里的间距
%   \textsc{RAGraph}: A General Retrieval-Augmented \\[0.5em] % 这里也可以调整间距
%   Graph Learning Framework
% }

\author{
  Haoyu Zhang$^{\blacktriangle \spadesuit \blacktriangledown*}$ , 
  Wentao Zhang$^{\clubsuit *}$, 
  Hao Miao$^{\blacklozenge}$\thanks{Indicates equal contribution.}~~,
  Xinke Jiang$^{\bigstar}$, 
  Yuchen Fang$^{\bigstar}$,
   \textbf{Yifan Zhang$^{\spadesuit}$}\thanks{Corresponding author.}~~ \\
  % \\
  % \includegraphics[height=0.83em]{} \texttt{wentaozh2001@gmail.com}, \texttt{xinkejiang@stu.pku.edu.cn} \\
  \normalfont{$^\blacktriangle$ City University of Hong Kong, Hong Kong, China} \\
  \normalfont{$^\spadesuit$ City University of Hong Kong (Dongguan), Guangdong, China} \\
  \normalfont{$^\clubsuit$ Northeastern University, Shenyang, China}\\
  \normalfont{$^\blacklozenge$ The Hong Kong Polytechnic University, Hong Kong, China} \\
  \normalfont{$^\bigstar$ University of Electronic Science and Technology of China, Chengdu, China} \\
  \normalfont{$^\blacktriangledown$ SLAI, Shenzhen, China} \\
{hzhang2838-c@my.cityu.edu.hk}\quad
{wentaozh2001@gmail.com}\quad
{hao.miao@polyu.edu.hk}\quad \\
{thinkerjiang@foxmail.com}\quad
{fyclmiss@gmail.com}\quad
{yifan.zhang@cityu-dg.edu.cn}}

\usepackage[subtle]{savetrees}

\begin{document}
\maketitle
% % Add GitHub link with logo
% \vspace{-1cm}
% \begin{center}{
%   \raisebox{-\mydepth}{\includegraphics[height=1.4\myheight]{Graphs/github.png}}
% \textbf{\url{https://anonymous.4open.science/r/STRAP/}}
% }
% \end{center}
\doparttoc \faketableofcontents
\begin{abstract} Spatio-Temporal Graph Neural Networks (STGNNs) have emerged as a powerful tool for modeling dynamic graph-structured data across diverse domains. However, they often fail to generalize in Spatio-Temporal Out-of-Distribution (STOOD) scenarios, where both temporal dynamics and spatial structures evolve beyond the training distribution. To address this problem, we propose an innovative \textbf{\underline{S}}patio-\textbf{\underline{T}}emporal \textbf{\underline{R}}etrieval-\textbf{\underline{A}}ugmented \textbf{\underline{P}}attern Learning framework, \textbf{\methodname}, which enhances model generalization by integrating retrieval-augmented learning into the STGNN continue learning pipeline.
The core of \methodname is a compact and expressive pattern library that stores representative spatio-temporal patterns enriched with historical, structural, and semantic information, which is obtained and optimized during the training phase.
During inference, \methodname retrieves relevant patterns from this library based on similarity to the current input and injects them into the model via a plug-and-play prompting mechanism. This not only strengthens spatio-temporal representations but also mitigates catastrophic forgetting. Moreover, \methodname introduces a knowledge-balancing objective to harmonize new information with retrieved knowledge.
Extensive experiments across multiple real-world streaming graph datasets show that \methodname consistently outperforms state-of-the-art STGNN baselines on STOOD tasks, demonstrating its robustness, adaptability, and strong generalization capability without task-specific fine-tuning. \end{abstract}

\section{Introduction}
% Framework

%讲清楚参数化(用分布来解释）
%实验里面涉及的baseline ,related work要有，related里面对model要按照实验的分类
%challenge不够attractive

%为什么别人只捕捉了shallow pattern match，这样做会有什么后果
%为什么别人通用性差，带来了什么问题
% \textbf{Spatio-Temporal Graph Neural Networks (STGNNs)}~\cite{jin2023spatiotemporal,sahili2023spatiotemporal,tang2023explainable,li2023graph, zhang2022dynamic, ma2023histgnn, roy2021unified} have emerged as a powerful paradigm for modeling complex systems that evolve over both space and time. By integrating spatial and temporal 
%% 先介绍STGNN：
\textbf{Spatio-Temporal Graph Neural Networks (STGNNs)}~\cite{li2023graph, zhang2022dynamic, ma2023histgnn, roy2021unified,li2022mining} have emerged as a powerful paradigm for modeling complex systems that evolve over both space and time. By integrating spatial and temporal dependencies~\cite{chen2021neural,jiang2023uncertainty,choi2024gated} and modeling techniques~\cite{li2024stg, meng2021cross}, STGNNs have surpassed traditional spatio-temporal approaches~\cite{li2023graph, sahili2023spatiotemporal}, making extraordinary progress in diverse domains~\cite{yu2023generalized},
% roy2021unified, 10.14778/3641204.3641217, JU2024102341,wang2018regiontrans,wang2025transcsm,liu2024multiscale,10.5555/3367243.3367303
such as traffic forecasting~\cite{wang2025transcsm,liu2024multiscale}, climate and weather prediction~\cite{roy2021unified, ma2023histgnn}, financial modeling~\cite{hou2021st, jiang2025time}, and public health~\cite{kapoor2020examining, wang2022causalgnn}. 
However, training and deploying STGNNs on real-world \textbf{streaming} spatio-temporal data poses critical generalization challenges due to evolving dynamics across multiple dimensions:
\ding{182} \textbf{Spatially}, systems undergo restructuring through node additions and removals~\cite{pareja2020evolvegcn, sahili2023spatiotemporal}, leading to dynamic topologies and heterogeneity~\cite{zhang2019heterogeneous,yu2024non, fan2022heterogeneous}.
\ding{183} \textbf{Temporally}, data exhibits periodic fluctuations~\cite{jin2023spatiotemporal}, abrupt changes~\cite{cai2021structural}, and long-term drifts~\cite{wang2024stone}.
\ding{184} \textbf{Spatio-temporally}, coupling between space and time also produces synergistic impact and further results in joint distribution shifts~\cite{yuan2023environment, tang2023explainable,jiang2023incomplete}, which may significantly impair STGNN generalization in dynamic scenarios.
% \textbf{Joint coupling}, between space and time produces co-evolving distribution shifts~\cite{yuan2023environment, tang2023explainable} that significantly impair STGNN generalization in dynamic scenarios.

This highly dynamic nature of spatio-temporal data causes traditional STGNN models to face severe \textbf{STOOD} (\textbf{S}patio-\textbf{T}emporal \textbf{O}ut-\textbf{O}f-\textbf{D}istribution) problems, leading to degraded performance for conventional models~\cite{zhou2023maintaining}.
% In response, recent approaches can be categorized into four types~\cite{zheng2025towards,wang2024comprehensive}:
Existing efforts solve this critical problem as follows~\cite{wang2024comprehensive}:
\ding{182} \textit{Backbone-based methods} (Pretrain, Retrain), which either directly apply a model trained on historical data to new data without further training (Pretrain) or completely retrain the whole model from scratch using the new data (Retrain);
% directly utilize standard architectures; 
\ding{183} \textit{Architecture-based methods} (TrafficStream~\cite{chen2021trafficstream}, ST-LoRA~\cite{ruan2024low}, STKEC~\cite{wang2023knowledge}, EAC~\cite{chen2024expand}), which modify model architectures;
% modify neural architectures; 
\ding{184} \textit{Regularization-based methods} like EWC~\cite{liu2018rotate}, which constrain model parameter updates; \ding{185} \textit{Replay-based methods} (Replay~\cite{foster2017replay}), which reuse historical samples.
Despite their progress, these approaches face the following limitations: backbone methods suffer catastrophic forgetting~\cite{liu2021overcoming,van2024continual}, architecture-based approaches struggle with stability-plasticity trade-offs~\cite{sui2024addressing}, regularization methods over-constrain adaptation~\cite{hounie2023resilient}, and replay techniques fail to distinguish between relevant and irrelevant historical knowledge~\cite{zheng2024selective,korycki2021class}. 
The root cause of the above deficiency is the insufficient exploitation of historical information and current information. 
Considering the distribution shifts in spatio-temporal graph data, historical information only partially benefits current predictions~\cite{yang2024generalized,fort2021exploring}, while some may introduce noise and even negative impacts. \textbf{Thus, the key challenge remains identifying which historical knowledge components provide the most valuable information gain for current predictions under complex spatio-temporal distribution shifts~\cite{chen2024information,hou2025advancing}.}

% Based on these observations,
Instead of learning from the historical data and overwriting the model parameters to store the patterns implicitly, we propose to explicitly store the key similar patterns found in historical data to overcome the above limitations. 
The explicit storage can largely help to keep more historical information without memorizing it by updating the model parameters.
Such external storage mechanisms are also capable of being cooperated with any STGNN-based backbones to enhance their ability to address STOOD problems.
% This approach allows more efficient retrieval and use of past information, improves model performance, and addresses the STOOD problem caused by spatio-temporal coupling shifts. 
To this end, we need to handle the following challenges:

% \textbf{C1. How to Effectively Extract Information-Rich Patterns from Historical Data?}
\textbf{C1. How to Effectively Identify and Efficiently Store Contributive Patterns?}
% While prior methods have attempted to improve prediction by replaying patterns derived from traffic evolution graphs, they often fail to effectively extract the most relevant information. (redundant words)
% These approaches suffer from two key limitations: 
Current approaches either concentrate solely on spatial aspects or rely on fixed spatio-temporal graphs trained on static datasets through updating the model parameters. 
For one thing, such a narrow focus prevents them from capturing richer and more complete historical patterns that could provide significant information gain. 
In addition, the information memorized in the model parameters is limited, especially for the STOOD tasks.
% As a result, they struggle to adapt to streaming scenarios characterized by highly dynamic spatial and temporal dependencies~\cite{wang2023pattern}. 
% Effectively and comprehensively extracting the most informative spatio-temporal patterns from historical data—patterns that maximize information gain for current predictions—thus remains a critical and unresolved challenge.
Effectively identifying the most informative spatio-temporal patterns from historical data that maximize information gain for current predictions and storing these patterns efficiently are the cornerstones to solving the STOOD problem.

\textbf{C2. How to Optimally Balance the Integration of Historical Extracted Patterns with Current Observations?}
Existing 
% pattern retrieval-based 
approaches face difficulties in accurately matching relevant patterns with the current observations due to insufficient similarity metrics and retrieval criteria. This limits the model’s ability to fully exploit informative historical knowledge. On the other hand, excessive reliance on pattern matching may lead to overfitting to historical cases, blurring the boundary between prediction and retrieval. Therefore, a key challenge lies in developing mechanisms that can appropriately balance the incorporation of historical patterns with current data, ensuring both flexibility and robustness in evolving spatio-temporal prediction tasks.

To address the two aforementioned challenges, we propose the \textbf{\underline{S}}patio-\textbf{\underline{T}}emporal \textbf{\underline{R}}etrieval-\textbf{\underline{A}}ugmented \textbf{\underline{P}}attern Learning (\textbf{\methodname}) framework. We construct a pattern library with a three-dimensional key-value architecture, where pattern keys serve as efficient retrieval indices and pattern values encapsulate rich contextual information.

%STRAP基于历史信息构建了一个具有键值架构的模式库.在模式库中,STRAP分别通过对空间特征,即图的拓扑和几何性质（例如图曲率、聚类系数）,以及时间序列特征（例如小波变换）提取,形成模式键,对这些特征的评价指标形成了模式值.其中模式键作为高效的检索索引，模式值则封装了丰富的上下文信息.

%一个基于模式库检索以增强模型分布外泛化能力的框架.具体来说,strar基于历史信息构建了一个具有键值架构的模式库，其中模式键作为高效的检索索引，模式值则封装了丰富的上下文信息.
% 为了有效平衡历史知识与新信息在时空超出分布（STOOD）任务中的挑战（extbf { C2 }）,STRAR提出了通过将由历史时空数据特征抽取建立模式库的方式,并在输入新信息的同时对模式库进行特征检索,完成在新信息输入的同时对历史相关特征和模式进行回顾,防止了灾难性遗忘，同时实现了历史模式与当前观察的灵活集成

%在其核心，STRAP 维护着专门的空间、时间和时空模式集合。这个模式集合基于历史信息构建,模式键通过图的拓扑和几何性质（例如图曲率、聚类系数）以及时间序列特征（例如小波变换）提取，并针对相似性匹配进行了优化。模式值通过专门的 STGNN 骨干网络生成，保留了下游任务所需的基本特征。这种分离允许精确检索相关知识，而无需耗尽计算。

% STRAP 采用两相机制：（1）基于相似性的检索，通过与历史模式键匹配当前图特征；（2）通过将检索到的模式值与当前表示的融合来增强特征。这种显式解耦解决了 \textit { \textbf { C1 } }，通过在外部内存结构中保留历史知识，而不仅仅是在模型参数中，有效防止了在持续学习过程中的灾难性遗忘。

% 此外，我们的时空模式提取方法捕捉跨维依赖关系，而自适应融合机制校准历史模式的影响，通过使其能够在分布变化中实现稳健的泛化，解决了 \textit { \textbf { C2 } } 问题。

At its core, \methodname~maintains specialized collections for spatial, temporal, and spatio-temporal patterns. Pattern keys are extracted using geometric topological and properties (e.g., graph curvature, clustering coefficients) and time series characteristics (e.g., wavelet transformations), optimized for similarity matching. Pattern values, generated through a dedicated STGNN backbone, preserve the essential features needed for downstream tasks. 
% This separation allows precise retrieval of relevant knowledge without exhaustive computation.
Specifically, \methodname~employs a two-phase mechanism: (1) similarity-based retrieval that matches current graph features with historical pattern keys, and (2) feature enhancement through fusion of retrieved pattern values with current representations. 
% This explicit decoupling addresses 
As such, \textit{\textbf{C1}} is addressed by constructing and updating the external pattern library based on past retrieval values, and retrieving the similar patterns from the library for the downstream tasks.
% by systematically identifying and preserving the most information-rich historical patterns in \textcolor{red}{the separated pattern libraries} 
% an external memory structure 
% rather than solely relying on model parameters, effectively preventing catastrophic forgetting while ensuring only the most relevant historical knowledge is retained for current predictions during continual learning.
Furthermore, to solve \textit{\textbf{C2}}, our approach for spatio-temporal pattern extraction captures cross-dimensional dependencies, while an adaptive fusion and training mechanism calibrates the influence of historical patterns.
In summary, our key contributions are as follows:
\begin{itemize}[leftmargin=*,itemsep=0pt,topsep=0pt]
\item We propose \methodname, a novel plug and play framework tailored for STOOD scenarios, which constructs a key-value pattern library that captures multi-dimensional patterns across spatial, temporal, and spatio-temporal domains, fundamentally decoupling pattern indexing from pattern utilization to mitigate catastrophic forgetting and alleviate the STOOD problem.
%即插即用framework

\item We design an adaptive fusion and learning mechanism that dynamically integrates retrieved historical patterns with current observations, enabling robust and flexible prediction over continuously evolving spatio-temporal data streams.

\item Experiments on multiple real-world streaming graph datasets demonstrate that \methodname achieves SOTA performance on STOOD tasks, showcasing its effectiveness in continual generalization.
\end{itemize}

\section{Related Work}

\subsection{Continue Learning on STOOD Tasks}

Continuous Learning typically maintains long-term and important information while updating model memory using newly arrived instances~\cite{Parisi_Tani_Weber_Wermter_2018,Buzzega_Boschini_Porrello_Abati_Calderara_2020,chen2024expand}.
Due to continuous learning's excellent performance in adapting to evolving data and avoiding catastrophic forgetting, a series of methods based on continuous learning have been proposed. Chen~\cite{chen2021trafficstream} introduced a historical data replay strategy called TrafficStream based on the classic replay strategy in continuous learning, which feeds all nodes to update the neural network, thereby achieving a balance between historical information and current information. Similarly based on experience replay, Wang et al.~\cite{wang2023pattern} proposed PECPM, a continuous spatio-temporal learning framework based on pattern matching, which learns to match patterns on evolving traffic networks for current data traffic prediction. Miao et al.~\cite{miao2024unified} further emphasized replay-based continuous learning by sampling current spatio-temporal data and fusing it with selected samples from a replay buffer of previously learned observations. Chen et al.~\cite{chen2021trafficstream} proposed a parameter-tuning prediction framework called EAC, which freezes the base STGNN model to prevent knowledge forgetting and adjusts prompt parameter pools to adapt to emerging expanded node data.
\subsection{Pattern Retrieval Learning}
Pattern Retrieval Learning refers to methods of representation learning and reasoning through identifying, extracting, and utilizing key patterns in data~\cite{manolescu1998feature,sommer1999improved,wang2022cross,10.1145/3711896.3736992}. In the context of graph neural networks, pattern retrieval learning typically focuses on how to discover stable and predictive patterns from graph structures and node features~\cite{bengio2013representation,jiang2025hykge,gao2018deep}. In spatio-temporal data, pattern retrieval faces unique challenges as it must simultaneously consider spatial dependencies and temporal evolution patterns~\cite{atluri2018spatio}. Han et al.~\cite{han2025retrieval} proposed RAFT, a retrieval-augmented time series forecasting method that directly retrieves historical patterns most similar to the input from training data and utilizes their future values alongside the input for prediction, reducing the model's burden of memorizing all complex patterns. Li et al.~\cite{li2023graph} proposed a framework focused on extracting seasonal and trend patterns from spatio-temporal data. By representing these patterns in a disentangled manner, their method could better handle distribution shifts in non-stationary temporal data. In the area of out-of-distribution detection, Zhang et al.~\cite{zhang2023out} proposed an OOD detection method based on Modern Hopfield Energy, which memorizes in-distribution data patterns from the training set and then compares unseen samples with stored patterns to detect out-of-distribution samples,but it's heavy reliance on attention mechanisms limits it's stability in complex and highly variable environments.

\section{Preliminaries}
We first formalize spatio-temporal graph~\cite{ali2022exploiting,zhang2022dynamic} data structures, followed by an introduction to OOD learning with a focus on spatial and temporal distribution shifts. Based on these foundations, we propose a unified prediction framework designed to address the challenges of robust modeling in dynamically evolving environments. Notations are provided in in Table~\ref{tab:notations} in Appendix~\ref{sec:app notation}.

\paragraph{\textit{Definition 1. (Dynamic Spatio-Temporal Graph)}}

We define a dynamic spatio-temporal graph as a time-indexed sequence of graphs $\mathcal{G}_t = (\mathcal{V}_t, \mathbf{A}_t), t\in[1,T]$, where $\mathcal{V}_t = \{v_1, v_2, \ldots, v_N\}$ is the set of $N_t$ nodes and $\mathbf{A}_t \in \mathbb{R}^{N \times N}$ is the weighted adjacency matrix at time step $t$, encoding dynamic pairwise relationships among nodes. 
Each entry $\mathbf{A}_{t,ij} \in \mathbb{R}$ indicates the strength of the connection between nodes $v_i$ and $v_j$ at time $t$, and its size $N_t$, which may vary over time.
At each discrete time step $t$, a graph signal $\mathbf{X}_t \in \mathbb{R}^{N \times c}$ is observed, with each node is associated with a $c$-dimensional feature vector. The sequence of graph signals over $T$ time steps is denoted as $\mathbf{X} = [\mathbf{X}_1, \mathbf{X}_2, \ldots, \mathbf{X}_T] \in \mathbb{R}^{T \times N \times c}$.

\paragraph{\textit{Definition 2. (Spatio-Temporal Out-of-Distribution Learning)}}
Let the training environment be denoted as $e^{\text{tr}} = (\mathcal{G}^{\text{tr}}, \mathcal{D}^{\text{tr}})$, where $\mathcal{G}^{\text{tr}} = (\mathcal{V}^{\text{tr}}, \mathbf{A}^{\text{tr}})$ represents the training graph and $\mathcal{D}^{\text{tr}} = \{(\mathbf{X}^{(i)}, \mathbf{Y}^{(i)})\}_{i=1}^{n}$ is the corresponding dataset consisting of $n$ samples.
The objective of OOD learning~\cite{wang2024stone,zhang2023out} is to generalize from this training environment to arbitrary test environments $e^{\text{te}} \sim \mathcal{E}$, where test distributions may differ significantly from those seen during training, i.e., $P(e^{\text{te}}) \neq P(e^{\text{tr}})$. 
This formulation accommodates both classic OOD scenarios: where test environments differ significantly from training due to structural or temporal changes and continual learning scenarios: where distribution shifts emerge progressively as new data streams in over time. 

In \methodname, we consider three types of distribution shifts:
\textbf{\ding{182} Spatial distribution shifts:} Changes in the underlying graph structure, formally denoted as $P(\mathcal{G}^{\text{te}}) \neq P(\mathcal{G}^{\text{tr}})$. These shifts may arise from variations in node connectivity, edge weights, or even the addition or removal of nodes. Such changes affect how information flows through the network and can alter the relative importance of nodes and edges.
\textbf{\ding{183} Temporal distribution shifts:} Changes in temporal dynamics, expressed as $P(\mathbf{X}_{t+1}|\mathbf{X}_{1:t})^{\text{te}} \neq P(\mathbf{X}_{t+1}|\mathbf{X}_{1:t})^{\text{tr}}$. These shifts reflect evolving temporal patterns, including modifications in periodicity, trend, or correlation structure. They can emerge due to abrupt regime changes or through gradual evolution over time.
\textbf{\ding{184} Spatio-temporal distribution shifts:} Joint changes across both spatial and temporal dimensions, captured by $P(\mathbf{X}_{t+1}|\mathbf{X}_{1:t}, \mathcal{G}_{1:t+1})^{\text{te}} \neq P(\mathbf{X}_{t+1}|\mathbf{X}_{1:t}, \mathcal{G}_{1:t+1})^{\text{tr}}$. These shifts model the interplay between dynamic graph structures and evolving temporal signals, often arising in real-world settings where the data-generating process itself is non-stationary and context-dependent.

\paragraph{\textit{Definition 3. (Unified STOOD Prediction Framework)}}

STOOD prediction framework addresses the challenge of generalizing across both spatial and temporal distribution shifts by formulating the learning objective as a unified min-max optimization problem~\cite{chen2024expand,chen2021trafficstream}:
\begin{equation}
\Theta^* = \arg\min_{\Theta} ~\sup_{e \in \mathcal{E}} \mathbb{E}_{(\mathbf{X}, \mathbf{Y}) \sim P(e)} \left[\mathcal{L}(f_{\Theta}(\mathbf{X}), \mathbf{Y})\right],
\end{equation}
Where $f_{\Theta}: \mathbb{R}^{T \times N \times c} \rightarrow \mathbb{R}^{T_p \times N \times c}$ is a neural network parameterized by $\Theta$, which maps an input sequence of node features to a sequence of future predictions.
The target output $\mathbf{Y} = [\mathbf{X}_{T+1}, \mathbf{X}_{T+2}, \ldots, \mathbf{X}_{T+T_p}] \in \mathbb{R}^{T_p \times N \times c}$ represents the ground-truth observations over a prediction horizon of $T_p$ time steps.
The loss function $\mathcal{L}$ measures the gap between predicted and true values, typically using MSE for regression or Binary Cross Entropy for classification tasks.
The conditional distribution $P(e)$ characterizes the data generation in environment $e$, while $\mathcal{E}$ represents all possible environments, including those with distributions different from the training environment.

\section{\methodname Framework}
% In this section, we introduce \methodname, a retrieval-augmented pattern framework addressing two key challenges in spatio-temporal graph learning: (1) balancing historical knowledge with new information in Spatio-Temporal Out-of-Distribution tasks, and (2) efficiently utilizing historical patterns to enhance model generalization through multi-dimensional pattern management and sophisticated retrieval techniques.
 % In this section, we introduce \methodname, a retrieval-augmented pattern framework,可以在任意流式时空数据集中通过历史模式检索解决STOOD.如图所示.Initially,in Section\~ref{Subgraph Decomposition for Pattern Library Construction},我们介绍了从时空数据集中划分空间,时间,时空三个维度子图的方式.In Section~\ref{pattern_extraction}我们详细阐明了Pattern library的构建方法具体包括三个不同维度子图的Pattern特征键的提取方法和特征值的生成.Subsequently,In Section~\ref{pattern_retrieval}说明了基于多维度pattern library的检索算法.Finally,In Section~\ref{}我们介绍了基于检索结果和current data的加权融合机制,这个机制是我们平衡历史模式和当前信息的关键一步.
We present \methodname, a retrieval-augmented framework addressing streaming spatio-temporal outlier detection by leveraging historical patterns (Figure~\ref{fig:main_figure}). Our approach first decomposes complex spatio-temporal data into manageable subgraphs, then extracts and stores multi-dimensional patterns in a searchable library. During inference, \methodname retrieves relevant historical patterns and adaptively fuses them with current observations for optimal prediction. For enhanced clarity, the Spatio-Temporal Pattern Library Construction is outlined in Algorithm~\ref{alg:pattern_library} (cf. Appendix~\ref{app.algo}) and the Training and Inference with Toy Graphs Retrieval are detailed in Algorithm ~\ref{alg:stpp_prediction} (cf. Appendix~\ref{app.algo}). Theoretical analysis in Appendix~\ref{app.theory} demonstrates the effectiveness of our pattern extraction and retrieval mechanisms under distribution shift. 
% Initially, in Section~\ref{sec:Subgraph Decomposition}, we describe the decomposition of spatio-temporal data into spatial, temporal, and spatio-temporal subgraphs. In Section~\ref{sec:pattern_extraction}, we elaborate on the construction of the pattern library, including the extraction of pattern feature keys and the generation of corresponding feature values for each type of subgraph. Subsequently, Section~\ref{sec:pattern_retrieval} introduces the retrieval algorithm based on the multi-dimensional pattern library. Finally, in Section~\ref{sec:fusion_mechanism}, we propose an adaptive fusion mechanism that integrates retrieved patterns with the current data—a crucial step for balancing historical patterns and real-time information.
\begin{figure}[tbh!]
    \vspace{-0.6cm}
    \centering
    \includegraphics[width = 0.99\linewidth]{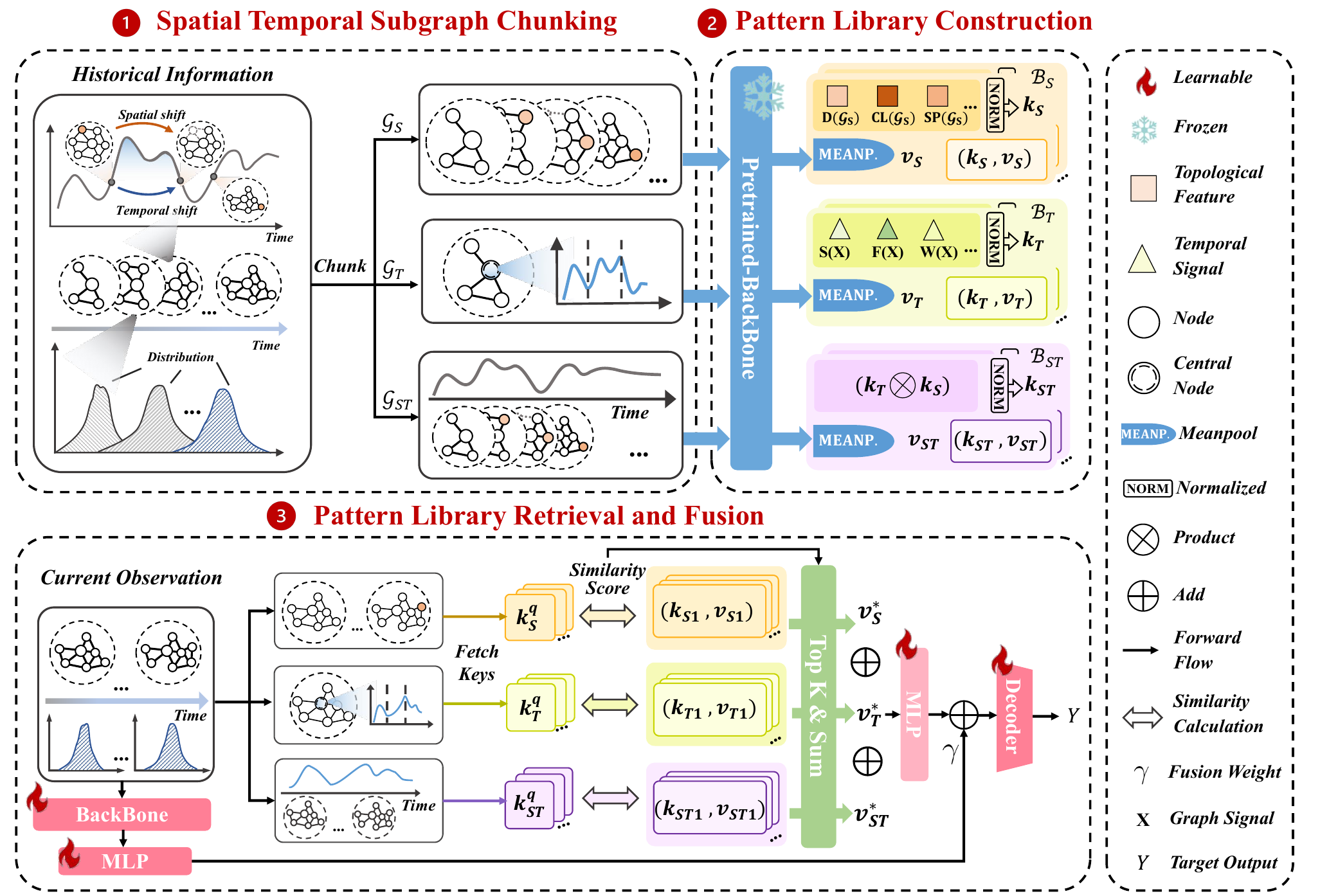}
    \vspace{-0.1cm}
    \caption{The overall framework of \methodname. 
    % Given spatio-temporal graph data, our approach extracts, stores, and retrieves patterns across three dimensions (spatial, temporal, and spatio-temporal). These patterns are used to enhance model representations through an adaptive fusion mechanism. The framework includes: \ding{182} pattern extraction from historical data to build specialized libraries, \ding{183} efficient indexing for real-time retrieval, \ding{184} multi-dimensional similarity-based retrieval during inference, and \ding{185} adaptive fusion of retrieved patterns with current observations.
    }
    \label{fig:main_figure}
    \vspace{-0.2cm}
\end{figure}

\subsection{Spatial Temporal Subgraph Chunking}
\label{sec:Subgraph Decomposition}

To effectively capture diverse and disentangled spatio-temporal patterns, we decompose the original graph sequence into three types of specialized subgraphs, each designed to emphasize a specific dimension of variability: spatial, temporal, or spatio-temporal.

\textbf{\ding{182} Spatial Subgraphs} ($\mathcal{G}_S$): These subgraphs $\{\mathcal{G}_S\}$ are constructed by slicing the temporal axis into overlapping windows of fixed time length $\tau_S$. For each window $[t, t+\tau_S]$, we aggregate both the connectivity and node features to form a window-level spatial representation:
\begin{equation}
\mathcal{G}_S = (\mathcal{V}, \bar{\mathbf{A}}_{[t,t+\tau_S]}, \bar{\mathbf{X}}_{[t,t+\tau_S]})
\end{equation}
Where $\bar{\mathbf{A}}_{[t,t+\tau_S]}$ is the time-averaged adjacency matrix, and $\bar{\mathbf{X}}_{[t,t+\tau_S]}$ is the aggregated node feature matrix across the window. These subgraphs emphasize persistent spatial structures that remain stable over short temporal intervals.

\textbf{\ding{183} Temporal Subgraphs} ($\mathcal{G}_T$): To isolate temporal dynamics, we sample a central node $v_c$ along with its $k$-hop spatial neighborhood $\mathcal{N}_k(v_c)$ during $T$, and track their evolution over the full time span:
\begin{equation}
\mathcal{G}_T = (\{v_c,\mathcal{N}_k(v_c)\}, \mathbf{A}_{[1:T]}|_{\{v_c,\mathcal{N}_k(v_c)\}}, \mathbf{X}_{[1:T]}|_{\{v_c,\mathcal{N}_k(v_c)\}})
\end{equation}
Where $\mathbf{A}_{[1:T]}|_{\mathcal{N}_k(v_c)}$ and $\mathbf{X}_{[1:T]}|_{\mathcal{N}_k(v_c)}$ are the sequences of adjacency matrices and node features restricted to $v_c$ and its neighbors $\mathcal{N}_k(v_c)$. These subgraphs $\{\mathcal{G}_T\}$ focus on localized temporal.

\textbf{\ding{184} Spatio-Temporal Subgraphs} ($\mathcal{G}_{ST}$): To jointly capture spatial and temporal interactions, we first apply spectral clustering~\cite{vonluxburg2007tutorialspectralclustering} to partition the node set into $m$ communities $\{\mathcal{C}_1, \mathcal{C}_2, ..., \mathcal{C}_m\}$ based on structural coherence. For each community $\mathcal{C}_i$, we extract time-windowed subgraphs:
\begin{equation}
\mathcal{G}_{ST} = (\mathcal{C}_i, \mathbf{A}_{[t,t+\tau_{ST}]}|_{\mathcal{C}_i}, \mathbf{X}_{[t,t+\tau_{ST}]}|_{\mathcal{C}_i})
\end{equation}
Where the adjacency matrices and node features are restricted to $\mathcal{C}_i$ within the interval $[t, t+\tau_{ST}]$. These subgraphs $\{\mathcal{G}_{ST}\}$ capture coherent spatio-temporal dynamics among functionally related node groups.

\subsection{Multi-dimensional Pattern Library Construction}
\label{sec:pattern_extraction}

To more effectively capture dynamic spatio-temporal patterns, we extend the chunking approach introduced in Section~\ref{sec:Subgraph Decomposition} by constructing the structured key-value pattern libraries across different dimensions. Each subgraph instance—spatial, temporal, or spatio-temporal—is encoded as a key-value pair that captures characteristic variations within its respective dimension. The processes for generating pattern keys and values are detailed in Sections~\ref{sec:Pattern_Key_generation} and~\ref{sec:Pattern_value_generation}, respectively.
Based on these patterns, we construct three distinct pattern libraries: \ding{182} \textbf{Spatial Library} $\mathcal{B}_S$, \ding{183} \textbf{Temporal Library} $\mathcal{B}_T$, and \ding{184} \textbf{Spatio-Temporal Library} $\mathcal{B}_{ST}$.

\subsubsection{Pattern Key Generation}
\label{sec:Pattern_Key_generation}

Retrieval-augmented learning effectiveness depends on discriminative pattern keys robust to distributional shifts, for which we design dimension-specific extraction methods.

\textbf{Spatial Pattern Keys.}
Spatial shifts often manifest as changes in graph connectivity, community composition, or centrality structure~\cite{10.1145/3696410.3714665}. To capture them, we extract multi-scale topological features:

\ding{182} \textit{From the aspect of local structure}, we compute neighborhood statistics $\mathbf{D}(\mathcal{G_S}) = [\mu_d, \sigma_d, \max(d), \min(d)]$~\cite{cao2014extremality}, where $d$ represents node degrees, and clustering coefficients~\cite{saramaki2007generalizations} $\mathbf{CL}(\mathcal{G_S}) = [C_1, C_2, ..., C_n]$ with $C_i = \frac{2|\{e_{jk}\}|}{k_i(k_i-1)}$, $e_{jk}$ are edges between neighbors of node $i$, and $k_i$ is the degree of node $i$.
\ding{183} \textit{From the aspect of global connectivity, } For robustness against changing graph scales, we calculate path-based metrics such as shortest path statistics~\cite{matthew2005near} $\mathbf{SP}(\mathcal{G_S}) = [\mu_{\text{sp}}, \sigma_{\text{sp}}, \text{diam}(\mathcal{G_S})]$ where $\mu_{\text{sp}}$ and $\sigma_{\text{sp}}$ are the mean and standard deviation of shortest path lengths, and $\text{diam}(\mathcal{G_S})$ is the graph diameter.
\ding{184} \textit{From the aspect of geometric properties, } We also employ Forman-Ricci curvature~\cite{leal2021forman} to capture intrinsic geometric properties that remain stable despite local perturbations. For a node $v_i$, this is computed as:
$\mathbf{FR}(v_i) = 1 - \sum_{u_j \in \mathcal{N}(v_i)} \frac{d_{v_i} \cdot d_{u_j}}{2w_{e_{ij}}} + \sum_{e_{ij} \in E(v_i)} \frac{d_{v_i}}{w_{e_{ij}}}$.
where $d_{v_i}$ is the degree of node $v_i$, $\mathcal{N}(v_i)$ is the set of neighboring nodes, $w_{e_{ij}}$ is the edge weight between nodes $v_i$ and $u_j$, and $E(v_i)$ is the set of incident edges. This curvature analysis identifies distinct topological structures: negative curvature nodes, zero curvature nodes, and positive curvature nodes.

Finally, we concatenate and normalize these features to form the spatial key:
\begin{equation}
\mathbf{k}_S = \textsc{Normalize}([\mathbf{D}(\mathcal{G_S}); \mathbf{CL}(\mathcal{G_S}); \mathbf{SP}(\mathcal{G_S}); \mathbf{FR}(v)])
\end{equation}

\textbf{Temporal Pattern Keys.}
Temporal shifts may arise from changes in variance, periodicity, trend, or complexity~\cite{miao2024less,liu2025timecma,liu2025efficient}. To address this, we extract features that describe the temporal signal at multiple scales:

\ding{182} \textit{Statistical and spectral descriptors:} We compute statistical moments $\mathbf{S}(X) = [\mu_X, \sigma_X, \text{skew}(X), \text{kurt}(X)]$, and extract dominant frequency components $\mathbf{F}(X) = [\omega_1, \dots, \omega_m, E_1, \dots, E_m]$, where each $\omega_i$ denotes a prominent frequency and $E_i$ its corresponding energy.
\ding{183} \textit{Multi-resolution analysis:} To capture transient and multi-scale phenomena, we apply wavelet transforms~\cite{zhang2019wavelet}:
\(
\mathbf{W}(X) = \left\{ W_{\psi}X(a,b) = \frac{1}{\sqrt{a}} \int_{-\infty}^{\infty} X(t)\psi^*\left(\frac{t - b}{a}\right)dt \right\}
\)
where $a$ is the scale parameter and $b$ the time shift.
\ding{184} \textit{Temporal dependencies and complexity:} We compute autocorrelation functions $\mathbf{R}(X) = \{R_{xx}(k)\}$ to identify periodicity at lag $k$, and entropy $\mathbf{H}(X) = \{-\sum_i p(x_i) \log p(x_i)\}$ to quantify uncertainty and regularity.

Finally, we concatenate and normalize these features to form the temporal key:
\begin{equation}
\mathbf{k}_T = \textsc{Normalize}([\mathbf{S}(X); \mathbf{F}(X); \mathbf{W}(X); \mathbf{R}(X); \mathbf{H}(X)])
\end{equation}

\textbf{Spatio-Temporal Interaction Keys.}
To encode joint patterns that emerge from the interaction of structure and dynamics, we define cross-dimensional interaction keys as:
\begin{equation}
\mathbf{k}_{ST} = \textsc{Normalize}(\mathbf{k}_S \otimes \mathbf{k}_T)
\end{equation}
Where $\otimes$ denotes an element-wise cross-product. This formulation allows the model to capture coupled dependencies across spatial and temporal dimensions without treating them independently.

\subsubsection{Pattern Value Generation}
\label{sec:Pattern_value_generation}

Pattern values serve as the semantic content associated with each key, representing informative subgraph embeddings extracted from the backbone model. For $\mathcal{G}_i$ in any dimension $\mathcal{B}_\cdot \in \{\mathcal{B}_S, \mathcal{B}_T, \mathcal{B}_{ST}\}$, we compute the pattern value by applying a frozen shared pretrained STGNN backbone parameterized by $\Theta_{pt}$~(i.e., STGCN~\cite{wang2020traffic}, ASTGCN~\cite{guo2021learning}) to the corresponding subgraph and aggregating its pattern as $\mathbf{v}_i$:
\begin{equation}
\mathbf{v}_i = \textsc{MeanPool}(\textsc{Backbone}_{\Theta_{pt}}(\mathcal{G}_i)),
\end{equation}
Where $\textsc{MeanPool}(\cdot)$ denotes mean-pooling to obtain the full $\mathcal{G}_i$ value.
Thus, for each dimension, we store each key-value pair $(\mathbf{k}_i, \mathbf{v}_i)$ into the pattern library construction.
% We choose mean pooling due to its robustness against outliers and its stability under distribution shifts. Unlike attention or max pooling, it preserves the distributed nature of graph representations and is computationally efficient for large-scale retrieval.
% At inference time, the model constructs a query key from the current observation, retrieves the most relevant historical pattern values via similarity search, and integrates them with the current representation. This enables the model to leverage prior knowledge for improved generalization under both abrupt and gradual distribution changes.

\subsection{Pattern Retrieval and Knowledge Fusion}
\label{sec:Pattern Retrieval and Knowledge Fusion}

\subsubsection{Pattern Library Retrieval}
\label{sec:pattern_retrieval}

After constructing and indexing the pattern libraries, we implement a similarity-based retrieval process to identify relevant historical patterns for the current observation. For $\mathcal{G}_i$ in any dimension $\mathcal{B}_\cdot \in \{\mathcal{B}_S, \mathcal{B}_T, \mathcal{B}_{ST}\}$, we first compute the similarity between the query key \( \mathbf{k}_i^q \) (extracted from the current subgraph) and all stored keys \( \mathbf{k}_j \) in the corresponding library \( \mathcal{B}_\cdot \):

\begin{equation}
s(\mathbf{k}_i^q, \mathbf{k}_j) = \exp(-\|\mathbf{k}_i^q - \mathbf{k}_j\|_2), \quad \mathbf{k}_j\in \mathcal{B}_\cdot
\end{equation}

Similarity scores are then used to retrieve the top-$k$ most relevant key-value pairs from each library:
\begin{equation}
\mathcal{R}_i = \left\{(\mathbf{k}_j^i, \mathbf{v}_j^i, s_j^i) \;\middle|\; j \in \text{TopK}_{\mathcal{B}_\cdot}~s(\mathbf{k}_i^q, \mathbf{k}_j) \right\}
\end{equation}

Where \( s_j^i \) denotes the similarity score, \( \mathbf{k}_j^i \) is the retrieved pattern key, and \( \mathbf{v}_j^i \) is its corresponding value.

By performing retrieval independently across spatial, temporal, and spatio-temporal libraries, the model obtains multiple sets of complementary pattern-value pairs. This multi-dimensional retrieval strategy enables the model to incorporate a diverse set of historical contexts—structural patterns from graph topology, temporal dynamics from time series behavior, and integrated patterns from their interaction—thereby improving its ability to make robust predictions under distributional shifts, including both abrupt changes and gradual evolutions.

\subsubsection{Knowledge Fusion Mechanism}
\label{sec:fusion_mechanism}

After retrieving relevant pattern values from each library, we integrate this historical knowledge with the current observation using an information-theoretic fusion mechanism designed to maximize the joint representational capacity.
For $\mathcal{G}_i$ in any dimension $\mathcal{B}_\cdot \in \{\mathcal{B}_S, \mathcal{B}_T, \mathcal{B}_{ST}\}$, we first compute a similarity-weighted average over the retrieved values:
\(
\mathbf{v}_i^* = \sum_{(\mathbf{k}_j^i, \mathbf{v}_j^i, s_j^i) \in \mathcal{R}_i} \textsc{Softmax}(s_j^i) \cdot \mathbf{v}_j^i
\).

Next, to capture non-linear cross-dimensional dependencies, we utilize the same architectural backbone, parameterized by $\Theta_{\text{train}}$ (initialized from $\Theta_{\text{pt}}$ and subsequently fine-tuned), to encode the current observation $\mathcal{G}$. Two separate multilayer perceptrons—$\textsc{Mlp}_{1}$ for the current observation and $\textsc{Mlp}_{2}$ for the retrieved values—are then applied to embed the respective representations. Formally, the transformations are defined as:
\begin{equation}
\mathbf{Z}_1 = \textsc{Mlp}_{1}\bigl(\textsc{Backbone}_{\Theta_{\text{train}}}(\mathcal{G}_i)\bigr), \quad
\mathbf{Z}_2 = \textsc{Mlp}_{2}(\mathbf{v}_S^* \oplus \mathbf{v}_T^* \oplus \mathbf{v}_{ST}^*).
\end{equation}

Next, the final fused representation is computed as a convex combination of the transformed current input and the aggregated historical knowledge:
\begin{equation}
\mathbf{Z} = \gamma \cdot \mathbf{Z}_1 + (1 - \gamma) \cdot \mathbf{Z}_2,
\end{equation}
Where $\gamma$ is a fusion weight that calibrates the balance between current observations and retrieved historical patterns. The resulting fused embedding $\mathbf{Z}$ is then channeled into the decoder for downstream tasks. This deliberate integration mechanism synthesizes complementary information sources, allowing the model to leverage both immediate context and relevant historical knowledge, thereby enhancing predictive performance across varying distribution conditions.

% We specifically employ an MLP rather than simple concatenation or averaging because it can learn non-linear interactions between different pattern dimensions. This enables the model to identify complex dependencies, such as how spatial structures modulate temporal patterns or how certain spatio-temporal configurations amplify specific structural features. The MLP effectively serves as an adaptive cross-dimensional attention mechanism that dynamically weights the contribution of each dimension based on their learned interactions.

% The final fused representation is computed using a weighted combination:
% \begin{equation}
% \mathbf{Z} = \gamma \mathbf{X}' + (1 - \gamma) \cdot \mathbf{V}
% \end{equation}

% This weighted fusion is motivated by information-theoretic principles. From an information gain perspective, the optimal combination of current and historical knowledge should maximize the mutual information between the fused representation and the target prediction, while minimizing redundancy. The parameter $\gamma \in [0,1]$ controls this balance, with lower values favoring historical patterns when they provide high information gain (such as in distribution-shifted scenarios where current observations may be misleading), and higher values prioritizing current observations when they contain more relevant information.

\section{Experiments}
\label{sec:experiments}

In this section, we present a comprehensive set of experiments to evaluate the effectiveness of \methodname compared to state-of-the-art baselines on three real-world streaming spatio-temporal graph datasets. Our experiments are designed to answer the following research questions:
\begin{itemize}[leftmargin=*]
    \item \textbf{RQ1:} How does \methodname perform compared to state-of-the-art methods on STOOD task?

\item \textbf{RQ2:} What are the contribution of different pattern library and key components to the overall performance?

\item \textbf{RQ3:} How sensitive is \methodname to key hyperparameters? (Results are presented in Appendix~\ref{app.c1}.)

\item \textbf{RQ4:} How robust is \methodname when deployed under significant distribution shifts, including both abrupt and gradual changes?

\item \textbf{RQ5:} How does \methodname perform compared to other retrieval-based methods, and how does it perform in terms of computational efficiency (Results are presented in Appendix~\ref{app.c3}.)?

\item \textbf{RQ6:} How does \methodname perform compared to baseline methods in few-shot scenarios?
\end{itemize}

\subsection{Experimental Setup}
\label{sec:exp setup}

\paragraph{Datasets.} 
We evaluate \methodname on three real-world streaming spatio-temporal graph datasets: \textit{AIR-Stream}~\cite{chen2024expand}, \textit{PEMS-Stream}~\cite{chen2021trafficstream}, and \textit{ENERGY-Stream}~\cite{chen2024expand}. Detailed dataset statistics, experimental settings and evaluation are provided in Table~\ref{tab:details_datasets} in Appendix~\ref{app.dataset} and ~\ref{app.set} in Appendix~\ref{sec:overall results}.

\paragraph{Backbones and Baselines.}
We consider four versions of our proposed framework \methodname: \ding{182} \methodname, which utilizes the complete pattern library system; \ding{183} \methodname w/o S, which indicates we operate without the spatial pattern library; \ding{184} \methodname w/o T, which operates without the temporal pattern library; and 4) \methodname w/o S+T, which functions without the spatio-temporal pattern library. 
For baselines, we implement: \ding{182} two standard training paradigms are included: Pretrain and Retrain, which directly utilize the backbone models; \ding{183} we also compare with state-of-the-art approaches for graph learning: TrafficStream~\cite{chen2021trafficstream}, ST-LoRA~\cite{ruan2024low}, STKEC~\cite{wang2023knowledge}, EAC~\cite{chen2024expand}, EWC~\cite{liu2018rotate}, Replay~\cite{foster2017replay}, ST-Adapter~\cite{pan2022st}, GraphPro~\cite{yang2024graphpro}, and PECPM~\cite{wang2023pattern}. These baselines represent diverse strategies for handling graph-structured streaming data, including backbone, architecture-based, regularization-based, and replay-based methods for evolving graph streams.
For \methodname variants and baselines, we conduct experiments with four different backbone architectures: STGNN~\cite{wang2020traffic}, ASTGNN~\cite{guo2021learning}, DCRNN~\cite{li2018diffusion}, and TGCN~\cite{zhao2019t}.
A detailed description of baselines and backbones can be referred to in Appendix~\ref{app.bb}.

\subsection{\methodname Results (RQ1 \& RQ2)}
\label{sec:expt:perf}

As illustrated in Table~\ref{overall_performance}, our \methodname framework consistently outperforms baseline methods across different datasets and metrics. We summarize our key findings below. Further details and experiment results are provided in Table~\ref{overall_performance_TGCN}, ~\ref{overall_performance_ASTGNN} and ~\ref{overall_performance_DCRNN} in Appendix~\ref{sec:overall results}.

\paragraph{SOTA Result Across Categories (RQ1).}
\textbf{\methodname achieves the highest average performance with \textbf{7.17\%} improvement across metrics (MAE: \textbf{4.88\%}, RMSE: \textbf{9.12\%}, MAPE: \textbf{7.5\%})}. This stems from our multi-level pattern library approach that effectively captures complex streaming spatio-temporal dynamics.
Our analysis reveals key advantages over existing categories: backbone-based methods lack OOD handling mechanisms, leading to catastrophic forgetting~\cite{yang2024generalized}; architecture-based and regularization-based approaches implement parameter protection but insufficiently utilize historical data; and replay-based methods store knowledge implicitly in model parameters, facing capacity constraints~\cite{pan2022understanding}. In contrast, \methodname addresses \textbf{C1} through explicit pattern libraries and retrieval-based fusion, maintaining an interpretable external memory that ensures robust performance across varying distribution conditions.

\paragraph{Ablation Study (RQ2).}
\begin{wrapfigure}{r}{8cm}
\vspace{-0.5cm}
\includegraphics[width=\linewidth]{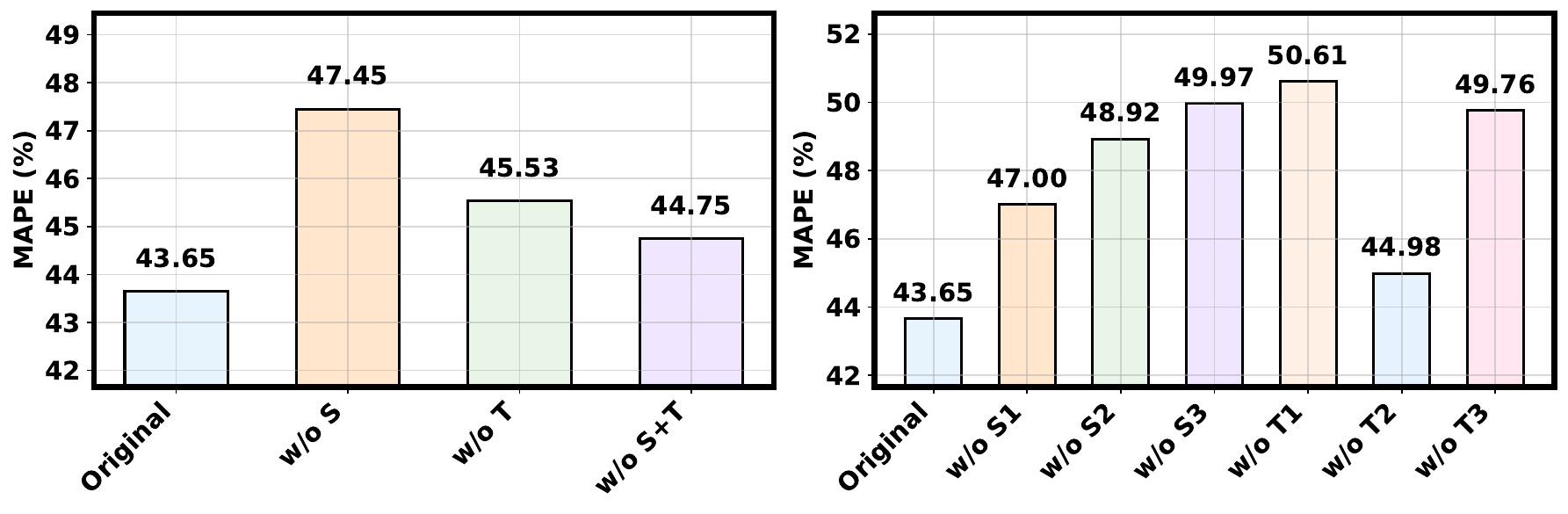}
\vspace{-0.3cm}
\caption{\footnotesize{Impact of different pattern libraries and keys. Left: library. Right: key.}}
\vspace{-0.3cm}
\label{fig:abl}
\end{wrapfigure}

The ablation studies (w/o S, w/o T, w/o S+T) clearly demonstrate each pattern library's importance, with the spatial component being most critical. The temporal component also significantly contributes to performance.

Specifically, Spatial 1 and 2 refer to features from the aspect of local structure and the shortest path statistics feature from global connectivity, while Spatial 3 represents the Forman-Ricci curvature from geometric properties. In terms of time, Temporal 1, 2, and 3 correspond to Statistical and spectral descriptors, Multi-resolution analysis, and Temporal dependencies and complexity, respectively.
From the experimental results, in the spatial pattern analysis, \textbf{Spatial 3 (w/o S3) demonstrates the highest importance with a MAPE of 49.97\%, representing a performance drop of 6.32\% when removed}. This indicates that the Forman-Ricci curvature from geometric properties plays a crucial role in capturing the intrinsic geometric structure of spatio-temporal graphs. Spatial 1 and 2, while contributing to the model performance, show relatively smaller impacts when removed individually, suggesting that geometric features may be more critical than previously assumed in spatio-temporal forecasting tasks.
In the temporal pattern analysis, \textbf{Temporal 1 (w/o T1) emerges as the most critical component with a MAPE of 50.61\%, showing the largest performance degradation of 7.96\% when removed}. This demonstrates that Statistical and spectral descriptors are fundamental for temporal modeling. In contrast, Temporal 2 (Multi-resolution analysis) shows the smallest impact when removed (MAPE: 44.98\%), indicating that while useful, it may be the least critical among the temporal components for this specific forecasting task.

\begin{table*}[htbp]
\caption{Comparison of the overall performance of different methods (STGNN backbone).}
\label{overall_performance}
\centering
\setlength{\tabcolsep}{0.8mm}
\renewcommand{\arraystretch}{1.03}
\begin{small}
\scalebox{0.67}{
\resizebox{1.5\linewidth}{!}{\begin{tabular}{ccccccccccccccc}
\hline
\rowcolor[gray]{0.95}
\multicolumn{3}{c}{\textbf{Datasets}}&\multicolumn{4}{c}{\textit{\textbf{Air-Stream}}}&\multicolumn{4}{c}{\textit{\textbf{PEMS-Stream}}}&\multicolumn{4}{c}{\textit{\textbf{Energy-Stream}}} \\

\hline
\rowcolor[gray]{0.95}
\textbf{Category} & \textbf{Method} & \textbf{Metric} & 3 & 6 & 12 & \textbf{Avg.} & 3 & 6 & 12 & \textbf{Avg.} & 3 & 6 & 12 & \textbf{Avg.} \\
\hline

\multirow{6}{*}{\begin{tabular}[c]{@{}c@{}}\textbf{Back-}\\\textbf{bone}\end{tabular}}
& \multirow{3}{*}{\textit{Pretrain}}
& MAE & 18.96\textcolor{gray}{\text{\scriptsize±2.55}} & 
21.87\textcolor{gray}{\text{\scriptsize±2.15}} & 
25.02\textcolor{gray}{\text{\scriptsize±1.59}} & 
21.62\textcolor{gray}{\text{\scriptsize±2.15}} &
14.06\textcolor{gray}{\text{\scriptsize±0.18}} & 
15.14\textcolor{gray}{\text{\scriptsize±0.19}} & 
17.44\textcolor{gray}{\text{\scriptsize±0.24}} & 
15.32\textcolor{gray}{\text{\scriptsize±0.20}}& 
10.71\textcolor{gray}{\text{\scriptsize±0.05}} & 
10.74\textcolor{gray}{\text{\scriptsize±0.09}} & 
10.76\textcolor{gray}{\text{\scriptsize±0.10}} & 
10.73\textcolor{gray}{\text{\scriptsize±0.08}} \\

& & RMSE & $30.11\textcolor{gray}{\text{\scriptsize±3.81}}$ & $35.21\textcolor{gray}{\text{\scriptsize±3.31}}$ & $40.26\textcolor{gray}{\text{\scriptsize±2.62}}$ & $34.58\textcolor{gray}{\text{\scriptsize±3.33}}$
& $21.86\textcolor{gray}{\text{\scriptsize±0.23}}$ & $23.97\textcolor{gray}{\text{\scriptsize±0.27}}$ & $28.10\textcolor{gray}{\text{\scriptsize±0.36}}$ & $24.24\textcolor{gray}{\text{\scriptsize±0.27}}$
& $10.86\textcolor{gray}{\text{\scriptsize±0.06}}$ & $10.98\textcolor{gray}{\text{\scriptsize±0.15}}$ & $11.06\textcolor{gray}{\text{\scriptsize±0.15}}$ & $10.95\textcolor{gray}{\text{\scriptsize±0.11}}$ \\

& & MAPE (\%) & $22.88\textcolor{gray}{\text{\scriptsize±2.18}}$ & $27.04\textcolor{gray}{\text{\scriptsize±1.59}}$ & $32.01\textcolor{gray}{\text{\scriptsize±0.95}}$ & $26.86\textcolor{gray}{\text{\scriptsize±1.63}}$
& $29.03\textcolor{gray}{\text{\scriptsize±2.96}}$ & $30.01\textcolor{gray}{\text{\scriptsize±2.80}}$ & $32.28\textcolor{gray}{\text{\scriptsize±2.48}}$ & $30.14\textcolor{gray}{\text{\scriptsize±2.65}}$
& $175.12\textcolor{gray}{\text{\scriptsize±5.41}}$ & $177.49\textcolor{gray}{\text{\scriptsize±8.28}}$ & $178.50\textcolor{gray}{\text{\scriptsize±8.52}}$ & $176.83\textcolor{gray}{\text{\scriptsize±7.31}}$ \\

\cmidrule{2-15}

& \multirow{3}{*}{\textit{Retrain}}
& MAE & $19.16\textcolor{gray}{\text{\scriptsize±1.42}}$ & $21.90\textcolor{gray}{\text{\scriptsize±1.21}}$ & $25.02\textcolor{gray}{\text{\scriptsize±0.97}}$ & $21.73\textcolor{gray}{\text{\scriptsize±1.23}}$
& $12.93\textcolor{gray}{\text{\scriptsize±0.08}}$ & $14.04\textcolor{gray}{\text{\scriptsize±0.05}}$ & $16.35\textcolor{gray}{\text{\scriptsize±0.05}}$ & $14.22\textcolor{gray}{\text{\scriptsize±0.05}}$
& $5.50\textcolor{gray}{\text{\scriptsize±0.05}}$ & $5.42\textcolor{gray}{\text{\scriptsize±0.17}}$ & $5.42\textcolor{gray}{\text{\scriptsize±0.17}}$ & $5.45\textcolor{gray}{\text{\scriptsize±0.12}}$ \\

& & RMSE & $30.13\textcolor{gray}{\text{\scriptsize±1.95}}$ & $34.88\textcolor{gray}{\text{\scriptsize±1.60}}$ & $39.89\textcolor{gray}{\text{\scriptsize±1.30}}$ & $34.42\textcolor{gray}{\text{\scriptsize±1.67}}$
& $20.86\textcolor{gray}{\text{\scriptsize±0.09}}$ & $22.94\textcolor{gray}{\text{\scriptsize±0.06}}$ & $26.98\textcolor{gray}{\text{\scriptsize±0.11}}$ & $23.19\textcolor{gray}{\text{\scriptsize±0.08}}$
& $5.66\textcolor{gray}{\text{\scriptsize±0.05}}$ & $5.64\textcolor{gray}{\text{\scriptsize±0.13}}$ & $5.74\textcolor{gray}{\text{\scriptsize±0.15}}$ & $5.67\textcolor{gray}{\text{\scriptsize±0.09}}$ \\

& & MAPE (\%) & $24.98\textcolor{gray}{\text{\scriptsize±2.74}}$ & $28.69\textcolor{gray}{\text{\scriptsize±2.32}}$ & $33.16\textcolor{gray}{\text{\scriptsize±1.71}}$ & $28.53\textcolor{gray}{\text{\scriptsize±2.27}}$
& $18.75\textcolor{gray}{\text{\scriptsize±0.51}}$ & $20.12\textcolor{gray}{\text{\scriptsize±0.39}}$ & $23.39\textcolor{gray}{\text{\scriptsize±0.39}}$ & $20.44\textcolor{gray}{\text{\scriptsize±0.42}}$
& $52.22\textcolor{gray}{\text{\scriptsize±0.18}}$ & $52.72\textcolor{gray}{\text{\scriptsize±0.45}}$ & $53.82\textcolor{gray}{\text{\scriptsize±0.55}}$ & $52.80\textcolor{gray}{\text{\scriptsize±0.24}}$ \\

\hline

\multirow{21}{*}{\begin{tabular}[c]{@{}c@{}}\textbf{Architecture-}\\\textbf{based}\end{tabular}}
& \multirow{3}{*}{\textit{TrafficStream}}
& MAE & $18.54\textcolor{gray}{\text{\scriptsize±0.53}}$ & $21.49\textcolor{gray}{\text{\scriptsize±0.45}}$ & $24.81\textcolor{gray}{\text{\scriptsize±0.41}}$ & $21.29\textcolor{gray}{\text{\scriptsize±0.47}}$
& $12.94\textcolor{gray}{\text{\scriptsize±0.03}}$ & $14.07\textcolor{gray}{\text{\scriptsize±0.06}}$ & $16.34\textcolor{gray}{\text{\scriptsize±0.08}}$ & $14.23\textcolor{gray}{\text{\scriptsize±0.05}}$
& $5.50\textcolor{gray}{\text{\scriptsize±0.05}}$ & $5.40\textcolor{gray}{\text{\scriptsize±0.19}}$ & $5.40\textcolor{gray}{\text{\scriptsize±0.20}}$ & $5.44\textcolor{gray}{\text{\scriptsize±0.14}}$ \\

& & RMSE & $28.65\textcolor{gray}{\text{\scriptsize±0.70}}$ & $33.98\textcolor{gray}{\text{\scriptsize±0.59}}$ & $39.40\textcolor{gray}{\text{\scriptsize±0.54}}$ & $33.37\textcolor{gray}{\text{\scriptsize±0.63}}$
& $20.83\textcolor{gray}{\text{\scriptsize±0.04}}$ & $22.92\textcolor{gray}{\text{\scriptsize±0.08}}$ & $26.86\textcolor{gray}{\text{\scriptsize±0.11}}$ & $23.15\textcolor{gray}{\text{\scriptsize±0.07}}$
& $5.65\textcolor{gray}{\text{\scriptsize±0.06}}$ & $5.62\textcolor{gray}{\text{\scriptsize±0.14}}$ & $5.70\textcolor{gray}{\text{\scriptsize±0.15}}$ & $5.65\textcolor{gray}{\text{\scriptsize±0.10}}$ \\

& & MAPE (\%) & $23.87\textcolor{gray}{\text{\scriptsize±0.21}}$ & $27.80\textcolor{gray}{\text{\scriptsize±0.41}}$ & $32.81\textcolor{gray}{\text{\scriptsize±0.68}}$ & $27.75\textcolor{gray}{\text{\scriptsize±0.42}}$
& $17.89\textcolor{gray}{\text{\scriptsize±0.70}}$ & $19.49\textcolor{gray}{\text{\scriptsize±0.73}}$ & $23.13\textcolor{gray}{\text{\scriptsize±0.73}}$ & $19.83\textcolor{gray}{\text{\scriptsize±0.70}}$
& $50.14\textcolor{gray}{\text{\scriptsize±1.24}}$ & $50.48\textcolor{gray}{\text{\scriptsize±1.65}}$ & $51.84\textcolor{gray}{\text{\scriptsize±1.62}}$ & $50.72\textcolor{gray}{\text{\scriptsize±1.47}}$ \\

\cmidrule{2-15}

& \multirow{3}{*}{\textit{ST-LoRA}}
& MAE & $18.54\textcolor{gray}{\text{\scriptsize±0.69}}$ & $21.45\textcolor{gray}{\text{\scriptsize±0.66}}$ & $24.65\textcolor{gray}{\text{\scriptsize±0.54}}$ & $21.22\textcolor{gray}{\text{\scriptsize±0.63}}$
& $12.76\textcolor{gray}{\text{\scriptsize±0.05}}$ & $13.88\textcolor{gray}{\text{\scriptsize±0.06}}$ & $16.10\textcolor{gray}{\text{\scriptsize±0.08}}$ & $14.03\textcolor{gray}{\text{\scriptsize±0.05}}$
& $5.44\textcolor{gray}{\text{\scriptsize±0.01}}$ & $5.34\textcolor{gray}{\text{\scriptsize±0.14}}$ & $5.34\textcolor{gray}{\text{\scriptsize±0.15}}$ & $5.38\textcolor{gray}{\text{\scriptsize±0.09}}$ \\

& & RMSE & $28.94\textcolor{gray}{\text{\scriptsize±1.16}}$ & $34.19\textcolor{gray}{\text{\scriptsize±1.12}}$ & $39.40\textcolor{gray}{\text{\scriptsize±0.97}}$ & $33.54\textcolor{gray}{\text{\scriptsize±1.09}}$
& $20.62\textcolor{gray}{\text{\scriptsize±0.08}}$ & $22.68\textcolor{gray}{\text{\scriptsize±0.11}}$ & $26.54\textcolor{gray}{\text{\scriptsize±0.14}}$ & $22.89\textcolor{gray}{\text{\scriptsize±0.09}}$
& $5.59\textcolor{gray}{\text{\scriptsize±0.00}}$ & $5.55\textcolor{gray}{\text{\scriptsize±0.12}}$ & $5.65\textcolor{gray}{\text{\scriptsize±0.13}}$ & $5.59\textcolor{gray}{\text{\scriptsize±0.08}}$ \\

& & MAPE (\%) & $23.04\textcolor{gray}{\text{\scriptsize±0.34}}$ & $26.98\textcolor{gray}{\text{\scriptsize±0.31}}$ & $31.90\textcolor{gray}{\text{\scriptsize±0.17}}$ & $26.89\textcolor{gray}{\text{\scriptsize±0.28}}$
& $17.15\textcolor{gray}{\text{\scriptsize±0.24}}$ & $18.59\textcolor{gray}{\text{\scriptsize±0.29}}$ & $21.97\textcolor{gray}{\text{\scriptsize±0.41}}$ & $18.91\textcolor{gray}{\text{\scriptsize±0.29}}$
& $52.60\textcolor{gray}{\text{\scriptsize±1.70}}$ & $53.08\textcolor{gray}{\text{\scriptsize±1.45}}$ & $54.70\textcolor{gray}{\text{\scriptsize±1.35}}$ & $53.34\textcolor{gray}{\text{\scriptsize±1.54}}$ \\

\cmidrule{2-15}

& \multirow{3}{*}{\textit{STKEC}}
& MAE & $18.87\textcolor{gray}{\text{\scriptsize±0.44}}$ & $21.74\textcolor{gray}{\text{\scriptsize±0.35}}$ & $24.94\textcolor{gray}{\text{\scriptsize±0.17}}$ & $21.52\textcolor{gray}{\text{\scriptsize±0.34}}$
& $12.96\textcolor{gray}{\text{\scriptsize±0.13}}$ & $14.07\textcolor{gray}{\text{\scriptsize±0.11}}$ & $16.33\textcolor{gray}{\text{\scriptsize±0.07}}$ & $14.24\textcolor{gray}{\text{\scriptsize±0.11}}$
& $5.56\textcolor{gray}{\text{\scriptsize±0.12}}$ & $5.57\textcolor{gray}{\text{\scriptsize±0.07}}$ & $5.55\textcolor{gray}{\text{\scriptsize±0.08}}$ & $5.55\textcolor{gray}{\text{\scriptsize±0.09}}$ \\

& & RMSE & $29.92\textcolor{gray}{\text{\scriptsize±0.58}}$ & $34.80\textcolor{gray}{\text{\scriptsize±0.46}}$ & $39.81\textcolor{gray}{\text{\scriptsize±0.22}}$ & $34.25\textcolor{gray}{\text{\scriptsize±0.41}}$
& $20.85\textcolor{gray}{\text{\scriptsize±0.15}}$ & $22.89\textcolor{gray}{\text{\scriptsize±0.12}}$ & $26.80\textcolor{gray}{\text{\scriptsize±0.09}}$ & $23.13\textcolor{gray}{\text{\scriptsize±0.12}}$
& $5.73\textcolor{gray}{\text{\scriptsize±0.10}}$ & $5.78\textcolor{gray}{\text{\scriptsize±0.06}}$ & $5.87\textcolor{gray}{\text{\scriptsize±0.06}}$ & $5.78\textcolor{gray}{\text{\scriptsize±0.08}}$ \\

& & MAPE (\%) & $24.12\textcolor{gray}{\text{\scriptsize±0.24}}$ & $27.91\textcolor{gray}{\text{\scriptsize±0.24}}$ & $32.70\textcolor{gray}{\text{\scriptsize±0.14}}$ & $27.83\textcolor{gray}{\text{\scriptsize±0.19}}$
& $18.73\textcolor{gray}{\text{\scriptsize±0.46}}$ & $20.07\textcolor{gray}{\text{\scriptsize±0.43}}$ & $23.30\textcolor{gray}{\text{\scriptsize±0.31}}$ & $20.39\textcolor{gray}{\text{\scriptsize±0.33}}$
& $53.13\textcolor{gray}{\text{\scriptsize±0.16}}$ & $53.74\textcolor{gray}{\text{\scriptsize±0.31}}$ & $55.01\textcolor{gray}{\text{\scriptsize±0.47}}$ & $53.81\textcolor{gray}{\text{\scriptsize±0.30}}$ \\

\cmidrule{2-15}

& \multirow{3}{*}{\textit{EAC}}
& MAE & $18.59\textcolor{gray}{\text{\scriptsize±0.38}}$ & $21.44\textcolor{gray}{\text{\scriptsize±0.30}}$ & $24.63\textcolor{gray}{\text{\scriptsize±0.24}}$ & $21.23\textcolor{gray}{\text{\scriptsize±0.31}}$
& $12.95\textcolor{gray}{\text{\scriptsize±0.31}}$ & $13.85\textcolor{gray}{\text{\scriptsize±0.42}}$ & $15.63\textcolor{gray}{\text{\scriptsize±0.72}}$ & $13.97\textcolor{gray}{\text{\scriptsize±0.46}}$
& $5.20\textcolor{gray}{\text{\scriptsize±0.21}}$ & $5.25\textcolor{gray}{\text{\scriptsize±0.23}}$ & $5.29\textcolor{gray}{\text{\scriptsize±0.19}}$ & $5.24\textcolor{gray}{\text{\scriptsize±0.20}}$ \\

& & RMSE & $28.39\textcolor{gray}{\text{\scriptsize±0.37}}$ & $33.60\textcolor{gray}{\text{\scriptsize±0.24}}$ & $38.85\textcolor{gray}{\text{\scriptsize±0.16}}$ & $32.98\textcolor{gray}{\text{\scriptsize±0.25}}$
& $20.65\textcolor{gray}{\text{\scriptsize±0.43}}$ & $22.33\textcolor{gray}{\text{\scriptsize±0.62}}$ & $25.40\textcolor{gray}{\text{\scriptsize±1.16}}$ & $22.48\textcolor{gray}{\text{\scriptsize±0.69}}$
& $5.45\textcolor{gray}{\text{\scriptsize±0.18}}$ & $5.58\textcolor{gray}{\text{\scriptsize±0.18}}$ & $5.72\textcolor{gray}{\text{\scriptsize±0.13}}$ & $5.57\textcolor{gray}{\text{\scriptsize±0.16}}$ \\

& & MAPE (\%) & $23.47\textcolor{gray}{\text{\scriptsize±0.47}}$ & $27.24\textcolor{gray}{\text{\scriptsize±0.43}}$ & $32.07\textcolor{gray}{\text{\scriptsize±0.45}}$ & $27.19\textcolor{gray}{\text{\scriptsize±0.45}}$
& $19.47\textcolor{gray}{\text{\scriptsize±2.29}}$ & $20.39\textcolor{gray}{\text{\scriptsize±2.31}}$ & $22.50\textcolor{gray}{\text{\scriptsize±2.24}}$ & $20.59\textcolor{gray}{\text{\scriptsize±2.25}}$
& $56.19\textcolor{gray}{\text{\scriptsize±5.64}}$ & $57.66\textcolor{gray}{\text{\scriptsize±5.09}}$ & $58.56\textcolor{gray}{\text{\scriptsize±5.34}}$ & $57.38\textcolor{gray}{\text{\scriptsize±5.31}}$ \\

\cmidrule{2-15}

& \multirow{3}{*}{\textit{ST-Adapter}}
& MAE & $19.11\textcolor{gray}{\text{\scriptsize±0.44}}$ & $21.94\textcolor{gray}{\text{\scriptsize±0.61}}$ & $25.27\textcolor{gray}{\text{\scriptsize±0.77}}$ & $21.77\textcolor{gray}{\text{\scriptsize±0.59}}$
& $12.71\textcolor{gray}{\text{\scriptsize±0.05}}$ & $13.80\textcolor{gray}{\text{\scriptsize±0.05}}$ & $15.97\textcolor{gray}{\text{\scriptsize±0.09}}$ & $13.95\textcolor{gray}{\text{\scriptsize±0.06}}$
& $5.47\textcolor{gray}{\text{\scriptsize±0.06}}$ & $5.37\textcolor{gray}{\text{\scriptsize±0.12}}$ & $5.35\textcolor{gray}{\text{\scriptsize±0.09}}$ & $5.39\textcolor{gray}{\text{\scriptsize±0.09}}$ \\

& & RMSE & $29.14\textcolor{gray}{\text{\scriptsize±0.61}}$ & $34.37\textcolor{gray}{\text{\scriptsize±0.84}}$ & $39.86\textcolor{gray}{\text{\scriptsize±1.03}}$ & $33.81\textcolor{gray}{\text{\scriptsize±0.81}}$
& $20.55\textcolor{gray}{\text{\scriptsize±0.06}}$ & $22.55\textcolor{gray}{\text{\scriptsize±0.07}}$ & $26.31\textcolor{gray}{\text{\scriptsize±0.17}}$ & $22.76\textcolor{gray}{\text{\scriptsize±0.08}}$
& $5.63\textcolor{gray}{\text{\scriptsize±0.06}}$ & $5.59\textcolor{gray}{\text{\scriptsize±0.12}}$ & $5.68\textcolor{gray}{\text{\scriptsize±0.08}}$ & $5.62\textcolor{gray}{\text{\scriptsize±0.10}}$ \\

& & MAPE (\%) & $23.65\textcolor{gray}{\text{\scriptsize±0.28}}$ & $27.27\textcolor{gray}{\text{\scriptsize±0.29}}$ & $31.90\textcolor{gray}{\text{\scriptsize±0.36}}$ & $27.22\textcolor{gray}{\text{\scriptsize±0.26}}$
& $17.58\textcolor{gray}{\text{\scriptsize±0.45}}$ & $18.78\textcolor{gray}{\text{\scriptsize±0.31}}$ & $21.71\textcolor{gray}{\text{\scriptsize±0.34}}$ & $19.10\textcolor{gray}{\text{\scriptsize±0.35}}$
& $51.17\textcolor{gray}{\text{\scriptsize±2.42}}$ & $51.59\textcolor{gray}{\text{\scriptsize±2.17}}$ & $52.87\textcolor{gray}{\text{\scriptsize±2.25}}$ & $51.78\textcolor{gray}{\text{\scriptsize±2.20}}$ \\

\cmidrule{2-15}

& \multirow{3}{*}{\textit{GraphPro}}
& MAE & $18.92\textcolor{gray}{\text{\scriptsize±1.13}}$ & $21.68\textcolor{gray}{\text{\scriptsize±0.86}}$ & $24.96\textcolor{gray}{\text{\scriptsize±0.71}}$ & $21.53\textcolor{gray}{\text{\scriptsize±0.92}}$
& $12.77\textcolor{gray}{\text{\scriptsize±0.07}}$ & $13.91\textcolor{gray}{\text{\scriptsize±0.09}}$ & $16.20\textcolor{gray}{\text{\scriptsize±0.15}}$ & $14.08\textcolor{gray}{\text{\scriptsize±0.10}}$
& $5.68\textcolor{gray}{\text{\scriptsize±0.14}}$ & $5.50\textcolor{gray}{\text{\scriptsize±0.06}}$ & $5.48\textcolor{gray}{\text{\scriptsize±0.06}}$ & $5.55\textcolor{gray}{\text{\scriptsize±0.06}}$ \\

& & RMSE & $29.68\textcolor{gray}{\text{\scriptsize±1.42}}$ & $34.53\textcolor{gray}{\text{\scriptsize±0.98}}$ & $39.73\textcolor{gray}{\text{\scriptsize±0.74}}$ & $34.04\textcolor{gray}{\text{\scriptsize±1.09}}$
& $20.63\textcolor{gray}{\text{\scriptsize±0.09}}$ & $22.74\textcolor{gray}{\text{\scriptsize±0.13}}$ & $26.68\textcolor{gray}{\text{\scriptsize±0.20}}$ & $22.96\textcolor{gray}{\text{\scriptsize±0.13}}$
& $5.83\textcolor{gray}{\text{\scriptsize±0.14}}$ & $5.72\textcolor{gray}{\text{\scriptsize±0.04}}$ & $5.80\textcolor{gray}{\text{\scriptsize±0.02}}$ & $5.77\textcolor{gray}{\text{\scriptsize±0.06}}$ \\

& & MAPE (\%) & $23.56\textcolor{gray}{\text{\scriptsize±1.34}}$ & $27.44\textcolor{gray}{\text{\scriptsize±1.06}}$ & $32.36\textcolor{gray}{\text{\scriptsize±0.78}}$ & $27.36\textcolor{gray}{\text{\scriptsize±1.07}}$
& $17.63\textcolor{gray}{\text{\scriptsize±1.08}}$ & $19.23\textcolor{gray}{\text{\scriptsize±1.14}}$ & $23.04\textcolor{gray}{\text{\scriptsize±1.16}}$ & $19.63\textcolor{gray}{\text{\scriptsize±1.12}}$
& $53.70\textcolor{gray}{\text{\scriptsize±5.22}}$ & $53.67\textcolor{gray}{\text{\scriptsize±5.32}}$ & $55.17\textcolor{gray}{\text{\scriptsize±5.23}}$ & $54.04\textcolor{gray}{\text{\scriptsize±5.34}}$ \\

\cmidrule{2-15}

& \multirow{3}{*}{\textit{PECPM}}
& MAE & $18.44\textcolor{gray}{\text{\scriptsize±0.18}}$ & $21.36\textcolor{gray}{\text{\scriptsize±0.14}}$ & $24.66\textcolor{gray}{\text{\scriptsize±0.10}}$ & $21.17\textcolor{gray}{\text{\scriptsize±0.15}}$
& $12.75\textcolor{gray}{\text{\scriptsize±0.02}}$ & $13.88\textcolor{gray}{\text{\scriptsize±0.03}}$ & $16.11\textcolor{gray}{\text{\scriptsize±0.06}}$ & $14.03\textcolor{gray}{\text{\scriptsize±0.03}}$
& $5.46\textcolor{gray}{\text{\scriptsize±0.04}}$ & $5.46\textcolor{gray}{\text{\scriptsize±0.04}}$ & $5.48\textcolor{gray}{\text{\scriptsize±0.02}}$ & $5.47\textcolor{gray}{\text{\scriptsize±0.03}}$ \\

& & RMSE & $28.74\textcolor{gray}{\text{\scriptsize±0.22}}$ & $33.89\textcolor{gray}{\text{\scriptsize±0.13}}$ & $39.16\textcolor{gray}{\text{\scriptsize±0.09}}$ & $33.33\textcolor{gray}{\text{\scriptsize±0.16}}$
& $20.61\textcolor{gray}{\text{\scriptsize±0.07}}$ & $22.70\textcolor{gray}{\text{\scriptsize±0.09}}$ & $26.56\textcolor{gray}{\text{\scriptsize±0.15}}$ & $22.91\textcolor{gray}{\text{\scriptsize±0.09}}$
& $5.59\textcolor{gray}{\text{\scriptsize±0.03}}$ & $5.63\textcolor{gray}{\text{\scriptsize±0.03}}$ & $5.74\textcolor{gray}{\text{\scriptsize±0.02}}$ & $5.65\textcolor{gray}{\text{\scriptsize±0.03}}$ \\

& & MAPE (\%) & $23.85\textcolor{gray}{\text{\scriptsize±0.85}}$ & $27.73\textcolor{gray}{\text{\scriptsize±0.80}}$ & $32.61\textcolor{gray}{\text{\scriptsize±0.71}}$ & $27.65\textcolor{gray}{\text{\scriptsize±0.79}}$
& $17.63\textcolor{gray}{\text{\scriptsize±0.77}}$ & $19.24\textcolor{gray}{\text{\scriptsize±0.80}}$ & $22.92\textcolor{gray}{\text{\scriptsize±0.85}}$ & $19.60\textcolor{gray}{\text{\scriptsize±0.80}}$
& $53.18\textcolor{gray}{\text{\scriptsize±2.14}}$ & $53.81\textcolor{gray}{\text{\scriptsize±1.93}}$ & $55.31\textcolor{gray}{\text{\scriptsize±1.98}}$ & $54.01\textcolor{gray}{\text{\scriptsize±2.04}}$ \\
\hline

\multirow{3}{*}{\begin{tabular}[c]{@{}c@{}}\textbf{Regularization-}\\\textbf{based}\end{tabular}}
& \multirow{3}{*}{\textit{EWC}}
& MAE & $18.21\textcolor{gray}{\text{\scriptsize±0.44}}$ & $21.19\textcolor{gray}{\text{\scriptsize±0.37}}$ & $24.59\textcolor{gray}{\text{\scriptsize±0.32}}$ & $21.00\textcolor{gray}{\text{\scriptsize±0.38}}$
& $13.05\textcolor{gray}{\text{\scriptsize±0.12}}$ & $14.26\textcolor{gray}{\text{\scriptsize±0.11}}$ & $16.72\textcolor{gray}{\text{\scriptsize±0.10}}$ & $14.45\textcolor{gray}{\text{\scriptsize±0.11}}$
& $5.47\textcolor{gray}{\text{\scriptsize±0.09}}$ & $5.37\textcolor{gray}{\text{\scriptsize±0.14}}$ & $5.37\textcolor{gray}{\text{\scriptsize±0.16}}$ & $5.40\textcolor{gray}{\text{\scriptsize±0.10}}$ \\

& & RMSE & $28.50\textcolor{gray}{\text{\scriptsize±0.39}}$ & $33.85\textcolor{gray}{\text{\scriptsize±0.39}}$ & $39.38\textcolor{gray}{\text{\scriptsize±0.40}}$ & $33.26\textcolor{gray}{\text{\scriptsize±0.39}}$
& $21.14\textcolor{gray}{\text{\scriptsize±0.18}}$ & $23.42\textcolor{gray}{\text{\scriptsize±0.19}}$ & $27.75\textcolor{gray}{\text{\scriptsize±0.22}}$ & $23.69\textcolor{gray}{\text{\scriptsize±0.20}}$
& $5.62\textcolor{gray}{\text{\scriptsize±0.10}}$ & $5.57\textcolor{gray}{\text{\scriptsize±0.11}}$ & $5.67\textcolor{gray}{\text{\scriptsize±0.12}}$ & $5.61\textcolor{gray}{\text{\scriptsize±0.08}}$ \\

& & MAPE (\%) & $23.04\textcolor{gray}{\text{\scriptsize±0.77}}$ & $27.07\textcolor{gray}{\text{\scriptsize±0.55}}$ & $32.18\textcolor{gray}{\text{\scriptsize±0.43}}$ & $27.01\textcolor{gray}{\text{\scriptsize±0.59}}$
& $17.32\textcolor{gray}{\text{\scriptsize±0.34}}$ & $18.81\textcolor{gray}{\text{\scriptsize±0.49}}$ & $22.19\textcolor{gray}{\text{\scriptsize±0.71}}$ & $19.13\textcolor{gray}{\text{\scriptsize±0.48}}$
& $51.78\textcolor{gray}{\text{\scriptsize±0.53}}$ & $52.05\textcolor{gray}{\text{\scriptsize±0.94}}$ & $53.43\textcolor{gray}{\text{\scriptsize±0.92}}$ & $52.32\textcolor{gray}{\text{\scriptsize±0.78}}$ \\

\hline

\multirow{3}{*}{\begin{tabular}[c]{@{}c@{}}\textbf{Replay-}\\\textbf{based}\end{tabular}}
& \multirow{3}{*}{\textit{Replay}}
& MAE & $\textcolor{red}{\mathbf{17.95}}\textcolor{gray}{\text{\scriptsize±0.27}}$ & $21.06\textcolor{gray}{\text{\scriptsize±0.32}}$ & $24.47\textcolor{gray}{\text{\scriptsize±0.35}}$ & $20.82\textcolor{gray}{\text{\scriptsize±0.31}}$
& $12.96\textcolor{gray}{\text{\scriptsize±0.04}}$ & $14.09\textcolor{gray}{\text{\scriptsize±0.04}}$ & $16.38\textcolor{gray}{\text{\scriptsize±0.07}}$ & $14.27\textcolor{gray}{\text{\scriptsize±0.05}}$
& $5.52\textcolor{gray}{\text{\scriptsize±0.07}}$ & $5.42\textcolor{gray}{\text{\scriptsize±0.21}}$ & $5.42\textcolor{gray}{\text{\scriptsize±0.21}}$ & $5.46\textcolor{gray}{\text{\scriptsize±0.16}}$ \\

& & RMSE & $28.14\textcolor{gray}{\text{\scriptsize±0.46}}$ & $33.57\textcolor{gray}{\text{\scriptsize±0.45}}$ & $39.00\textcolor{gray}{\text{\scriptsize±0.45}}$ & $32.92\textcolor{gray}{\text{\scriptsize±0.45}}$
& $20.84\textcolor{gray}{\text{\scriptsize±0.06}}$ & $22.93\textcolor{gray}{\text{\scriptsize±0.06}}$ & $26.94\textcolor{gray}{\text{\scriptsize±0.10}}$ & $23.19\textcolor{gray}{\text{\scriptsize±0.07}}$
& $5.67\textcolor{gray}{\text{\scriptsize±0.06}}$ & $5.63\textcolor{gray}{\text{\scriptsize±0.18}}$ & $5.72\textcolor{gray}{\text{\scriptsize±0.18}}$ & $5.67\textcolor{gray}{\text{\scriptsize±0.14}}$ \\

& & MAPE (\%) & $\textcolor{red}{\mathbf{22.61}}\textcolor{gray}{\text{\scriptsize±0.53}}$ & $26.91\textcolor{gray}{\text{\scriptsize±0.56}}$ & $32.19\textcolor{gray}{\text{\scriptsize±0.68}}$ & $26.77\textcolor{gray}{\text{\scriptsize±0.56}}$
& $18.19\textcolor{gray}{\text{\scriptsize±0.08}}$ & $19.52\textcolor{gray}{\text{\scriptsize±0.11}}$ & $22.66\textcolor{gray}{\text{\scriptsize±0.26}}$ & $19.82\textcolor{gray}{\text{\scriptsize±0.13}}$
& $52.54\textcolor{gray}{\text{\scriptsize±1.52}}$ & $52.95\textcolor{gray}{\text{\scriptsize±1.58}}$ & $54.38\textcolor{gray}{\text{\scriptsize±1.79}}$ & $53.19\textcolor{gray}{\text{\scriptsize±1.62}}$ \\

\hline

\multirow{3}{*}{\begin{tabular}[c]{@{}c@{}}\textbf{Retrieval-}\\\textbf{based}\end{tabular}}
& \multirow{3}{*}{\textit{\methodname}}
& MAE & $18.04\textcolor{gray}{\text{\scriptsize±0.52}}$ & $\textcolor{red}{\mathbf{20.49}}\textcolor{gray}{\text{\scriptsize±0.41}}$ & $\textcolor{red}{\mathbf{23.45}}\textcolor{gray}{\text{\scriptsize±0.33}}$ & $\textcolor{red}{\mathbf{20.39}}\textcolor{gray}{\text{\scriptsize±0.43}}$
& $\textcolor{red}{\mathbf{12.19}}\textcolor{gray}{\text{\scriptsize±0.18}}$ & $\textcolor{red}{\mathbf{13.13}}\textcolor{gray}{\text{\scriptsize±0.15}}$ & $\textcolor{red}{\mathbf{15.20}}\textcolor{gray}{\text{\scriptsize±0.17}}$ & $\textcolor{red}{\mathbf{13.31}}\textcolor{gray}{\text{\scriptsize±0.17}}$
& $\textcolor{red}{\mathbf{4.83}}\textcolor{gray}{\text{\scriptsize±0.17}}$ & $\textcolor{red}{\mathbf{4.84}}\textcolor{gray}{\text{\scriptsize±0.18}}$ & $\textcolor{red}{\mathbf{4.88}}\textcolor{gray}{\text{\scriptsize±0.17}}$ & $\textcolor{red}{\mathbf{4.85}}\textcolor{gray}{\text{\scriptsize±0.18}}$ \\

& & RMSE & $\textcolor{red}{\mathbf{26.65}}\textcolor{gray}{\text{\scriptsize±0.45}}$ & $\textcolor{red}{\mathbf{30.87}}\textcolor{gray}{\text{\scriptsize±0.39}}$ & $\textcolor{red}{\mathbf{35.56}}\textcolor{gray}{\text{\scriptsize±0.34}}$ & $\textcolor{red}{\mathbf{30.55}}\textcolor{gray}{\text{\scriptsize±0.40}}$
& $\textcolor{red}{\mathbf{18.54}}\textcolor{gray}{\text{\scriptsize±0.25}}$ & $\textcolor{red}{\mathbf{20.18}}\textcolor{gray}{\text{\scriptsize±0.21}}$ & $\textcolor{red}{\mathbf{23.67}}\textcolor{gray}{\text{\scriptsize±0.26}}$ & $\textcolor{red}{\mathbf{20.46}}\textcolor{gray}{\text{\scriptsize±0.23}}$
& $\textcolor{red}{\mathbf{4.95}}\textcolor{gray}{\text{\scriptsize±0.18}}$ & $\textcolor{red}{\mathbf{5.01}}\textcolor{gray}{\text{\scriptsize±0.19}}$ & $\textcolor{red}{\mathbf{5.15}}\textcolor{gray}{\text{\scriptsize±0.17}}$ & $\textcolor{red}{\mathbf{5.03}}\textcolor{gray}{\text{\scriptsize±0.18}}$ \\

& & MAPE (\%) & $23.72\textcolor{gray}{\text{\scriptsize±0.59}}$ & $\textcolor{red}{\mathbf{26.78}}\textcolor{gray}{\text{\scriptsize±0.43}}$ & $\textcolor{red}{\mathbf{30.73}}\textcolor{gray}{\text{\scriptsize±0.37}}$ & $\textcolor{red}{\mathbf{26.74}}\textcolor{gray}{\text{\scriptsize±0.43}}$
& $\textcolor{red}{\mathbf{16.55}}\textcolor{gray}{\text{\scriptsize±0.55}}$ & $\textcolor{red}{\mathbf{17.70}}\textcolor{gray}{\text{\scriptsize±0.69}}$ & $\textcolor{red}{\mathbf{20.57}}\textcolor{gray}{\text{\scriptsize±0.97}}$ & $\textcolor{red}{\mathbf{18.02}}\textcolor{gray}{\text{\scriptsize±0.72}}$
& $\textcolor{red}{\mathbf{42.18}}\textcolor{gray}{\text{\scriptsize±1.64}}$ & $\textcolor{red}{\mathbf{43.02}}\textcolor{gray}{\text{\scriptsize±1.77}}$ & $\textcolor{red}{\mathbf{44.30}}\textcolor{gray}{\text{\scriptsize±1.55}}$ & $\textcolor{red}{\mathbf{43.11}}\textcolor{gray}{\text{\scriptsize±1.72}}$ \\

\hline
\end{tabular}
}}
\end{small}
\vspace{-1mm}
\end{table*}

\subsection{Case Study (RQ4)}
% \begin{wrapfigure}{l}{6cm}
% \vspace{-0.5cm}
% \includegraphics[width=\linewidth]{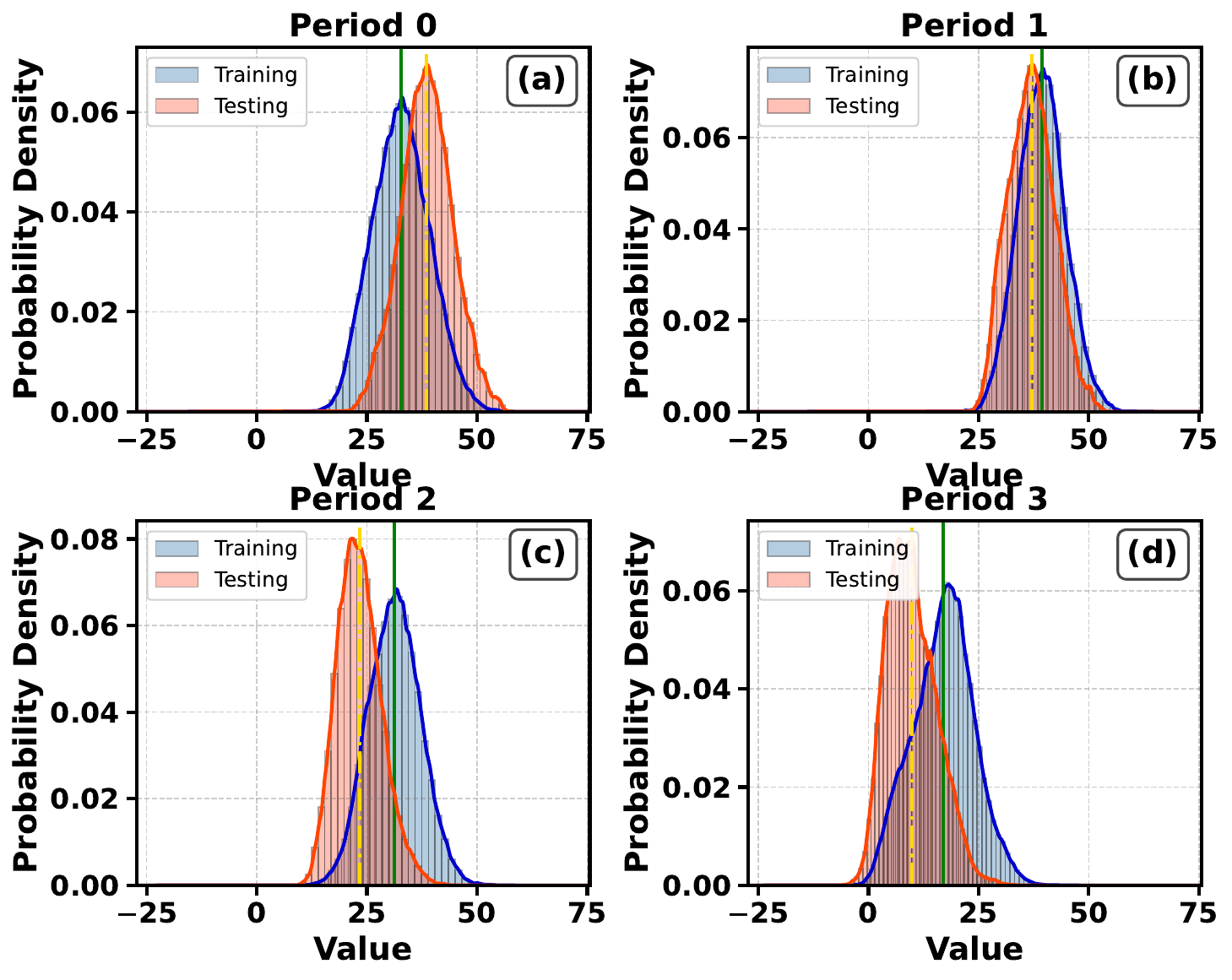}
% \vspace{-0.3cm}
% \caption{\footnotesize{Test set distributions for ENERGY-Wind for 4 periods 0 (a), 1 (b), 2 (c), and 3 (d).}}
% \vspace{-0.5cm}
% \label{fig:his}
% \end{wrapfigure}

% \begin{wrapfigure}{r}{6cm}
% \vspace{-0.5cm}
% \includegraphics[width=\linewidth]{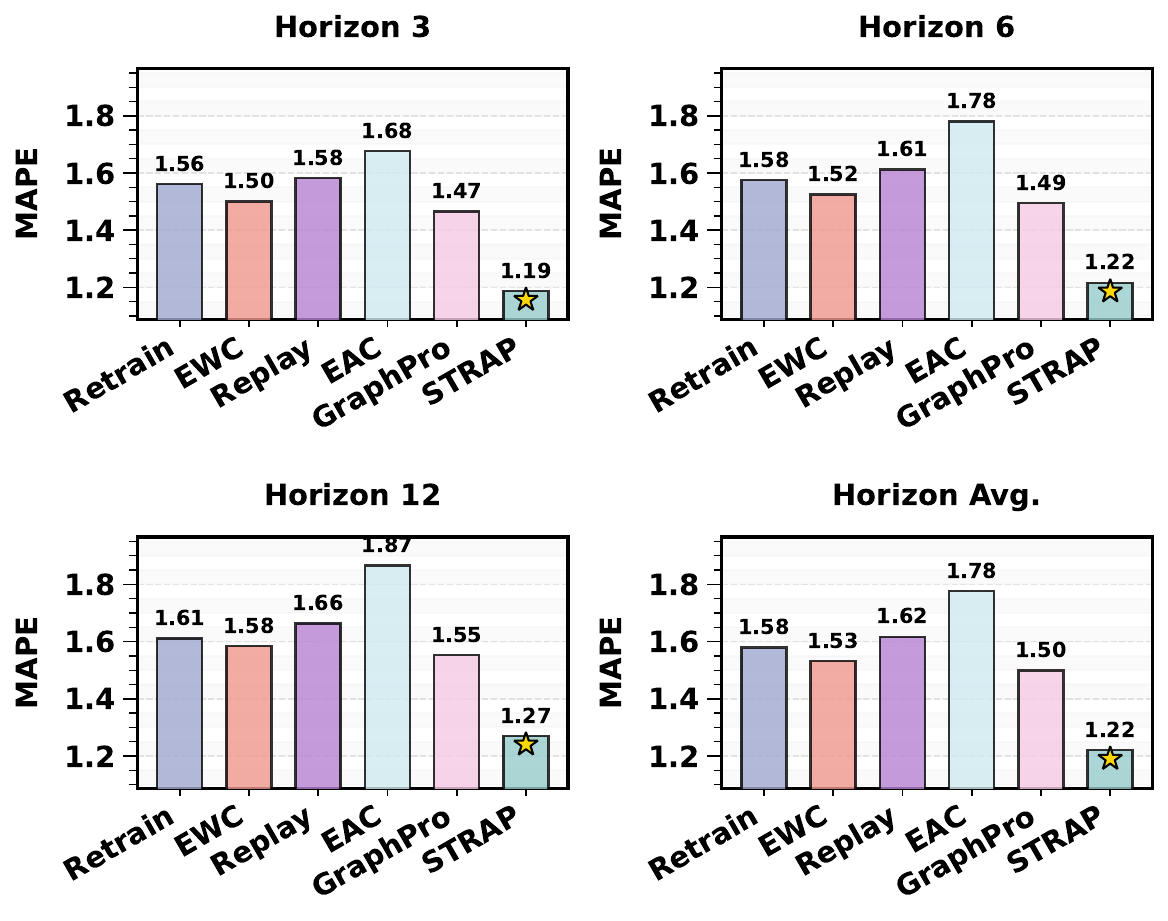}
% \vspace{-0.3cm}
% \caption{\footnotesize{Performance analysis (MAPE) of different horizons across various baselines.}}
% \vspace{-0.3cm}
% \label{fig:case}
% \end{wrapfigure}

\begin{figure}[htbp]
    \centering
    \begin{minipage}[t]{0.45\textwidth}
        \centering
        \includegraphics[width=\linewidth]{Graphs/test_y_histograms.pdf}
        \caption{\footnotesize{Test set distributions for ENERGY-Wind for 4 periods 0 (a), 1 (b), 2 (c), and 3 (d).}}
        \label{fig:his}
    \end{minipage}
    \hfill
    \begin{minipage}[t]{0.50\textwidth}
        \centering
        \includegraphics[width=\linewidth]{Graphs/mape_comparison_subplots.pdf}
        \caption{\footnotesize{Performance analysis (MAPE) of different horizons across various baselines.}}
        \label{fig:case}
    \end{minipage}
\vspace{-1.2em}
\end{figure}

To further investigate the effectiveness of our proposed \methodname method under significant distribution shifts, we conduct a detailed case study on the ENERGY-Wind dataset. As illustrated in Figure~\ref{fig:his}, the distributions of test set values vary considerably across the four periods, with period 3 (Figure 4(d)) exhibiting the most pronounced shift from the initial distribution (Figure 4(a)). This is evidenced by the substantial decrease in both mean and median values, representing a reduction of approximately 77\% and 76\% respectively.

Figure~\ref{fig:case} presents the performance comparison across different prediction horizons during period 3 on STGNN backbone (More experiments are presented in ~\ref{app.c2} in Appendix~\ref{sec:overall results}), which experiences the most severe distribution shift. \textbf{The results demonstrate that \methodname significantly outperforms all baseline models across all prediction horizons}, achieving the lowest MAPE of 1.19, 1.22, 1.27, and 1.22 for horizons 3, 6, 12, and on average, respectively. This remarkable performance can be attributed to \methodname's retrieval-based key-value mechanism, which selectively identifies and extracts the most information-rich patterns from historical knowledge while efficiently integrating them with current observations, thus achieving an optimal balance that enables robust prediction even under extreme distribution shifts.
%distribution 0-3, illustrate their gap to prove out of shift distributions, stating our method's best performance for ood sceniors.

\subsection{Few-shot Learning Performance Evaluation (RQ6)}

We evaluate STRAP under data-constrained conditions on the Energy-Wind dataset with 50\% and 30\% missing data scenarios (Table ~\ref{few_shot_performance}). STRAP consistently outperforms all baseline methods across both settings. Under severe constraints (50\% missing data), STRAP achieves 6.29 MAE, outperforming the best baseline EWC by 8.7\%. With moderate constraints (30\% missing data), STRAP maintains superiority with 5.77 MAE, surpassing Graphpro by 3.4\%. The method demonstrates exceptional relative accuracy with MAPE scores of 78.92\% and 67.42\% respectively, while maintaining consistent performance across different prediction horizons.

\begin{table*}[htbp]
\caption{Few-shot Performance Comparison on TGCN Backbone (Energy-Stream).}
\label{few_shot_performance}
\centering
\setlength{\tabcolsep}{0.8mm}
\renewcommand{\arraystretch}{1.1}
\begin{small}
\scalebox{0.67}{
\resizebox{1.5\linewidth}{!}{\begin{tabular}{ccccccccccccccc}
\hline
\rowcolor[gray]{0.95}
\multicolumn{3}{c}{\textbf{Data Availability}}&\multicolumn{4}{c}{\textit{\textbf{50\% Missing Data}}}&\multicolumn{4}{c}{\textit{\textbf{30\% Missing Data}}}&\multicolumn{4}{c}{\textit{\textbf{All Data}}} \\

\hline
\rowcolor[gray]{0.95}
\textbf{Category} & \textbf{Method} & \textbf{Metric} & 3 & 6 & 12 & \textbf{Avg.} & 3 & 6 & 12 & \textbf{Avg.} & 3 & 6 & 12 & \textbf{Avg.} \\
\hline

\multirow{6}{*}{\begin{tabular}[c]{@{}c@{}}\textbf{Back-}\\\textbf{bone}\end{tabular}}
& \multirow{3}{*}{\textit{Pretrain}}
& MAE & 7.16\textcolor{gray}{\text{\scriptsize±0.12}} & 
7.16\textcolor{gray}{\text{\scriptsize±0.11}} & 
7.15\textcolor{gray}{\text{\scriptsize±0.09}} & 
7.16\textcolor{gray}{\text{\scriptsize±0.11}} &
6.23\textcolor{gray}{\text{\scriptsize±0.08}} & 
6.17\textcolor{gray}{\text{\scriptsize±0.07}} & 
6.21\textcolor{gray}{\text{\scriptsize±0.06}} & 
6.18\textcolor{gray}{\text{\scriptsize±0.07}}& 
8.94\textcolor{gray}{\text{\scriptsize±0.15}} & 
8.92\textcolor{gray}{\text{\scriptsize±0.14}} & 
8.93\textcolor{gray}{\text{\scriptsize±0.13}} & 
8.93\textcolor{gray}{\text{\scriptsize±0.14}} \\

& & RMSE & $7.28\textcolor{gray}{\text{\scriptsize±0.13}}$ & $7.31\textcolor{gray}{\text{\scriptsize±0.12}}$ & $7.37\textcolor{gray}{\text{\scriptsize±0.10}}$ & $7.32\textcolor{gray}{\text{\scriptsize±0.12}}$
& $6.41\textcolor{gray}{\text{\scriptsize±0.09}}$ & $6.40\textcolor{gray}{\text{\scriptsize±0.08}}$ & $6.53\textcolor{gray}{\text{\scriptsize±0.07}}$ & $6.41\textcolor{gray}{\text{\scriptsize±0.08}}$
& $9.12\textcolor{gray}{\text{\scriptsize±0.16}}$ & $9.15\textcolor{gray}{\text{\scriptsize±0.15}}$ & $9.18\textcolor{gray}{\text{\scriptsize±0.14}}$ & $9.15\textcolor{gray}{\text{\scriptsize±0.15}}$ \\

& & MAPE (\%) & $92.44\textcolor{gray}{\text{\scriptsize±2.18}}$ & $93.38\textcolor{gray}{\text{\scriptsize±1.95}}$ & $94.65\textcolor{gray}{\text{\scriptsize±1.82}}$ & $93.36\textcolor{gray}{\text{\scriptsize±1.98}}$
& $75.08\textcolor{gray}{\text{\scriptsize±1.85}}$ & $75.47\textcolor{gray}{\text{\scriptsize±1.72}}$ & $76.77\textcolor{gray}{\text{\scriptsize±1.68}}$ & $75.60\textcolor{gray}{\text{\scriptsize±1.75}}$
& $118.25\textcolor{gray}{\text{\scriptsize±3.42}}$ & $119.15\textcolor{gray}{\text{\scriptsize±3.28}}$ & $120.38\textcolor{gray}{\text{\scriptsize±3.15}}$ & $119.26\textcolor{gray}{\text{\scriptsize±3.28}}$ \\

\cmidrule{2-15}

& \multirow{3}{*}{\textit{Retrain}}
& MAE & $6.91\textcolor{gray}{\text{\scriptsize±0.08}}$ & $6.89\textcolor{gray}{\text{\scriptsize±0.07}}$ & $6.86\textcolor{gray}{\text{\scriptsize±0.06}}$ & $6.89\textcolor{gray}{\text{\scriptsize±0.07}}$
& $6.35\textcolor{gray}{\text{\scriptsize±0.05}}$ & $6.28\textcolor{gray}{\text{\scriptsize±0.04}}$ & $6.27\textcolor{gray}{\text{\scriptsize±0.03}}$ & $6.27\textcolor{gray}{\text{\scriptsize±0.04}}$
& $5.68\textcolor{gray}{\text{\scriptsize±0.12}}$ & $5.50\textcolor{gray}{\text{\scriptsize±0.08}}$ & $5.48\textcolor{gray}{\text{\scriptsize±0.06}}$ & $5.55\textcolor{gray}{\text{\scriptsize±0.09}}$ \\

& & RMSE & $7.05\textcolor{gray}{\text{\scriptsize±0.09}}$ & $7.06\textcolor{gray}{\text{\scriptsize±0.08}}$ & $7.10\textcolor{gray}{\text{\scriptsize±0.07}}$ & $7.06\textcolor{gray}{\text{\scriptsize±0.08}}$
& $6.55\textcolor{gray}{\text{\scriptsize±0.06}}$ & $6.52\textcolor{gray}{\text{\scriptsize±0.05}}$ & $6.61\textcolor{gray}{\text{\scriptsize±0.04}}$ & $6.53\textcolor{gray}{\text{\scriptsize±0.05}}$
& $5.83\textcolor{gray}{\text{\scriptsize±0.14}}$ & $5.72\textcolor{gray}{\text{\scriptsize±0.04}}$ & $5.80\textcolor{gray}{\text{\scriptsize±0.02}}$ & $5.77\textcolor{gray}{\text{\scriptsize±0.06}}$ \\

& & MAPE (\%) & $89.03\textcolor{gray}{\text{\scriptsize±1.95}}$ & $89.91\textcolor{gray}{\text{\scriptsize±1.82}}$ & $90.96\textcolor{gray}{\text{\scriptsize±1.75}}$ & $89.78\textcolor{gray}{\text{\scriptsize±1.84}}$
& $70.70\textcolor{gray}{\text{\scriptsize±1.25}}$ & $71.28\textcolor{gray}{\text{\scriptsize±1.18}}$ & $72.51\textcolor{gray}{\text{\scriptsize±1.12}}$ & $71.32\textcolor{gray}{\text{\scriptsize±1.18}}$
& $53.70\textcolor{gray}{\text{\scriptsize±2.15}}$ & $53.67\textcolor{gray}{\text{\scriptsize±1.98}}$ & $55.17\textcolor{gray}{\text{\scriptsize±1.85}}$ & $54.04\textcolor{gray}{\text{\scriptsize±1.99}}$ \\

\hline

\multirow{10}{*}{\begin{tabular}[c]{@{}c@{}}\textbf{Architecture-}\\\textbf{based}\end{tabular}}
& \multirow{3}{*}{\textit{Graphpro}}
& MAE & $6.94\textcolor{gray}{\text{\scriptsize±0.15}}$ & $6.92\textcolor{gray}{\text{\scriptsize±0.12}}$ & $6.93\textcolor{gray}{\text{\scriptsize±0.11}}$ & $6.93\textcolor{gray}{\text{\scriptsize±0.13}}$
& $6.07\textcolor{gray}{\text{\scriptsize±0.08}}$ & $5.96\textcolor{gray}{\text{\scriptsize±0.06}}$ & $6.00\textcolor{gray}{\text{\scriptsize±0.05}}$ & $5.97\textcolor{gray}{\text{\scriptsize±0.06}}$
& $5.68\textcolor{gray}{\text{\scriptsize±0.14}}$ & $5.50\textcolor{gray}{\text{\scriptsize±0.06}}$ & $5.48\textcolor{gray}{\text{\scriptsize±0.06}}$ & $5.55\textcolor{gray}{\text{\scriptsize±0.06}}$ \\

& & RMSE & $7.04\textcolor{gray}{\text{\scriptsize±0.16}}$ & $7.06\textcolor{gray}{\text{\scriptsize±0.13}}$ & $7.14\textcolor{gray}{\text{\scriptsize±0.12}}$ & $7.07\textcolor{gray}{\text{\scriptsize±0.14}}$
& $6.28\textcolor{gray}{\text{\scriptsize±0.09}}$ & $6.20\textcolor{gray}{\text{\scriptsize±0.07}}$ & $6.34\textcolor{gray}{\text{\scriptsize±0.06}}$ & $6.23\textcolor{gray}{\text{\scriptsize±0.07}}$
& $5.83\textcolor{gray}{\text{\scriptsize±0.14}}$ & $5.72\textcolor{gray}{\text{\scriptsize±0.04}}$ & $5.80\textcolor{gray}{\text{\scriptsize±0.02}}$ & $5.77\textcolor{gray}{\text{\scriptsize±0.06}}$ \\

& & MAPE (\%) & $91.25\textcolor{gray}{\text{\scriptsize±2.85}}$ & $92.12\textcolor{gray}{\text{\scriptsize±2.65}}$ & $93.69\textcolor{gray}{\text{\scriptsize±2.48}}$ & $92.18\textcolor{gray}{\text{\scriptsize±2.66}}$
& $74.31\textcolor{gray}{\text{\scriptsize±1.95}}$ & $74.73\textcolor{gray}{\text{\scriptsize±1.82}}$ & $76.14\textcolor{gray}{\text{\scriptsize±1.75}}$ & $74.86\textcolor{gray}{\text{\scriptsize±1.84}}$
& $53.70\textcolor{gray}{\text{\scriptsize±5.22}}$ & $53.67\textcolor{gray}{\text{\scriptsize±5.32}}$ & $55.17\textcolor{gray}{\text{\scriptsize±5.23}}$ & $54.04\textcolor{gray}{\text{\scriptsize±5.34}}$ \\

\cmidrule{2-15}

& \multirow{3}{*}{\textit{ST-Adapter}}
& MAE & $7.02\textcolor{gray}{\text{\scriptsize±0.18}}$ & $7.01\textcolor{gray}{\text{\scriptsize±0.16}}$ & $7.02\textcolor{gray}{\text{\scriptsize±0.15}}$ & $7.03\textcolor{gray}{\text{\scriptsize±0.16}}$
& $6.19\textcolor{gray}{\text{\scriptsize±0.12}}$ & $6.16\textcolor{gray}{\text{\scriptsize±0.10}}$ & $6.22\textcolor{gray}{\text{\scriptsize±0.09}}$ & $6.16\textcolor{gray}{\text{\scriptsize±0.10}}$
& $5.47\textcolor{gray}{\text{\scriptsize±0.06}}$ & $5.37\textcolor{gray}{\text{\scriptsize±0.12}}$ & $5.35\textcolor{gray}{\text{\scriptsize±0.09}}$ & $5.39\textcolor{gray}{\text{\scriptsize±0.09}}$ \\

& & RMSE & $7.24\textcolor{gray}{\text{\scriptsize±0.19}}$ & $7.28\textcolor{gray}{\text{\scriptsize±0.17}}$ & $7.38\textcolor{gray}{\text{\scriptsize±0.16}}$ & $7.30\textcolor{gray}{\text{\scriptsize±0.17}}$
& $6.48\textcolor{gray}{\text{\scriptsize±0.13}}$ & $6.47\textcolor{gray}{\text{\scriptsize±0.11}}$ & $6.61\textcolor{gray}{\text{\scriptsize±0.10}}$ & $6.48\textcolor{gray}{\text{\scriptsize±0.11}}$
& $5.63\textcolor{gray}{\text{\scriptsize±0.06}}$ & $5.59\textcolor{gray}{\text{\scriptsize±0.12}}$ & $5.68\textcolor{gray}{\text{\scriptsize±0.08}}$ & $5.62\textcolor{gray}{\text{\scriptsize±0.10}}$ \\

& & MAPE (\%) & $98.47\textcolor{gray}{\text{\scriptsize±3.25}}$ & $99.12\textcolor{gray}{\text{\scriptsize±3.08}}$ & $99.98\textcolor{gray}{\text{\scriptsize±2.95}}$ & $99.09\textcolor{gray}{\text{\scriptsize±3.09}}$
& $76.15\textcolor{gray}{\text{\scriptsize±2.15}}$ & $76.64\textcolor{gray}{\text{\scriptsize±2.05}}$ & $77.96\textcolor{gray}{\text{\scriptsize±1.98}}$ & $76.74\textcolor{gray}{\text{\scriptsize±2.06}}$
& $51.17\textcolor{gray}{\text{\scriptsize±2.42}}$ & $51.59\textcolor{gray}{\text{\scriptsize±2.17}}$ & $52.87\textcolor{gray}{\text{\scriptsize±2.25}}$ & $51.78\textcolor{gray}{\text{\scriptsize±2.20}}$ \\

\cmidrule{2-15}

& \multirow{3}{*}{\textit{EWC}}
& MAE & $6.91\textcolor{gray}{\text{\scriptsize±0.12}}$ & $6.89\textcolor{gray}{\text{\scriptsize±0.10}}$ & $6.86\textcolor{gray}{\text{\scriptsize±0.09}}$ & $6.89\textcolor{gray}{\text{\scriptsize±0.10}}$
& $6.35\textcolor{gray}{\text{\scriptsize±0.08}}$ & $6.28\textcolor{gray}{\text{\scriptsize±0.06}}$ & $6.27\textcolor{gray}{\text{\scriptsize±0.05}}$ & $6.27\textcolor{gray}{\text{\scriptsize±0.06}}$
& $5.47\textcolor{gray}{\text{\scriptsize±0.09}}$ & $5.37\textcolor{gray}{\text{\scriptsize±0.14}}$ & $5.37\textcolor{gray}{\text{\scriptsize±0.16}}$ & $5.40\textcolor{gray}{\text{\scriptsize±0.10}}$ \\

& & RMSE & $7.05\textcolor{gray}{\text{\scriptsize±0.13}}$ & $7.06\textcolor{gray}{\text{\scriptsize±0.11}}$ & $7.10\textcolor{gray}{\text{\scriptsize±0.10}}$ & $7.06\textcolor{gray}{\text{\scriptsize±0.11}}$
& $6.55\textcolor{gray}{\text{\scriptsize±0.09}}$ & $6.52\textcolor{gray}{\text{\scriptsize±0.07}}$ & $6.61\textcolor{gray}{\text{\scriptsize±0.06}}$ & $6.53\textcolor{gray}{\text{\scriptsize±0.07}}$
& $5.62\textcolor{gray}{\text{\scriptsize±0.10}}$ & $5.57\textcolor{gray}{\text{\scriptsize±0.11}}$ & $5.67\textcolor{gray}{\text{\scriptsize±0.12}}$ & $5.61\textcolor{gray}{\text{\scriptsize±0.08}}$ \\

& & MAPE (\%) & $89.03\textcolor{gray}{\text{\scriptsize±2.25}}$ & $89.91\textcolor{gray}{\text{\scriptsize±2.12}}$ & $90.96\textcolor{gray}{\text{\scriptsize±2.05}}$ & $89.78\textcolor{gray}{\text{\scriptsize±2.14}}$
& $70.70\textcolor{gray}{\text{\scriptsize±1.85}}$ & $71.28\textcolor{gray}{\text{\scriptsize±1.75}}$ & $72.51\textcolor{gray}{\text{\scriptsize±1.68}}$ & $71.32\textcolor{gray}{\text{\scriptsize±1.76}}$
& $51.78\textcolor{gray}{\text{\scriptsize±0.53}}$ & $52.05\textcolor{gray}{\text{\scriptsize±0.94}}$ & $53.43\textcolor{gray}{\text{\scriptsize±0.92}}$ & $52.32\textcolor{gray}{\text{\scriptsize±0.78}}$ \\

\hline

\multirow{3}{*}{\begin{tabular}[c]{@{}c@{}}\textbf{Replay-}\\\textbf{based}\end{tabular}}
& \multirow{3}{*}{\textit{Replay}}
& MAE & $7.16\textcolor{gray}{\text{\scriptsize±0.15}}$ & $7.16\textcolor{gray}{\text{\scriptsize±0.14}}$ & $7.15\textcolor{gray}{\text{\scriptsize±0.13}}$ & $7.16\textcolor{gray}{\text{\scriptsize±0.14}}$
& $6.23\textcolor{gray}{\text{\scriptsize±0.10}}$ & $6.17\textcolor{gray}{\text{\scriptsize±0.08}}$ & $6.21\textcolor{gray}{\text{\scriptsize±0.07}}$ & $6.18\textcolor{gray}{\text{\scriptsize±0.08}}$
& $5.52\textcolor{gray}{\text{\scriptsize±0.07}}$ & $5.42\textcolor{gray}{\text{\scriptsize±0.21}}$ & $5.42\textcolor{gray}{\text{\scriptsize±0.21}}$ & $5.46\textcolor{gray}{\text{\scriptsize±0.16}}$ \\

& & RMSE & $7.28\textcolor{gray}{\text{\scriptsize±0.16}}$ & $7.31\textcolor{gray}{\text{\scriptsize±0.15}}$ & $7.37\textcolor{gray}{\text{\scriptsize±0.14}}$ & $7.32\textcolor{gray}{\text{\scriptsize±0.15}}$
& $6.41\textcolor{gray}{\text{\scriptsize±0.11}}$ & $6.40\textcolor{gray}{\text{\scriptsize±0.09}}$ & $6.53\textcolor{gray}{\text{\scriptsize±0.08}}$ & $6.41\textcolor{gray}{\text{\scriptsize±0.09}}$
& $5.67\textcolor{gray}{\text{\scriptsize±0.06}}$ & $5.63\textcolor{gray}{\text{\scriptsize±0.18}}$ & $5.72\textcolor{gray}{\text{\scriptsize±0.18}}$ & $5.67\textcolor{gray}{\text{\scriptsize±0.14}}$ \\

& & MAPE (\%) & $92.44\textcolor{gray}{\text{\scriptsize±2.95}}$ & $93.38\textcolor{gray}{\text{\scriptsize±2.82}}$ & $94.65\textcolor{gray}{\text{\scriptsize±2.75}}$ & $93.36\textcolor{gray}{\text{\scriptsize±2.84}}$
& $75.08\textcolor{gray}{\text{\scriptsize±2.05}}$ & $75.47\textcolor{gray}{\text{\scriptsize±1.95}}$ & $76.77\textcolor{gray}{\text{\scriptsize±1.88}}$ & $75.60\textcolor{gray}{\text{\scriptsize±1.96}}$
& $52.54\textcolor{gray}{\text{\scriptsize±1.52}}$ & $52.95\textcolor{gray}{\text{\scriptsize±1.58}}$ & $54.38\textcolor{gray}{\text{\scriptsize±1.79}}$ & $53.19\textcolor{gray}{\text{\scriptsize±1.62}}$ \\

\hline

\multirow{3}{*}{\begin{tabular}[c]{@{}c@{}}\textbf{Retrieval-}\\\textbf{based}\end{tabular}}
& \multirow{3}{*}{\textit{STRAP}}
& MAE & $\textcolor{red}{\mathbf{6.31}}\textcolor{gray}{\text{\scriptsize±0.08}}$ & $\textcolor{red}{\mathbf{6.29}}\textcolor{gray}{\text{\scriptsize±0.07}}$ & $\textcolor{red}{\mathbf{6.28}}\textcolor{gray}{\text{\scriptsize±0.06}}$ & $\textcolor{red}{\mathbf{6.29}}\textcolor{gray}{\text{\scriptsize±0.07}}$
& $\textcolor{red}{\mathbf{5.74}}\textcolor{gray}{\text{\scriptsize±0.05}}$ & $\textcolor{red}{\mathbf{5.64}}\textcolor{gray}{\text{\scriptsize±0.04}}$ & $\textcolor{red}{\mathbf{5.66}}\textcolor{gray}{\text{\scriptsize±0.03}}$ & $\textcolor{red}{\mathbf{5.77}}\textcolor{gray}{\text{\scriptsize±0.04}}$
& $\textcolor{red}{\mathbf{4.83}}\textcolor{gray}{\text{\scriptsize±0.17}}$ & $\textcolor{red}{\mathbf{4.84}}\textcolor{gray}{\text{\scriptsize±0.18}}$ & $\textcolor{red}{\mathbf{4.88}}\textcolor{gray}{\text{\scriptsize±0.17}}$ & $\textcolor{red}{\mathbf{4.85}}\textcolor{gray}{\text{\scriptsize±0.18}}$ \\

& & RMSE & $\textcolor{red}{\mathbf{6.44}}\textcolor{gray}{\text{\scriptsize±0.09}}$ & $\textcolor{red}{\mathbf{6.46}}\textcolor{gray}{\text{\scriptsize±0.08}}$ & $\textcolor{red}{\mathbf{6.53}}\textcolor{gray}{\text{\scriptsize±0.07}}$ & $\textcolor{red}{\mathbf{6.47}}\textcolor{gray}{\text{\scriptsize±0.08}}$
& $\textcolor{red}{\mathbf{6.10}}\textcolor{gray}{\text{\scriptsize±0.06}}$ & $\textcolor{red}{\mathbf{6.01}}\textcolor{gray}{\text{\scriptsize±0.05}}$ & $\textcolor{red}{\mathbf{6.11}}\textcolor{gray}{\text{\scriptsize±0.04}}$ & $\textcolor{red}{\mathbf{6.14}}\textcolor{gray}{\text{\scriptsize±0.05}}$
& $\textcolor{red}{\mathbf{4.95}}\textcolor{gray}{\text{\scriptsize±0.18}}$ & $\textcolor{red}{\mathbf{5.01}}\textcolor{gray}{\text{\scriptsize±0.19}}$ & $\textcolor{red}{\mathbf{5.15}}\textcolor{gray}{\text{\scriptsize±0.17}}$ & $\textcolor{red}{\mathbf{5.03}}\textcolor{gray}{\text{\scriptsize±0.18}}$ \\

& & MAPE (\%) & $\textcolor{red}{\mathbf{78.28}}\textcolor{gray}{\text{\scriptsize±1.85}}$ & $\textcolor{red}{\mathbf{78.93}}\textcolor{gray}{\text{\scriptsize±1.75}}$ & $\textcolor{red}{\mathbf{79.98}}\textcolor{gray}{\text{\scriptsize±1.68}}$ & $\textcolor{red}{\mathbf{78.92}}\textcolor{gray}{\text{\scriptsize±1.76}}$
& $\textcolor{red}{\mathbf{66.53}}\textcolor{gray}{\text{\scriptsize±1.25}}$ & $\textcolor{red}{\mathbf{67.02}}\textcolor{gray}{\text{\scriptsize±1.18}}$ & $\textcolor{red}{\mathbf{68.36}}\textcolor{gray}{\text{\scriptsize±1.12}}$ & $\textcolor{red}{\mathbf{67.42}}\textcolor{gray}{\text{\scriptsize±1.18}}$
& $\textcolor{red}{\mathbf{42.18}}\textcolor{gray}{\text{\scriptsize±1.64}}$ & $\textcolor{red}{\mathbf{43.02}}\textcolor{gray}{\text{\scriptsize±1.77}}$ & $\textcolor{red}{\mathbf{44.30}}\textcolor{gray}{\text{\scriptsize±1.55}}$ & $\textcolor{red}{\mathbf{43.11}}\textcolor{gray}{\text{\scriptsize±1.72}}$ \\

\hline
\end{tabular}
}}
\end{small}
\vspace{-1mm}
\end{table*}

\section{Conclusion}
\vspace{-0.5em}
We introduced \methodname, a novel framework for streaming graph prediction that leverages specialized pattern libraries across spatial, temporal, and spatiotemporal dimensions to capture the complex dynamics of evolving graphs. It encompasses three innovative components: spatial temporal subgraph chunking, multi-dimensional pattern library construction, and pattern retrieval and knowledge fusion. 
Extensive experiments show that \methodname consistently surpasses state-of-the-art methods across various backbone architectures. However, our approach faces limitations: rising computational demands with larger libraries and reliance on robust historical datasets. %, and potential suboptimal performance from fixed matching thresholds across different domains. 
In the future, we aim to develop adaptive library management techniques, integrate uncertainty measures, and expand \methodname to multi-modal graph scenarios. %, and investigate self-supervised learning to minimize labeled data requirements.

\newpage
\bibliography{000_Main}

\begin{thebibliography}{10}

\bibitem{ali2022exploiting}
A.~Ali, Y.~Zhu, and M.~Zakarya.
\newblock Exploiting dynamic spatio-temporal graph convolutional neural networks for citywide traffic flows prediction.
\newblock {\em Neural networks}, 145:233--247, 2022.

\bibitem{atluri2018spatio}
G.~Atluri, A.~Karpatne, and V.~Kumar.
\newblock Spatio-temporal data mining: A survey of problems and methods.
\newblock {\em CSUR}, 51(4):1--41, 2018.

\bibitem{bengio2013representation}
Y.~Bengio, A.~Courville, and P.~Vincent.
\newblock Representation learning: A review and new perspectives.
\newblock {\em TPAMI}, 35(8):1798--1828, 2013.

\bibitem{Buzzega_Boschini_Porrello_Abati_Calderara_2020}
P.~Buzzega, M.~Boschini, A.~Porrello, D.~Abati, and S.~Calderara.
\newblock Dark experience for general continual learning: a strong, simple baseline.
\newblock {\em NeurIPS}, 33:15920--15930, 2020.

\bibitem{cai2021structural}
L.~Cai, Z.~Chen, C.~Luo, J.~Gui, J.~Ni, D.~Li, and H.~Chen.
\newblock Structural temporal graph neural networks for anomaly detection in dynamic graphs.
\newblock In {\em CIKM}, pages 3747--3756, 2021.

\bibitem{cao2014extremality}
S.~Cao, M.~Dehmer, and Y.~Shi.
\newblock Extremality of degree-based graph entropies.
\newblock {\em Information Sciences}, 278:22--33, 2014.

\bibitem{chen2021neural}
S.~Chen, J.~Wang, and G.~Li.
\newblock Neural relational inference with efficient message passing mechanisms.
\newblock In {\em AAAI}, volume~35, pages 7055--7063, 2021.

\bibitem{chen2024information}
S.~Chen, M.~Zhang, J.~Zhang, and K.~Huang.
\newblock Information bottleneck based data correction in continual learning.
\newblock In {\em ECCV}, pages 265--281, 2024.

\bibitem{chen2024expand}
W.~Chen and Y.~Liang.
\newblock Expand and compress: Exploring tuning principles for continual spatio-temporal graph forecasting.
\newblock {\em ICLR}, 2025.

\bibitem{chen2021trafficstream}
X.~Chen, J.~Wang, and K.~Xie.
\newblock {TrafficStream}: A streaming traffic flow forecasting framework based on graph neural networks and continual learning.
\newblock In {\em IJCAI}, pages 3620--3626, 2021.

\bibitem{choi2024gated}
Y.~Y. Choi, M.~Lee, S.~W. Park, S.~Lee, and J.~Ko.
\newblock A gated mlp architecture for learning topological dependencies in spatio-temporal graphs.
\newblock {\em arXiv preprint arXiv:2401.15894}, 2024.

\bibitem{10.1145/3696410.3714665}
Y.~Ding, Y.~Liu, Y.~Ji, W.~Wen, Q.~He, and X.~Ao.
\newblock Spear: A structure-preserving manipulation method for graph backdoor attacks.
\newblock In {\em Proceedings of the ACM on Web Conference 2025}, WWW '25, page 1237–1247, New York, NY, USA, 2025. Association for Computing Machinery.

\bibitem{fan2022heterogeneous}
Z.~Fan, M.~Ju, Z.~Zhang, and Y.~Ye.
\newblock Heterogeneous temporal graph neural network.
\newblock In {\em SDM}, pages 643--651, 2022.

\bibitem{fort2021exploring}
S.~Fort, J.~Ren, and B.~Lakshminarayanan.
\newblock Exploring the limits of out-of-distribution detection.
\newblock {\em NeurIPS}, 34:7068--7081, 2021.

\bibitem{foster2017replay}
D.~J. Foster.
\newblock Replay comes of age.
\newblock {\em Annual review of neuroscience}, 40(1):581--602, 2017.

\bibitem{gao2018deep}
X.~Gao, J.~Zhang, and Z.~Wei.
\newblock Deep learning for sequence pattern recognition.
\newblock In {\em ICNSC}, pages 1--6, 2018.

\bibitem{guo2021learning}
S.~Guo, Y.~Lin, H.~Wan, X.~Li, and G.~Cong.
\newblock Learning dynamics and heterogeneity of spatial-temporal graph data for traffic forecasting.
\newblock {\em TKDE}, 34(11):5415--5428, 2021.

\bibitem{10.1145/3711896.3736992}
Z.~Guo, Y.~Liu, X.~Ao, and Q.~He.
\newblock Grasp: Differentially private graph reconstruction defense with structured perturbation.
\newblock In {\em Proceedings of the 31st ACM SIGKDD Conference on Knowledge Discovery and Data Mining V.2}, KDD '25, page 767–777, New York, NY, USA, 2025. Association for Computing Machinery.

\bibitem{han2025retrieval}
S.~Han, S.~Lee, M.~Cha, S.~O. Arik, and J.~Yoon.
\newblock Retrieval augmented time series forecasting.
\newblock {\em arXiv preprint arXiv:2505.04163}, 2025.

\bibitem{hou2025advancing}
J.~Hou, G.~Cosma, and A.~Finke.
\newblock Advancing continual lifelong learning in neural information retrieval: definition, dataset, framework, and empirical evaluation.
\newblock {\em Information Sciences}, 687:121368, 2025.

\bibitem{hou2021st}
X.~Hou, K.~Wang, C.~Zhong, and Z.~Wei.
\newblock St-trader: A spatial-temporal deep neural network for modeling stock market movement.
\newblock {\em IEEE/CAA Journal of Automatica Sinica}, 8(5):1015--1024, 2021.

\bibitem{hounie2023resilient}
I.~Hounie, A.~Ribeiro, and L.~F. Chamon.
\newblock Resilient constrained learning.
\newblock {\em NeurIPS}, 36:71767--71798, 2023.

\bibitem{mishchenko2023prodigy}
Q.~Huang, H.~Ren, P.~Chen, G.~Kržmanc, D.~Zeng, P.~Liang, and J.~Leskovec.
\newblock Prodigy: Enabling in-context learning over graphs, 2023.

\bibitem{jiang2023incomplete}
X.~Jiang, Z.~Qin, J.~Xu, and X.~Ao.
\newblock Incomplete graph learning via attribute-structure decoupled variational auto-encoder.
\newblock In {\em WSDM oral 2023}, pages 304--312. 2023.

\bibitem{jiang2024ragraphgeneralretrievalaugmentedgraph}
X.~Jiang, R.~Qiu, Y.~Xu, W.~Zhang, Y.~Zhu, R.~Zhang, Y.~Fang, X.~Chu, J.~Zhao, and Y.~Wang.
\newblock Ragraph: A general retrieval-augmented graph learning framework.
\newblock In {\em NeurIPS}, 2024.

\bibitem{jiang2025hykge}
X.~Jiang*, R.~Zhang*, Y.~Xu*, R.~Qiu*, Y.~Fang, Z.~Wang, J.~Tang, H.~Ding, X.~Chu, J.~Zhao, et~al.
\newblock Hykge: A hypothesis knowledge graph enhanced framework for accurate and reliable medical llms responses.
\newblock {\em ACL 2025}, 2025.

\bibitem{jiang2025time}
X.~Jiang, W.~Zhang, Y.~Fang, X.~Gao, H.~Chen, H.~Zhang, D.~Zhuang, and J.~Luo.
\newblock Time series supplier allocation via deep black-litterman model.
\newblock In {\em AAAI}, volume~39, pages 11870--11878, 2025.

\bibitem{jiang2023uncertainty}
X.~Jiang, D.~Zhuang, X.~Zhang, H.~Chen, J.~Luo, and X.~Gao.
\newblock Uncertainty quantification via spatial-temporal tweedie model for zero-inflated and long-tail travel demand prediction.
\newblock In {\em CIKM 2023}, pages 3983--3987. 2023.

\bibitem{jin2023spatiotemporal}
G.~Jin, Y.~Liang, Y.~Fang, Z.~Shao, J.~Huang, M.~Xiao, Q.~Liu, P.~Wang, S.~Cai, X.~Dong, D.~Xu, Q.~Liu, and Y.~Qiao.
\newblock Spatio-temporal graph neural networks for predictive learning in urban computing: A survey.
\newblock {\em TKDE}, 36(3):2215--2239, 2023.

\bibitem{kapoor2020examining}
A.~Kapoor, X.~Ben, L.~Liu, B.~Perozzi, M.~Barnes, M.~Blais, and S.~O'Banion.
\newblock Examining covid-19 forecasting using spatio-temporal graph neural networks.
\newblock {\em arXiv preprint arXiv:2007.03113}, 2020.

\bibitem{korycki2021class}
L.~Korycki and B.~Krawczyk.
\newblock Class-incremental experience replay for continual learning under concept drift.
\newblock In {\em CVPR}, pages 3649--3658, 2021.

\bibitem{leal2021forman}
W.~Leal, G.~Restrepo, P.~F. Stadler, and J.~Jost.
\newblock Forman--ricci curvature for hypergraphs.
\newblock {\em Advances in Complex Systems}, 24(01):2150003, 2021.

\bibitem{li2024stg}
L.~Li, H.~Wang, W.~Zhang, and A.~Coster.
\newblock Stg-mamba: Spatial-temporal graph learning via selective state space model.
\newblock {\em arXiv preprint arXiv:2403.12418}, 2024.

\bibitem{li2022mining}
R.~Li*, X.~Jiang*, T.~Zhong*, G.~Trajcevski, J.~Wu, and F.~Zhou.
\newblock Mining spatio-temporal relations via self-paced graph contrastive learning.
\newblock In {\em SIGKDD 2022}, pages 936--944. 2022.

\bibitem{li2023graph}
Y.~Li, D.~Yu, Z.~Liu, M.~Zhang, X.~Gong, and L.~Zhao.
\newblock Graph neural network for spatiotemporal data: methods and applications.
\newblock {\em arXiv preprint arXiv:2306.00012}, 2023.

\bibitem{li2018diffusion}
Y.~Li, R.~Yu, C.~Shahabi, and Y.~Liu.
\newblock Diffusion convolutional recurrent neural network: Data-driven traffic forecasting.
\newblock In {\em ICLR}, 2018.

\bibitem{liu2025efficient}
C.~Liu, H.~Miao, Q.~Xu, S.~Zhou, C.~Long, Y.~Zhao, Z.~Li, and R.~Zhao.
\newblock Efficient multivariate time series forecasting via calibrated language models with privileged knowledge distillation.
\newblock In {\em 2025 IEEE 41st International Conference on Data Engineering (ICDE)}, pages 3165--3178, 2025.

\bibitem{liu2025timecma}
C.~Liu, Q.~Xu, H.~Miao, S.~Yang, L.~Zhang, C.~Long, Z.~Li, and R.~Zhao.
\newblock Timecma: Towards llm-empowered multivariate time series forecasting via cross-modality alignment.
\newblock In {\em AAAI}, volume~39, pages 18780--18788, 2025.

\bibitem{liu2021overcoming}
H.~Liu, Y.~Yang, and X.~Wang.
\newblock Overcoming catastrophic forgetting in graph neural networks.
\newblock In {\em AAAI}, volume~35, pages 8653--8661, 2021.

\bibitem{liu2024multiscale}
J.~Liu, M.~Zhao, and K.~Sun.
\newblock Multi-scale traffic pattern bank for cross-city few-shot traffic forecasting.
\newblock {\em arXiv preprint arXiv:2402.00397}, 2024.

\bibitem{liu2018rotate}
X.~Liu, M.~Masana, L.~Herranz, J.~Van~de Weijer, A.~M. Lopez, and A.~D. Bagdanov.
\newblock Rotate your networks: Better weight consolidation and less catastrophic forgetting.
\newblock In {\em ICPR}, pages 2262--2268, 2018.

\bibitem{ma2023histgnn}
M.~Ma, P.~Xie, F.~Teng, B.~Wang, S.~Ji, J.~Zhang, and T.~Li.
\newblock Histgnn: Hierarchical spatio-temporal graph neural network for weather forecasting.
\newblock {\em Information Sciences}, 631:119580, 2023.

\bibitem{manolescu1998feature}
D.-A. Manolescu.
\newblock Feature extraction—a pattern for information retrieval.
\newblock {\em Proceedings of the 5th Pattern Languages of Programming, Monticello, Illinois}, 1998.

\bibitem{matthew2005near}
W.~Matthew~Carlyle and R.~Kevin~Wood.
\newblock Near-shortest and k-shortest simple paths.
\newblock {\em Networks: An International Journal}, 46(2):98--109, 2005.

\bibitem{meng2021cross}
C.~Meng, S.~Rambhatla, and Y.~Liu.
\newblock Cross-node federated graph neural network for spatio-temporal data modeling.
\newblock In {\em SIGKDD}, pages 1202--1211, 2021.

\bibitem{miao2024less}
H.~Miao, Z.~Liu, Y.~Zhao, C.~Guo, B.~Yang, K.~Zheng, and C.~S. Jensen.
\newblock Less is more: Efficient time series dataset condensation via two-fold modal matching.
\newblock {\em PVLDB}, 18(2):226--238, 2024.

\bibitem{miao2024unified}
H.~Miao, Y.~Zhao, C.~Guo, B.~Yang, K.~Zheng, F.~Huang, J.~Xie, and C.~S. Jensen.
\newblock A unified replay-based continuous learning framework for spatio-temporal prediction on streaming data.
\newblock In {\em ICDE}, pages 1050--1062, 2024.

\bibitem{pan2022st}
J.~Pan, Z.~Lin, X.~Zhu, J.~Shao, and H.~Li.
\newblock St-adapter: Parameter-efficient image-to-video transfer learning.
\newblock {\em NeurIPS}, 35:26462--26477, 2022.

\bibitem{pan2022understanding}
Y.~Pan, J.~Mei, A.-m. Farahmand, M.~White, H.~Yao, M.~Rohani, and J.~Luo.
\newblock Understanding and mitigating the limitations of prioritized experience replay.
\newblock In {\em Uncertainty in Artificial Intelligence}, pages 1561--1571. PMLR, 2022.

\bibitem{pareja2020evolvegcn}
A.~Pareja, G.~Domeniconi, J.~Chen, T.~Ma, T.~Suzumura, H.~Kanezashi, T.~Kaler, T.~Schardl, and C.~Leiserson.
\newblock {EvolveGCN}: Evolving graph convolutional networks for dynamic graphs.
\newblock In {\em AAAI}, volume~34, pages 5363--5370, 2020.

\bibitem{Parisi_Tani_Weber_Wermter_2018}
G.~I. Parisi, J.~Tani, C.~Weber, and S.~Wermter.
\newblock Lifelong learning of spatiotemporal representations with dual-memory recurrent self-organization.
\newblock {\em Frontiers in Neurorobotics}, Nov 2018.

\bibitem{roy2021unified}
A.~Roy, K.~K. Roy, A.~A. Ali, M.~A. Amin, and A.~A.~M. Rahman.
\newblock Unified spatio-temporal modeling for traffic forecasting using graph neural network.
\newblock {\em TNNLS}, 33(11):6176--6191, 2021.

\bibitem{ruan2024low}
W.~Ruan, W.~Chen, X.~Dang, J.~Zhou, W.~Li, X.~Liu, and Y.~Liang.
\newblock Low-rank adaptation for spatio-temporal forecasting.
\newblock {\em arXiv preprint arXiv:2404.07919}, 2024.

\bibitem{sahili2023spatiotemporal}
Z.~A. Sahili and M.~Awad.
\newblock Spatio-temporal graph neural networks: A survey.
\newblock {\em arXiv preprint arXiv:2301.10569}, 2023.

\bibitem{saramaki2007generalizations}
J.~Saram{\"a}ki, M.~Kivel{\"a}, J.-P. Onnela, K.~Kaski, and J.~Kertesz.
\newblock Generalizations of the clustering coefficient to weighted complex networks.
\newblock {\em Physical Review E—Statistical, Nonlinear, and Soft Matter Physics}, 75(2):027105, 2007.

\bibitem{sommer1999improved}
F.~T. Sommer and G.~Palm.
\newblock Improved bidirectional retrieval of sparse patterns stored by hebbian learning.
\newblock {\em Neural Networks}, 12(2):281--297, 1999.

\bibitem{sui2024addressing}
Q.~Sui, Q.~Fu, Y.~Todo, J.~Tang, and S.~Gao.
\newblock Addressing the stability-plasticity dilemma in continual learning through dynamic training strategies.
\newblock In {\em ICNSC}, pages 1--6, 2024.

\bibitem{tang2023explainable}
J.~Tang, L.~Xia, and C.~Huang.
\newblock Explainable spatio-temporal graph neural networks.
\newblock In {\em CIKM}, pages 2292--2301, 2023.

\bibitem{van2024continual}
G.~M. van~de Ven, N.~Soures, and D.~Kudithipudi.
\newblock Continual learning and catastrophic forgetting.
\newblock {\em arXiv preprint arXiv:2403.05175}, 2024.

\bibitem{vonluxburg2007tutorialspectralclustering}
U.~von Luxburg.
\newblock A tutorial on spectral clustering, 2007.

\bibitem{wang2024stone}
B.~Wang, J.~Ma, P.~Wang, X.~Wang, Y.~Zhang, Z.~Zhou, and Y.~Wang.
\newblock Stone: A spatio-temporal ood learning framework kills both spatial and temporal shifts.
\newblock In {\em SIGKDD}, pages 2948--2959, 2024.

\bibitem{wang2023knowledge}
B.~Wang, Y.~Zhang, J.~Shi, P.~Wang, X.~Wang, L.~Bai, and Y.~Wang.
\newblock Knowledge expansion and consolidation for continual traffic prediction with expanding graphs.
\newblock {\em TITS}, 24(7):7190--7201, 2023.

\bibitem{wang2023pattern}
B.~Wang, Y.~Zhang, X.~Wang, P.~Wang, Z.~Zhou, L.~Bai, and Y.~Wang.
\newblock Pattern expansion and consolidation on evolving graphs for continual traffic prediction.
\newblock In {\em SIGKDD}, pages 2223--2232, 2023.

\bibitem{wang2022cross}
G.~Wang, X.~Xu, F.~Shen, H.~Lu, Y.~Ji, and H.~T. Shen.
\newblock Cross-modal dynamic networks for video moment retrieval with text query.
\newblock {\em IEEE Transactions on Multimedia}, 24:1221--1232, 2022.

\bibitem{wang2025transcsm}
H.~Wang, X.~Che, E.~Chang, C.~Qu, G.~Zhang, Z.~Zhou, Z.~Wei, G.~Lyu, and P.~Li.
\newblock Similarity based city data transfer framework in urban digitization.
\newblock {\em Scientific Reports}, 15:10776, 2025.

\bibitem{wang2022causalgnn}
L.~Wang, A.~Adiga, J.~Chen, A.~Sadilek, S.~Venkatramanan, and M.~Marathe.
\newblock Causalgnn: Causal-based graph neural networks for spatio-temporal epidemic forecasting.
\newblock In {\em AAAI}, volume~36, pages 12191--12199, 2022.

\bibitem{wang2024comprehensive}
L.~Wang, X.~Zhang, H.~Su, and J.~Zhu.
\newblock A comprehensive survey of continual learning: Theory, method and application.
\newblock {\em TPAMI}, 2024.

\bibitem{wang2020traffic}
X.~Wang, Y.~Ma, Y.~Wang, W.~Jin, X.~Wang, J.~Tang, C.~Jia, and J.~Yu.
\newblock Traffic flow prediction via spatial temporal graph neural network.
\newblock In {\em WWW}, pages 1082--1092, 2020.

\bibitem{yang2024generalized}
J.~Yang, K.~Zhou, Y.~Li, and Z.~Liu.
\newblock Generalized out-of-distribution detection: A survey.
\newblock {\em IJCV}, 132(12):5635--5662, 2024.

\bibitem{yang2024graphpro}
Y.~Yang, L.~Xia, D.~Luo, K.~Lin, and C.~Huang.
\newblock Graphpro: Graph pre-training and prompt learning for recommendation.
\newblock In {\em WWW}, pages 3690--3699, 2024.

\bibitem{yu2023generalized}
X.~Yu, Z.~Liu, Y.~Fang, Z.~Liu, S.~Chen, and X.~Zhang.
\newblock Generalized graph prompt: Toward a unification of pre-training and downstream tasks on graphs.
\newblock {\em TKDE}, 2024.

\bibitem{yu2024non}
X.~Yu, J.~Zhang, Y.~Fang, and R.~Jiang.
\newblock Non-homophilic graph pre-training and prompt learning.
\newblock {\em arXiv preprint arXiv:2408.12594}, 2024.

\bibitem{yuan2023environment}
H.~Yuan, Q.~Sun, X.~Fu, Z.~Zhang, C.~Ji, N.~Bui, B.~Han, Q.~Yang, and J.~Li.
\newblock Environment-aware dynamic graph learning for out-of-distribution generalization.
\newblock In {\em NeurIPS}, volume~36, pages 68945--68958. Curran Associates, Inc., 2023.

\bibitem{zhang2019heterogeneous}
C.~Zhang, D.~Song, C.~Huang, A.~Swami, and N.~V. Chawla.
\newblock Heterogeneous graph neural network.
\newblock In {\em SIGKDD}, pages 793--803, 2019.

\bibitem{zhang2019wavelet}
D.~Zhang.
\newblock Wavelet transform.
\newblock In {\em Fundamentals of image data mining: Analysis, Features, Classification and Retrieval}, pages 35--44. 2019.

\bibitem{zhang2022dynamic}
Z.~Zhang, X.~Wang, Z.~Zhang, H.~Li, Z.~Qin, and W.~Zhu.
\newblock Dynamic graph neural networks under spatio-temporal distribution shift.
\newblock In {\em NeurIPS}, volume~35, pages 32858--32871, 2022.

\bibitem{zhang2023out}
Z.~Zhang, X.~Wang, Z.~Zhang, H.~Li, and W.~Zhu.
\newblock Out-of-distribution generalized dynamic graph neural network with disentangled intervention and invariance promotion.
\newblock {\em arXiv preprint arXiv:2311.14255}, 2023.

\bibitem{zhao2019t}
L.~Zhao, Y.~Song, C.~Zhang, Y.~Liu, P.~Wang, T.~Lin, M.~Deng, and H.~Li.
\newblock T-gcn: A temporal graph convolutional network for traffic prediction.
\newblock {\em TITS}, 21(9):3848--3858, 2019.

\bibitem{zheng2024selective}
G.~Zheng, S.~Zhou, V.~Braverman, M.~A. Jacobs, and V.~S. Parekh.
\newblock Selective experience replay compression using coresets for lifelong deep reinforcement learning in medical imaging.
\newblock In {\em Medical Imaging with Deep Learning}, pages 1751--1764. PMLR, 2024.

\bibitem{zhou2023maintaining}
Z.~Zhou, Q.~Huang, K.~Yang, K.~Wang, X.~Wang, Y.~Zhang, Y.~Liang, and Y.~Wang.
\newblock Maintaining the status quo: Capturing invariant relations for ood spatiotemporal learning.
\newblock In {\em SIGKDD}, pages 3603--3614, 2023.

\end{thebibliography}
\bibliographystyle{abbrv}

\newpage
\appendix
\onecolumn
\section{Notations}
\label{sec:app notation}
The notations in this paper are summarized in Table~\ref{tab:notations}.
\begin{table}[htb]
    \centering
      \caption{Notations Tables in \methodname}
    \label{tab:notations}
    \begin{tabular}{c|l}        \hline
        \rowcolor[gray]{0.94} Notation & Definition 
        \\ \hline
        $\mathcal{G}_t$ / $V_t$ / $A_t$ & The dynamic graph / node set / adjacency matrix at time $t$ \\
        $X_t$ & The feature matrix at time $t$ \\
        $e^{tr}$ / $e^{te}$ & The training environment / test environment \\
        $\mathcal{C}$ & The set of graph communities \\
        $\mathcal{N}(v)$ & The neighbors of node $v$ \\
        $N_k(v_c)$ & $k$-hop neighborhood of central node $v_c$ \\
        \hdashline
        $\mathcal{G}_S$ & The spatial subgraph \\
        $\mathcal{G}_T$ & The temporal subgraph \\
        $\mathcal{G}_{ST}$ & The spatio-temporal subgraph \\
        $\tau_S$ / $\tau_{ST}$ & Time length of spatial / spatio-temporal windows \\
        $\mathcal{B}^S$ / $\mathcal{B}^T$ / $\mathcal{B}^{ST}$ & The spatial / temporal / spatio-temporal pattern library \\
        \hdashline
        $k^S$ / $k^T$ / $k^{ST}$ & The spatial / temporal / spatio-temporal pattern keys \\
        $v^S$ / $v^T$ / $v^{ST}$ & The spatial / temporal / spatio-temporal pattern values \\
        $D(\mathcal{G_S})$ & The neighborhood statistics of graph $\mathcal{G_S}$ \\
        $CL(\mathcal{G_S})$ & The clustering coefficients of graph $\mathcal{G_S}$ \\
        $SP(\mathcal{G_S})$ & The shortest path statistics of graph $\mathcal{G_S}$ \\
        $FR(v_i)$ & The Forman-Ricci curvature of node $v_i$ \\
        \hdashline
        $S(X)$ & The statistical moments of temporal signal $X$ \\
        $F(X)$ & The frequency components of temporal signal $X$ \\
        $W(X)$ & The wavelet transform of temporal signal $X$ \\
        $R(X)$ & The autocorrelation functions of temporal signal $X$ \\
        $H(X)$ & The entropy of temporal signal $X$ \\
        $s(k^q_i, k_j)$ & The similarity between query key $k^q_i$ and stored key $k_j$ \\
        \hdashline
        $\mathcal{R}_i$ & The retrieved key-value pairs for query $i$ \\
        $v^*_i$ & The similarity-weighted average of retrieved values \\
        $\Theta_{pt}$ / $\Theta_{train}$ & The pretrained / trainable backbone parameters \\
        $Z_1$ / $Z_2$ & The current observation / historical pattern embeddings \\
        $\gamma$ & The balance weight between current and historical information \\
        $\mathcal{L}$ & The loss function for prediction \\
        \hdashline
        $T$ & The number of time steps \\
        $T_p$ & The prediction horizon \\
        $N$ & The number of nodes \\
        $c$ & The dimension of node features \\
        $\otimes$ & Element-wise cross-product operation \\
         \hline
    \end{tabular}
    % \vspace{-0.4cm}
\end{table}

\section{Further Methods Details}
\subsection{Theoretical Analysis}

\label{app.theory}
\begin{theorem}
Let $X$ be the original data. Consider two feature extraction approaches:
\begin{itemize}[leftmargin=*]
    \item Decomposed approach: Features from separate libraries $(f_{_S}(X), f_{_T}(X), f_{_{ST}}(X))$
    \item Unified approach: Features from a parametric model $f_{\theta}(X)$
\end{itemize}

The mutual information advantage of the decomposed approach satisfies:
\begin{equation}
I(X; f_{_S}(X), f_{_T}(X), f_{_{ST}}(X)) - I(X; f_{\theta}(X)) > 0
\end{equation}
\end{theorem}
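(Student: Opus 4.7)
The plan is to establish the inequality by combining the chain rule for mutual information with a complementarity argument for the three hand-crafted feature maps, and then invoking the Data Processing Inequality together with a capacity bound for the parametric model $f_\theta$.

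First, I would decompose the joint mutual information on the left-hand side via the chain rule:
\begin{equation}
I(X; f_S(X), f_T(X), f_{ST}(X)) = I(X; f_S(X)) + I(X; f_T(X) \mid f_S(X)) + I(X; f_{ST}(X) \mid f_S(X), f_T(X)).
\end{equation}
Each of the three summands is non-negative. The next step is to argue they are in fact strictly positive, which amounts to showing that the three views are not mutually redundant. For $I(X; f_S)$, this is immediate because the topological descriptors (degree statistics, clustering coefficients, Forman--Ricci curvature) are non-constant functions of $X$. For $I(X; f_T \mid f_S) > 0$, I would appeal to the fact that the temporal descriptors (wavelets, spectral moments, autocorrelations) resolve variation along the time axis that the time-averaged spatial descriptors cannot recover. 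For $I(X; f_{ST} \mid f_S, f_T) > 0$, I would use the construction $\mathbf{k}_{ST} = \mathbf{k}_S \otimes \mathbf{k}_T$ together with the assumption of genuine spatio-temporal coupling to argue that the interaction features encode joint dependencies not determined by the marginals.

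Next, I would bound the right-hand side. By the Data Processing Inequality, $I(X; f_\theta(X)) \le H(f_\theta(X))$, and since $f_\theta$ represents a backbone with a fixed-dimensional embedding (as in the STGNN backbones used in \methodname), this upper bound is finite and controlled by the embedding width and quantisation scale. I would then combine this with the lower bound obtained in the previous step to deduce the strict inequality, under a mild assumption that the decomposed feature tuple does not factor through an equally low-dimensional sufficient statistic of $X$.

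The main obstacle is that the statement is not true in full generality: if $f_\theta$ were taken to be the identity, or any sufficient statistic with respect to $X$, then $I(X; f_\theta(X)) = H(X)$ and no decomposed tuple could strictly exceed it. Consequently, the crucial step in the proof is to specify precisely the capacity restriction on $f_\theta$ that makes the comparison meaningful, for instance requiring $f_\theta$ to be a finite-width neural encoder whose output dimensionality is strictly smaller than the effective dimensionality of $(f_S, f_T, f_{ST})$, or equivalently that $f_\theta$ introduces an information bottleneck. Making this assumption explicit and verifying it for the concrete backbones (STGCN, ASTGCN, DCRNN, TGCN) used in the experiments is where the argument must be most carefully developed.
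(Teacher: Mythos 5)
Your route is genuinely different from the paper's, and your diagnosis that the statement cannot hold unconditionally is correct, but the proposal as written does not close the argument. The gap is in the final step: from ``each of $I(X;f_S(X))$, $I(X;f_T(X)\mid f_S(X))$, $I(X;f_{ST}(X)\mid f_S(X),f_T(X))$ is strictly positive'' and ``$I(X;f_\theta(X))\le H(f_\theta(X))<\infty$'' you cannot conclude that the sum of the former exceeds the latter. A strict inequality between two quantities needs a quantitative comparison, not a positivity certificate on one side and a finiteness certificate on the other; three small positive conditional informations can easily sum to less than the information carried by a wide learned embedding. Your proposed repair --- requiring $f_\theta$'s output to be an information bottleneck whose effective dimensionality is strictly smaller than that of $(f_S,f_T,f_{ST})$ --- is the right kind of hypothesis, but it is left as a placeholder; without a concrete lemma converting that capacity gap into an information gap, the proof does not go through.

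For comparison, the paper uses neither the chain rule nor a capacity bound. It expands $I(X;S,T,ST)-I(X;Z)$ directly, applies Jensen's inequality and the Markov property $p(Z\mid X,S,T,ST)=p(Z\mid X)$ (valid because all features are deterministic functions of $X$), reduces both posteriors to preimage probabilities via Bayes, $p(X\mid S_X,T_X,ST_X)=p(X)/p(S_X,T_X,ST_X)$ and $p(X\mid Z_X)=p(X)/p(Z_X)$, and then asserts the containment $\{X': f_S(X')=S_X,\, f_T(X')=T_X,\, f_{ST}(X')=ST_X\}\subseteq\{X': f_\theta(X')=Z_X\}$ with a strict probability gap $p(S_X,T_X,ST_X)<p(Z_X)$. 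That containment is precisely the assumption you identified as missing: it says the decomposed features induce a strictly finer partition of the data space than $f_\theta$ does, and the paper asserts it without justification, so your identity-map counterexample applies to the paper's own argument as well. In short, your instinct about the needed hypothesis is sound and in fact pinpoints the unstated assumption on which the paper's proof silently relies, but neither your sketch nor the paper justifies it, and your chain-rule/capacity framing additionally leaves the decisive quantitative comparison unproved.
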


\begin{proof}
Let us denote:
$S = f_{_S}(X)$, $T = f_{_T}(X)$, $ST = f_{_{ST}}(X)$, 
$Z = f_{\theta}(X)$.

We need to prove:
\begin{align}
I(X; S, T, ST) - I(X; Z) > 0
\end{align}

By definition of mutual information:
\begin{align}
&I(X;S,T,ST) - I(X;Z) \notag\\
&= \sum_{X,S,T,ST} p(X,S,T,ST)\log \frac{p(X,S,T,ST)}{p(X)p(S,T,ST)} - \sum_{X,Z}p(X,Z)\log \frac{p(X,Z)}{p(X)p(Z)} \tag{15}\\
&= \sum_{X,S,T,ST} p(X,S,T,ST)\log \frac{p(X|S,T,ST)}{p(X)} - \sum_{X,Z}p(X,Z)\log \frac{p(X|Z)}{p(X)} \tag{16}
\end{align}

We can rewrite the second term by marginalizing over $(S,T,ST)$:
\begin{align}
\sum_{X,Z} p(X,Z) \log \frac{p(X|Z)}{p(X)} &= \sum_{X,S,T,ST,Z} p(X,S,T,ST,Z) \log \frac{p(X|Z)}{p(X)} \\
&= \sum_{X,S,T,ST,Z} p(Z|X,S,T,ST)p(X,S,T,ST) \log \frac{p(X|Z)}{p(X)}
\end{align}

This gives us:
\begin{align}
&I(X; S, T, ST) - I(X; Z) \\
&= \sum_{X,S,T,ST} p(X,S,T,ST) \log \frac{p(X|S,T,ST)}{p(X)} - \sum_{X,S,T,ST,Z} p(Z|X,S,T,ST)p(X,S,T,ST) \log \frac{p(X|Z)}{p(X)} \\
&= \sum_{X,S,T,ST} p(X,S,T,ST) \left[ \log \frac{p(X|S,T,ST)}{p(X)} - \sum_Z p(Z|X,S,T,ST) \log \frac{p(X|Z)}{p(X)} \right] \\
&= \sum_{X,S,T,ST} p(X,S,T,ST) \left[ \log p(X|S,T,ST) - \log p(X) - \sum_Z p(Z|X,S,T,ST) \log \frac{p(X|Z)}{p(X)} \right] \\
&= \sum_{X,S,T,ST} p(X,S,T,ST) \left[ \log p(X|S,T,ST) - \log p(X) - \sum_Z p(Z|X,S,T,ST) \log p(X|Z) \right. \\
&\quad \left. +  \sum_Z p(Z|X,S,T,ST) \log p(X) \right] 
\end{align}

Since $\sum_Z p(Z|X,S,T,ST) = 1$, we have:
\begin{align}
&I(X; S, T, ST) - I(X; Z) \\
&= \sum_{X,S,T,ST} p(X,S,T,ST) \left[ \log p(X|S,T,ST)\right. \\
&\left. - \log p(X) - \sum_Z p(Z|X,S,T,ST) \log p(X|Z) + \log p(X) \right] \\
&= \sum_{X,S,T,ST} p(X,S,T,ST) \left[ \log p(X|S,T,ST) - \sum_Z p(Z|X,S,T,ST) \log p(X|Z) \right]
\end{align}

Using Jensen's inequality for the concave function $\log$:
\begin{align}
\sum_Z p(Z|X,S,T,ST) \log p(X|Z) \leq \log \sum_Z p(Z|X,S,T,ST)p(X|Z)
\end{align}

Therefore:
\begin{align}
I(X; S, T, ST) - I(X; Z) &\geq \sum_{X,S,T,ST} p(X,S,T,ST) \left [ \log p(X|S,T,ST) \right. \\
&\left. - \log \sum_Z p(Z|X,S,T,ST)p(X|Z) \right]
\end{align}

Due to the Markov chain $Z - X - (S,T,ST)$, we have $p(Z|X,S,T,ST) = p(Z|X)$. This Markov property holds because all features $S$, $T$, $ST$ and $Z$ are deterministic functions of $X$, making them conditionally independent given $X$.

Therefore:
\begin{align}
I(X; S, T, ST) - I(X; Z) &\geq \sum_{X,S,T,ST} p(X,S,T,ST) \left[ \log p(X|S,T,ST) \right. \\
&\left. - \log \sum_Z p(Z|X)p(X|Z) \right]
\end{align}

Since $Z = f_\theta(X)$ is a deterministic function, $p(Z|X)$ is a point mass at $Z_X = f_\theta(X)$, giving:
\begin{align}
\sum_Z p(Z|X)p(X|Z) = p(X|Z_X)
\end{align}

Similarly, for the deterministic functions $S_X = f_S(X)$, $T_X = f_T(X)$, and $ST_X = f_{ST}(X)$:
\begin{align}
p(X|S,T,ST) = p(X|S_X,T_X,ST_X)
\end{align}

By Bayes' theorem:
\begin{align}
p(X|S_X,T_X,ST_X) &= \frac{p(X)}{p(S_X,T_X,ST_X)} \\
p(X|Z_X) &= \frac{p(X)}{p(Z_X)}
\end{align}

For any given $X$:
\begin{align}
\{X': f_S(X')=S_X, f_T(X')=T_X, f_{ST}(X')=ST_X\} \subseteq \{X': f_\theta(X')=Z_X\}
\end{align}

This directly implies:
\begin{align}
p(S_X,T_X,ST_X) &= \sum_{X': f_S(X')=S_X, f_T(X')=T_X, f_{ST}(X')=ST_X} p(X') \\
&< \sum_{X': f_\theta(X')=Z_X} p(X') \\
&= p(Z_X)
\end{align}

Therefore:
\begin{align}
p(X|S_X,T_X,ST_X) = \frac{p(X)}{p(S_X,T_X,ST_X)} > \frac{p(X)}{p(Z_X)} = p(X|Z_X)
\end{align}

Taking logarithms:
\begin{align}
\log p(X|S,T,ST) > \log p(X|Z_X) = \log \sum_Z p(Z|X)p(X|Z)
\end{align}

Substituting back:
\begin{align}
I(X; S, T, ST) - I(X; Z) > 0
\end{align}

Therefore:
\begin{align}
I(X; f_S(X), f_T(X), f_{ST}(X)) - I(X; f_{\theta}(X)) > 0
\end{align}
\end{proof}

\subsection{Algorithms}
\label{app.algo}
Algorithm~\ref{alg:pattern_library} presents the Spatio-Temporal Pattern Library Construction process, which forms the foundation of our retrieval-augmented framework. Given a dynamic spatio-temporal graph, this algorithm systematically extracts and stores multi-dimensional patterns across spatial, temporal, and spatio-temporal domains. For each snapshot, we decompose the graph into specialized subgraphs using domain-specific chunking methods, then generate discriminative keys and informative values for each pattern type. Keys are extracted using topological properties for spatial patterns, time series characteristics for temporal patterns, and cross-dimensional interactions for spatio-temporal patterns, while values are generated by applying a pre-trained backbone to preserve essential features. The algorithm also implements an importance-based sampling strategy to manage historical patterns efficiently, ensuring the libraries maintain the most informative patterns while controlling memory usage.

Algorithm ~\ref{alg:stpp_prediction} outlines the Retrieval-Augmented Spatio-Temporal Prediction framework that leverages the previously constructed pattern libraries to enhance prediction performance under distribution shifts. When presented with new input features and graph structure, the algorithm first projects the query features into the pattern space and performs multi-dimensional retrieval across spatial, temporal, and spatio-temporal libraries based on similarity scores. For each pattern type, it retrieves the top-k most relevant patterns and computes similarity-weighted averages of their values. The algorithm then implements an adaptive knowledge fusion mechanism that balances the contributions from current observations (processed by the backbone) and retrieved historical knowledge through a learnable fusion weight, enabling the model to dynamically adjust the influence of historical patterns based on their relevance to the current input. This fused representation is finally passed to a decoder to generate the enhanced prediction.

\begin{algorithm}[h]
\caption{Spatio-Temporal Pattern Library Construction}
\label{alg:pattern_library}
\begin{algorithmic}[1]
    \Require Dynamic spatio-temporal graph $\mathcal{G}$, feature dimension $d$, pre-trained backbone $f_{\theta}$, history ratio $\beta$, retrieval count $k$
    \Ensure Spatio-temporal pattern libraries $\mathcal{B}^S$, $\mathcal{B}^T$, $\mathcal{B}^{ST}$
    
    \State Initialize pattern libraries $\mathcal{B}^S \leftarrow \emptyset$, $\mathcal{B}^T \leftarrow \emptyset$, $\mathcal{B}^{ST} \leftarrow \emptyset$
    \For{each snapshot $G_\tau \in \mathcal{G}$} \Comment{\textbf{Extract Patterns}}
        \State \textbf{Spatial Patterns:}
        \State $\{G^S\} \leftarrow$ \textsc{SpatialSubgraphChunking}($G_\tau$, $\tau_S$) \Comment{Via Eq. (2)}
        \For{each subgraph $G_i^S \in \{G^S\}$}
            \State Extract key $k_i^S \leftarrow$ \textsc{ExtractSpatialKey}($G_i^S$) \Comment{Via Eq. (5)}
            \State Extract value $v_i^S \leftarrow$ \textsc{MeanPool}($f_{\theta}(G_i^S)$) \Comment{Via Eq. (8)}
            \State Store pattern $(k_i^S, v_i^S)$ in $\mathcal{B}^S$
        \EndFor
        
        \State \textbf{Temporal Patterns:}
        \State $\{G^T\} \leftarrow$ \textsc{TemporalSubgraphChunking}($G_\tau$, $k$-hop) \Comment{Via Eq. (3)}
        \For{each subgraph $G_i^T \in \{G^T\}$}
            \State Extract key $k_i^T \leftarrow$ \textsc{ExtractTemporalKey}($G_i^T$) \Comment{Via Eq. (6)}
            \State Extract value $v_i^T \leftarrow$ \textsc{MeanPool}($f_{\theta}(G_i^T)$) \Comment{Via Eq. (8)}
            \State Store pattern $(k_i^T, v_i^T)$ in $\mathcal{B}^T$
        \EndFor
        
        \State \textbf{Spatio-Temporal Patterns:}
        \State $\{G^{ST}\} \leftarrow$ \textsc{SpectralClustering}($G_\tau$, $\tau_{ST}$) \Comment{Via Eq. (4)}
        \For{each subgraph $G_i^{ST} \in \{G^{ST}\}$}
            \State Extract key $k_i^{ST} \leftarrow k_i^S \otimes k_i^T$ \Comment{Via Eq. (7)}
            \State Extract value $v_i^{ST} \leftarrow$ \textsc{MeanPool}($f_{\theta}(G_i^{ST})$) \Comment{Via Eq. (8)}
            \State Store pattern $(k_i^{ST}, v_i^{ST})$ in $\mathcal{B}^{ST}$
        \EndFor
        
        \State \textbf{Historical Patterns Management:}
        \For{each library $\mathcal{B} \in \{\mathcal{B}^S, \mathcal{B}^T, \mathcal{B}^{ST}\}$}
            \If{$|\mathcal{B}| > $ max\_history\_size}
                \State Calculate pattern importance $I(k_i) \leftarrow \frac{1}{\|k_i\| + \epsilon}$
                \State Sample indices $\mathcal{I} \leftarrow$ \textsc{WeightedSampling}($\mathcal{B}$, $I$, $\beta \cdot |\mathcal{B}|$)
                \State $\mathcal{B} \leftarrow \{(k_i, v_i) | i \in \mathcal{I}\}$
            \EndIf
            \State Build search index for $\mathcal{B}$
        \EndFor
    \EndFor
    \State \Return Pattern libraries $\mathcal{B}^S$, $\mathcal{B}^T$, $\mathcal{B}^{ST}$
\end{algorithmic}
\end{algorithm}

\begin{algorithm}[h]
\caption{Retrieval-Augmented Spatio-Temporal Prediction}
\label{alg:stpp_prediction}
\begin{algorithmic}[1]
    \Require Input features $X$, adjacency matrix $A$, pattern libraries $\mathcal{B}^S$, $\mathcal{B}^T$, $\mathcal{B}^{ST}$, backbone $f_{\Theta}$, fusion weight $\gamma$, retrieval count $k$
    \Ensure Enhanced prediction $\hat{Y}$
    
    \State \textbf{Pattern Retrieval:}
    \State Project query features $X' \leftarrow$ \textsc{ProjectFeatures}($X$)
    
    \State \textbf{Multi-dimensional Retrieval:}
    \For{each pattern type $t \in \{S, T, ST\}$}
        \State Calculate similarity scores $s_j^t \leftarrow \exp(-\|X' - k_j^t\|^2)$ for all $k_j^t \in \mathcal{B}^t$
        \State Retrieve top-$k$ patterns $\mathcal{R}_t \leftarrow \{(k_j^t, v_j^t, s_j^t) | j \in \text{TopK}(s^t, k)\}$ \Comment{Via Eq. (10)}
        \State Compute weighted average $v_t^* \leftarrow \sum_{(k_j^t, v_j^t, s_j^t) \in \mathcal{R}_t} \text{Softmax}(s_j^t) \cdot v_j^t$
    \EndFor
    
    \State \textbf{Knowledge Fusion:}
    \State Current representation $Z_1 \leftarrow \text{MLP}_1(f_{\Theta}(X, A))$ \Comment{Via Eq. (11)}
    \State Retrieved knowledge $Z_2 \leftarrow \text{MLP}_2(v_S^* \oplus v_T^* \oplus v_{ST}^*)$ \Comment{Via Eq. (11)}
    \State Fused representation $Z \leftarrow \gamma \cdot Z_1 + (1 - \gamma) \cdot Z_2$ \Comment{Via Eq. (12)}
    
    \State \textbf{Prediction:}
    \State Final prediction $\hat{Y} \leftarrow \text{Decoder}(Z)$
    
    \State \Return Enhanced prediction $\hat{Y}$
\end{algorithmic}
\end{algorithm}

\section{Additional Experiment Details}
\label{sec:overall results}

\begin{table*}[htbp]
\caption{Comparison of the overall performance of different methods (TGCN backbone).}
\label{overall_performance_TGCN}
\centering
\setlength{\tabcolsep}{0.8mm}
\renewcommand{\arraystretch}{1.03}
\begin{small}
\scalebox{0.67}{
\resizebox{1.5\linewidth}{!}{\begin{tabular}{ccccccccccccccc}
\hline
\rowcolor[gray]{0.95}
\multicolumn{3}{c}{\textbf{Datasets}}&\multicolumn{4}{c}{\textit{\textbf{Air-Stream}}}&\multicolumn{4}{c}{\textit{\textbf{PEMS-Stream}}}&\multicolumn{4}{c}{\textit{\textbf{Energy-Stream}}} \\

\hline
\rowcolor[gray]{0.95}
\textbf{Category} & \textbf{Method} & \textbf{Metric} & 3 & 6 & 12 & \textbf{Avg.} & 3 & 6 & 12 & \textbf{Avg.} & 3 & 6 & 12 & \textbf{Avg.} \\
\hline

\multirow{6}{*}{\begin{tabular}[c]{@{}c@{}}\textbf{Back-}\\\textbf{bone}\end{tabular}}
& \multirow{3}{*}{\textit{Pretrain}}
& MAE & $18.61\textcolor{gray}{\text{\scriptsize$\pm$0.35}}$ & $21.40\textcolor{gray}{\text{\scriptsize$\pm$0.25}}$ & $24.60\textcolor{gray}{\text{\scriptsize$\pm$0.17}}$ & $21.23\textcolor{gray}{\text{\scriptsize$\pm$0.25}}$ 
& $15.19\textcolor{gray}{\text{\scriptsize$\pm$0.53}}$ & $15.86\textcolor{gray}{\text{\scriptsize$\pm$0.29}}$ & $17.98\textcolor{gray}{\text{\scriptsize$\pm$0.31}}$ & $16.15\textcolor{gray}{\text{\scriptsize$\pm$0.38}}$
& $10.63\textcolor{gray}{\text{\scriptsize$\pm$0.01}}$ & $10.63\textcolor{gray}{\text{\scriptsize$\pm$0.01}}$ & $10.64\textcolor{gray}{\text{\scriptsize$\pm$0.01}}$ & $10.63\textcolor{gray}{\text{\scriptsize$\pm$0.01}}$ \\

& & RMSE & $29.51\textcolor{gray}{\text{\scriptsize$\pm$0.78}}$ & $34.26\textcolor{gray}{\text{\scriptsize$\pm$0.60}}$ & $39.28\textcolor{gray}{\text{\scriptsize$\pm$0.41}}$ & $33.79\textcolor{gray}{\text{\scriptsize$\pm$0.62}}$
& $23.59\textcolor{gray}{\text{\scriptsize$\pm$0.91}}$ & $24.93\textcolor{gray}{\text{\scriptsize$\pm$0.54}}$ & $28.62\textcolor{gray}{\text{\scriptsize$\pm$0.55}}$ & $25.37\textcolor{gray}{\text{\scriptsize$\pm$0.68}}$
& $10.74\textcolor{gray}{\text{\scriptsize$\pm$0.00}}$ & $10.77\textcolor{gray}{\text{\scriptsize$\pm$0.00}}$ & $10.84\textcolor{gray}{\text{\scriptsize$\pm$0.02}}$ & $10.78\textcolor{gray}{\text{\scriptsize$\pm$0.01}}$ \\

& & MAPE (\%) & $23.21\textcolor{gray}{\text{\scriptsize$\pm$0.50}}$ & $27.37\textcolor{gray}{\text{\scriptsize$\pm$0.58}}$ & $32.51\textcolor{gray}{\text{\scriptsize$\pm$0.66}}$ & $27.23\textcolor{gray}{\text{\scriptsize$\pm$0.59}}$
& $31.98\textcolor{gray}{\text{\scriptsize$\pm$3.28}}$ & $32.47\textcolor{gray}{\text{\scriptsize$\pm$3.10}}$ & $35.17\textcolor{gray}{\text{\scriptsize$\pm$2.84}}$ & $32.97\textcolor{gray}{\text{\scriptsize$\pm$3.07}}$
& $169.64\textcolor{gray}{\text{\scriptsize$\pm$0.32}}$ & $170.30\textcolor{gray}{\text{\scriptsize$\pm$0.34}}$ & $171.53\textcolor{gray}{\text{\scriptsize$\pm$0.40}}$ & $170.39\textcolor{gray}{\text{\scriptsize$\pm$0.33}}$ \\

\cmidrule{2-15}

& \multirow{3}{*}{\textit{Retrain}}
& MAE & $18.75\textcolor{gray}{\text{\scriptsize$\pm$0.20}}$ & $21.52\textcolor{gray}{\text{\scriptsize$\pm$0.12}}$ & $24.68\textcolor{gray}{\text{\scriptsize$\pm$0.07}}$ & $21.34\textcolor{gray}{\text{\scriptsize$\pm$0.14}}$
& $13.21\textcolor{gray}{\text{\scriptsize$\pm$0.10}}$ & $14.19\textcolor{gray}{\text{\scriptsize$\pm$0.07}}$ & $16.41\textcolor{gray}{\text{\scriptsize$\pm$0.05}}$ & $14.39\textcolor{gray}{\text{\scriptsize$\pm$0.07}}$
& $5.50\textcolor{gray}{\text{\scriptsize$\pm$0.07}}$ & $5.48\textcolor{gray}{\text{\scriptsize$\pm$0.08}}$ & $5.50\textcolor{gray}{\text{\scriptsize$\pm$0.08}}$ & $5.49\textcolor{gray}{\text{\scriptsize$\pm$0.08}}$ \\

& & RMSE & $29.22\textcolor{gray}{\text{\scriptsize$\pm$0.25}}$ & $34.12\textcolor{gray}{\text{\scriptsize$\pm$0.16}}$ & $39.20\textcolor{gray}{\text{\scriptsize$\pm$0.08}}$ & $33.61\textcolor{gray}{\text{\scriptsize$\pm$0.17}}$
& $21.40\textcolor{gray}{\text{\scriptsize$\pm$0.16}}$ & $23.19\textcolor{gray}{\text{\scriptsize$\pm$0.11}}$ & $27.01\textcolor{gray}{\text{\scriptsize$\pm$0.10}}$ & $23.50\textcolor{gray}{\text{\scriptsize$\pm$0.13}}$
& $5.63\textcolor{gray}{\text{\scriptsize$\pm$0.06}}$ & $5.66\textcolor{gray}{\text{\scriptsize$\pm$0.06}}$ & $5.78\textcolor{gray}{\text{\scriptsize$\pm$0.06}}$ & $5.69\textcolor{gray}{\text{\scriptsize$\pm$0.06}}$ \\

& & MAPE (\%) & $23.40\textcolor{gray}{\text{\scriptsize$\pm$0.36}}$ & $27.29\textcolor{gray}{\text{\scriptsize$\pm$0.45}}$ & $32.20\textcolor{gray}{\text{\scriptsize$\pm$0.55}}$ & $27.21\textcolor{gray}{\text{\scriptsize$\pm$0.43}}$
& $18.78\textcolor{gray}{\text{\scriptsize$\pm$0.60}}$ & $19.83\textcolor{gray}{\text{\scriptsize$\pm$0.48}}$ & $22.46\textcolor{gray}{\text{\scriptsize$\pm$0.27}}$ & $20.12\textcolor{gray}{\text{\scriptsize$\pm$0.45}}$
& $54.65\textcolor{gray}{\text{\scriptsize$\pm$2.46}}$ & $55.00\textcolor{gray}{\text{\scriptsize$\pm$2.01}}$ & $56.39\textcolor{gray}{\text{\scriptsize$\pm$2.04}}$ & $55.18\textcolor{gray}{\text{\scriptsize$\pm$2.12}}$ \\

\hline

\multirow{21}{*}{\begin{tabular}[c]{@{}c@{}}\textbf{Architecture-}\\\textbf{Based}\end{tabular}}
& \multirow{3}{*}{\textit{TrafficStream}}
& MAE & $18.08\textcolor{gray}{\text{\scriptsize$\pm$0.23}}$ & $21.03\textcolor{gray}{\text{\scriptsize$\pm$0.24}}$ & $24.37\textcolor{gray}{\text{\scriptsize$\pm$0.25}}$ & $20.83\textcolor{gray}{\text{\scriptsize$\pm$0.23}}$
& $13.40\textcolor{gray}{\text{\scriptsize$\pm$0.17}}$ & $14.42\textcolor{gray}{\text{\scriptsize$\pm$0.14}}$ & $16.67\textcolor{gray}{\text{\scriptsize$\pm$0.14}}$ & $14.62\textcolor{gray}{\text{\scriptsize$\pm$0.15}}$
& $5.45\textcolor{gray}{\text{\scriptsize$\pm$0.10}}$ & $5.44\textcolor{gray}{\text{\scriptsize$\pm$0.12}}$ & $5.44\textcolor{gray}{\text{\scriptsize$\pm$0.13}}$ & $5.44\textcolor{gray}{\text{\scriptsize$\pm$0.11}}$ \\

& & RMSE & $28.86\textcolor{gray}{\text{\scriptsize$\pm$0.71}}$ & $33.91\textcolor{gray}{\text{\scriptsize$\pm$0.57}}$ & $39.11\textcolor{gray}{\text{\scriptsize$\pm$0.46}}$ & $33.36\textcolor{gray}{\text{\scriptsize$\pm$0.59}}$
& $21.77\textcolor{gray}{\text{\scriptsize$\pm$0.17}}$ & $23.63\textcolor{gray}{\text{\scriptsize$\pm$0.13}}$ & $27.46\textcolor{gray}{\text{\scriptsize$\pm$0.11}}$ & $23.91\textcolor{gray}{\text{\scriptsize$\pm$0.14}}$
& $5.58\textcolor{gray}{\text{\scriptsize$\pm$0.09}}$ & $5.61\textcolor{gray}{\text{\scriptsize$\pm$0.11}}$ & $5.70\textcolor{gray}{\text{\scriptsize$\pm$0.12}}$ & $5.62\textcolor{gray}{\text{\scriptsize$\pm$0.11}}$ \\

& & MAPE (\%) & $22.80\textcolor{gray}{\text{\scriptsize$\pm$0.84}}$ & $27.03\textcolor{gray}{\text{\scriptsize$\pm$0.74}}$ & $32.24\textcolor{gray}{\text{\scriptsize$\pm$0.62}}$ & $26.91\textcolor{gray}{\text{\scriptsize$\pm$0.74}}$
& $19.28\textcolor{gray}{\text{\scriptsize$\pm$0.12}}$ & $20.63\textcolor{gray}{\text{\scriptsize$\pm$0.16}}$ & $24.04\textcolor{gray}{\text{\scriptsize$\pm$0.41}}$ & $20.99\textcolor{gray}{\text{\scriptsize$\pm$0.16}}$
& $52.02\textcolor{gray}{\text{\scriptsize$\pm$0.22}}$ & $52.43\textcolor{gray}{\text{\scriptsize$\pm$0.25}}$ & $53.85\textcolor{gray}{\text{\scriptsize$\pm$0.25}}$ & $52.63\textcolor{gray}{\text{\scriptsize$\pm$0.11}}$ \\

\cmidrule{2-15}

& \multirow{3}{*}{\textit{ST-LoRA}}
& MAE & $18.41\textcolor{gray}{\text{\scriptsize$\pm$0.07}}$ & $21.26\textcolor{gray}{\text{\scriptsize$\pm$0.08}}$ & $24.50\textcolor{gray}{\text{\scriptsize$\pm$0.11}}$ & $21.08\textcolor{gray}{\text{\scriptsize$\pm$0.08}}$
& $13.06\textcolor{gray}{\text{\scriptsize$\pm$0.12}}$ & $14.08\textcolor{gray}{\text{\scriptsize$\pm$0.09}}$ & $16.30\textcolor{gray}{\text{\scriptsize$\pm$0.08}}$ & $14.27\textcolor{gray}{\text{\scriptsize$\pm$0.09}}$
& $5.40\textcolor{gray}{\text{\scriptsize$\pm$0.10}}$ & $5.40\textcolor{gray}{\text{\scriptsize$\pm$0.10}}$ & $5.41\textcolor{gray}{\text{\scriptsize$\pm$0.09}}$ & $5.40\textcolor{gray}{\text{\scriptsize$\pm$0.10}}$ \\

& & RMSE & $28.74\textcolor{gray}{\text{\scriptsize$\pm$0.31}}$ & $33.81\textcolor{gray}{\text{\scriptsize$\pm$0.22}}$ & $39.02\textcolor{gray}{\text{\scriptsize$\pm$0.15}}$ & $33.26\textcolor{gray}{\text{\scriptsize$\pm$0.24}}$
& $21.08\textcolor{gray}{\text{\scriptsize$\pm$0.20}}$ & $22.97\textcolor{gray}{\text{\scriptsize$\pm$0.16}}$ & $26.80\textcolor{gray}{\text{\scriptsize$\pm$0.15}}$ & $23.24\textcolor{gray}{\text{\scriptsize$\pm$0.17}}$
& $5.54\textcolor{gray}{\text{\scriptsize$\pm$0.09}}$ & $5.59\textcolor{gray}{\text{\scriptsize$\pm$0.09}}$ & $5.69\textcolor{gray}{\text{\scriptsize$\pm$0.08}}$ & $5.60\textcolor{gray}{\text{\scriptsize$\pm$0.09}}$ \\

& & MAPE (\%) & $23.09\textcolor{gray}{\text{\scriptsize$\pm$0.41}}$ & $27.03\textcolor{gray}{\text{\scriptsize$\pm$0.37}}$ & $31.97\textcolor{gray}{\text{\scriptsize$\pm$0.36}}$ & $26.94\textcolor{gray}{\text{\scriptsize$\pm$0.38}}$
& $18.86\textcolor{gray}{\text{\scriptsize$\pm$0.48}}$ & $20.11\textcolor{gray}{\text{\scriptsize$\pm$0.50}}$ & $23.14\textcolor{gray}{\text{\scriptsize$\pm$0.84}}$ & $20.40\textcolor{gray}{\text{\scriptsize$\pm$0.59}}$
& $53.53\textcolor{gray}{\text{\scriptsize$\pm$3.83}}$ & $54.13\textcolor{gray}{\text{\scriptsize$\pm$3.82}}$ & $55.42\textcolor{gray}{\text{\scriptsize$\pm$3.71}}$ & $54.24\textcolor{gray}{\text{\scriptsize$\pm$3.77}}$ \\

\cmidrule{2-15}

& \multirow{3}{*}{\textit{STKEC}}
& MAE & $18.75\textcolor{gray}{\text{\scriptsize$\pm$0.99}}$ & $21.52\textcolor{gray}{\text{\scriptsize$\pm$0.73}}$ & $24.65\textcolor{gray}{\text{\scriptsize$\pm$0.47}}$ & $21.34\textcolor{gray}{\text{\scriptsize$\pm$0.77}}$
& $13.49\textcolor{gray}{\text{\scriptsize$\pm$0.18}}$ & $14.52\textcolor{gray}{\text{\scriptsize$\pm$0.12}}$ & $16.74\textcolor{gray}{\text{\scriptsize$\pm$0.10}}$ & $14.72\textcolor{gray}{\text{\scriptsize$\pm$0.13}}$
& $5.34\textcolor{gray}{\text{\scriptsize$\pm$0.02}}$ & $5.33\textcolor{gray}{\text{\scriptsize$\pm$0.06}}$ & $5.35\textcolor{gray}{\text{\scriptsize$\pm$0.09}}$ & $5.33\textcolor{gray}{\text{\scriptsize$\pm$0.03}}$ \\

& & RMSE & $29.31\textcolor{gray}{\text{\scriptsize$\pm$0.90}}$ & $34.27\textcolor{gray}{\text{\scriptsize$\pm$0.66}}$ & $39.35\textcolor{gray}{\text{\scriptsize$\pm$0.49}}$ & $33.73\textcolor{gray}{\text{\scriptsize$\pm$0.72}}$
& $22.04\textcolor{gray}{\text{\scriptsize$\pm$0.28}}$ & $23.88\textcolor{gray}{\text{\scriptsize$\pm$0.16}}$ & $27.67\textcolor{gray}{\text{\scriptsize$\pm$0.13}}$ & $24.17\textcolor{gray}{\text{\scriptsize$\pm$0.19}}$
& $5.48\textcolor{gray}{\text{\scriptsize$\pm$0.03}}$ & $5.52\textcolor{gray}{\text{\scriptsize$\pm$0.05}}$ & $5.64\textcolor{gray}{\text{\scriptsize$\pm$0.08}}$ & $5.53\textcolor{gray}{\text{\scriptsize$\pm$0.02}}$ \\

& & MAPE (\%) & $23.71\textcolor{gray}{\text{\scriptsize$\pm$1.37}}$ & $27.60\textcolor{gray}{\text{\scriptsize$\pm$0.91}}$ & $32.54\textcolor{gray}{\text{\scriptsize$\pm$0.54}}$ & $27.54\textcolor{gray}{\text{\scriptsize$\pm$0.98}}$
& $18.94\textcolor{gray}{\text{\scriptsize$\pm$0.30}}$ & $20.25\textcolor{gray}{\text{\scriptsize$\pm$0.37}}$ & $23.33\textcolor{gray}{\text{\scriptsize$\pm$0.52}}$ & $20.55\textcolor{gray}{\text{\scriptsize$\pm$0.39}}$
& $54.15\textcolor{gray}{\text{\scriptsize$\pm$0.48}}$ & $54.12\textcolor{gray}{\text{\scriptsize$\pm$1.11}}$ & $55.31\textcolor{gray}{\text{\scriptsize$\pm$1.42}}$ & $54.40\textcolor{gray}{\text{\scriptsize$\pm$0.62}}$ \\

\cmidrule{2-15}

& \multirow{3}{*}{\textit{EAC}}
& MAE & $19.21\textcolor{gray}{\text{\scriptsize$\pm$1.31}}$ & $21.83\textcolor{gray}{\text{\scriptsize$\pm$1.11}}$ & $24.82\textcolor{gray}{\text{\scriptsize$\pm$0.89}}$ & $21.67\textcolor{gray}{\text{\scriptsize$\pm$1.13}}$
& $12.69\textcolor{gray}{\text{\scriptsize$\pm$0.09}}$ & $13.42\textcolor{gray}{\text{\scriptsize$\pm$0.10}}$ & $14.83\textcolor{gray}{\text{\scriptsize$\pm$0.11}}$ & $13.51\textcolor{gray}{\text{\scriptsize$\pm$0.10}}$
& $5.23\textcolor{gray}{\text{\scriptsize$\pm$0.19}}$ & $5.24\textcolor{gray}{\text{\scriptsize$\pm$0.20}}$ & $5.27\textcolor{gray}{\text{\scriptsize$\pm$0.20}}$ & $5.25\textcolor{gray}{\text{\scriptsize$\pm$0.20}}$ \\

& & RMSE & $29.74\textcolor{gray}{\text{\scriptsize$\pm$1.84}}$ & $34.48\textcolor{gray}{\text{\scriptsize$\pm$1.61}}$ & $39.37\textcolor{gray}{\text{\scriptsize$\pm$1.32}}$ & $33.97\textcolor{gray}{\text{\scriptsize$\pm$1.63}}$
& $20.21\textcolor{gray}{\text{\scriptsize$\pm$0.13}}$ & $21.56\textcolor{gray}{\text{\scriptsize$\pm$0.13}}$ & $23.92\textcolor{gray}{\text{\scriptsize$\pm$0.15}}$ & $21.66\textcolor{gray}{\text{\scriptsize$\pm$0.13}}$
& $5.38\textcolor{gray}{\text{\scriptsize$\pm$0.19}}$ & $5.45\textcolor{gray}{\text{\scriptsize$\pm$0.19}}$ & $5.58\textcolor{gray}{\text{\scriptsize$\pm$0.20}}$ & $5.46\textcolor{gray}{\text{\scriptsize$\pm$0.19}}$ \\

& & MAPE (\%) & $23.73\textcolor{gray}{\text{\scriptsize$\pm$1.20}}$ & $27.32\textcolor{gray}{\text{\scriptsize$\pm$0.82}}$ & $31.87\textcolor{gray}{\text{\scriptsize$\pm$0.45}}$ & $27.27\textcolor{gray}{\text{\scriptsize$\pm$0.87}}$
& $18.72\textcolor{gray}{\text{\scriptsize$\pm$0.58}}$ & $19.40\textcolor{gray}{\text{\scriptsize$\pm$0.53}}$ & $20.95\textcolor{gray}{\text{\scriptsize$\pm$0.58}}$ & $19.52\textcolor{gray}{\text{\scriptsize$\pm$0.55}}$
& $50.84\textcolor{gray}{\text{\scriptsize$\pm$3.13}}$ & $51.66\textcolor{gray}{\text{\scriptsize$\pm$3.16}}$ & $53.15\textcolor{gray}{\text{\scriptsize$\pm$3.19}}$ & $51.79\textcolor{gray}{\text{\scriptsize$\pm$3.14}}$ \\

\cmidrule{2-15}

& \multirow{3}{*}{\textit{ST-Adapter}}
& MAE & $18.12\textcolor{gray}{\text{\scriptsize$\pm$0.24}}$ & $21.08\textcolor{gray}{\text{\scriptsize$\pm$0.20}}$ & $24.38\textcolor{gray}{\text{\scriptsize$\pm$0.18}}$ & $20.86\textcolor{gray}{\text{\scriptsize$\pm$0.20}}$
& $13.04\textcolor{gray}{\text{\scriptsize$\pm$0.03}}$ & $14.07\textcolor{gray}{\text{\scriptsize$\pm$0.01}}$ & $16.22\textcolor{gray}{\text{\scriptsize$\pm$0.02}}$ & $14.24\textcolor{gray}{\text{\scriptsize$\pm$0.01}}$
& $5.32\textcolor{gray}{\text{\scriptsize$\pm$0.06}}$ & $5.31\textcolor{gray}{\text{\scriptsize$\pm$0.05}}$ & $5.31\textcolor{gray}{\text{\scriptsize$\pm$0.04}}$ & $5.32\textcolor{gray}{\text{\scriptsize$\pm$0.05}}$ \\

& & RMSE & $28.32\textcolor{gray}{\text{\scriptsize$\pm$0.29}}$ & $33.54\textcolor{gray}{\text{\scriptsize$\pm$0.19}}$ & $38.84\textcolor{gray}{\text{\scriptsize$\pm$0.07}}$ & $32.95\textcolor{gray}{\text{\scriptsize$\pm$0.20}}$
& $20.94\textcolor{gray}{\text{\scriptsize$\pm$0.06}}$ & $22.84\textcolor{gray}{\text{\scriptsize$\pm$0.01}}$ & $26.57\textcolor{gray}{\text{\scriptsize$\pm$0.04}}$ & $23.08\textcolor{gray}{\text{\scriptsize$\pm$0.02}}$
& $5.46\textcolor{gray}{\text{\scriptsize$\pm$0.05}}$ & $5.49\textcolor{gray}{\text{\scriptsize$\pm$0.05}}$ & $5.58\textcolor{gray}{\text{\scriptsize$\pm$0.04}}$ & $5.50\textcolor{gray}{\text{\scriptsize$\pm$0.05}}$ \\

& & MAPE (\%) & $23.06\textcolor{gray}{\text{\scriptsize$\pm$0.74}}$ & $27.02\textcolor{gray}{\text{\scriptsize$\pm$0.57}}$ & $31.98\textcolor{gray}{\text{\scriptsize$\pm$0.47}}$ & $26.92\textcolor{gray}{\text{\scriptsize$\pm$0.61}}$
& $20.03\textcolor{gray}{\text{\scriptsize$\pm$0.09}}$ & $21.23\textcolor{gray}{\text{\scriptsize$\pm$0.09}}$ & $23.99\textcolor{gray}{\text{\scriptsize$\pm$0.11}}$ & $21.49\textcolor{gray}{\text{\scriptsize$\pm$0.08}}$
& $52.36\textcolor{gray}{\text{\scriptsize$\pm$0.97}}$ & $53.18\textcolor{gray}{\text{\scriptsize$\pm$1.19}}$ & $54.67\textcolor{gray}{\text{\scriptsize$\pm$1.46}}$ & $53.28\textcolor{gray}{\text{\scriptsize$\pm$1.20}}$ \\

\cmidrule{2-15}

& \multirow{3}{*}{\textit{GraphPro}}
& MAE & $18.41\textcolor{gray}{\text{\scriptsize$\pm$0.13}}$ & $21.20\textcolor{gray}{\text{\scriptsize$\pm$0.09}}$ & $24.40\textcolor{gray}{\text{\scriptsize$\pm$0.08}}$ & $21.03\textcolor{gray}{\text{\scriptsize$\pm$0.10}}$
& $12.90\textcolor{gray}{\text{\scriptsize$\pm$0.08}}$ & $13.97\textcolor{gray}{\text{\scriptsize$\pm$0.07}}$ & $16.17\textcolor{gray}{\text{\scriptsize$\pm$0.07}}$ & $14.14\textcolor{gray}{\text{\scriptsize$\pm$0.07}}$
& $5.47\textcolor{gray}{\text{\scriptsize$\pm$0.10}}$ & $5.52\textcolor{gray}{\text{\scriptsize$\pm$0.04}}$ & $5.54\textcolor{gray}{\text{\scriptsize$\pm$0.01}}$ & $5.51\textcolor{gray}{\text{\scriptsize$\pm$0.05}}$ \\

& & RMSE & $28.65\textcolor{gray}{\text{\scriptsize$\pm$0.14}}$ & $33.68\textcolor{gray}{\text{\scriptsize$\pm$0.15}}$ & $38.86\textcolor{gray}{\text{\scriptsize$\pm$0.14}}$ & $33.13\textcolor{gray}{\text{\scriptsize$\pm$0.15}}$
& $20.92\textcolor{gray}{\text{\scriptsize$\pm$0.16}}$ & $22.87\textcolor{gray}{\text{\scriptsize$\pm$0.13}}$ & $26.65\textcolor{gray}{\text{\scriptsize$\pm$0.12}}$ & $23.11\textcolor{gray}{\text{\scriptsize$\pm$0.14}}$
& $5.63\textcolor{gray}{\text{\scriptsize$\pm$0.08}}$ & $5.72\textcolor{gray}{\text{\scriptsize$\pm$0.05}}$ & $5.84\textcolor{gray}{\text{\scriptsize$\pm$0.03}}$ & $5.72\textcolor{gray}{\text{\scriptsize$\pm$0.05}}$ \\

& & MAPE (\%) & $22.88\textcolor{gray}{\text{\scriptsize$\pm$0.09}}$ & $26.78\textcolor{gray}{\text{\scriptsize$\pm$0.11}}$ & $31.77\textcolor{gray}{\text{\scriptsize$\pm$0.14}}$ & $26.73\textcolor{gray}{\text{\scriptsize$\pm$0.11}}$
& $18.86\textcolor{gray}{\text{\scriptsize$\pm$0.84}}$ & $19.96\textcolor{gray}{\text{\scriptsize$\pm$0.75}}$ & $22.69\textcolor{gray}{\text{\scriptsize$\pm$0.56}}$ & $20.25\textcolor{gray}{\text{\scriptsize$\pm$0.73}}$
& $52.39\textcolor{gray}{\text{\scriptsize$\pm$2.27}}$ & $53.15\textcolor{gray}{\text{\scriptsize$\pm$2.10}}$ & $54.55\textcolor{gray}{\text{\scriptsize$\pm$1.89}}$ & $53.24\textcolor{gray}{\text{\scriptsize$\pm$2.09}}$ \\

\cmidrule{2-15}

& \multirow{3}{*}{\textit{PECPM}}
& MAE & $18.90\textcolor{gray}{\text{\scriptsize$\pm$0.39}}$ & $21.70\textcolor{gray}{\text{\scriptsize$\pm$0.39}}$ & $24.88\textcolor{gray}{\text{\scriptsize$\pm$0.39}}$ & $21.51\textcolor{gray}{\text{\scriptsize$\pm$0.39}}$
& $13.58\textcolor{gray}{\text{\scriptsize$\pm$0.77}}$ & $14.54\textcolor{gray}{\text{\scriptsize$\pm$0.67}}$ & $16.67\textcolor{gray}{\text{\scriptsize$\pm$0.61}}$ & $14.73\textcolor{gray}{\text{\scriptsize$\pm$0.69}}$
& $5.53\textcolor{gray}{\text{\scriptsize$\pm$0.07}}$ & $5.52\textcolor{gray}{\text{\scriptsize$\pm$0.08}}$ & $5.53\textcolor{gray}{\text{\scriptsize$\pm$0.08}}$ & $5.53\textcolor{gray}{\text{\scriptsize$\pm$0.07}}$ \\

& & RMSE & $29.70\textcolor{gray}{\text{\scriptsize$\pm$0.64}}$ & $34.61\textcolor{gray}{\text{\scriptsize$\pm$0.52}}$ & $39.61\textcolor{gray}{\text{\scriptsize$\pm$0.41}}$ & $34.06\textcolor{gray}{\text{\scriptsize$\pm$0.55}}$
& $22.61\textcolor{gray}{\text{\scriptsize$\pm$2.20}}$ & $24.35\textcolor{gray}{\text{\scriptsize$\pm$1.95}}$ & $27.90\textcolor{gray}{\text{\scriptsize$\pm$1.64}}$ & $24.61\textcolor{gray}{\text{\scriptsize$\pm$1.96}}$
& $5.72\textcolor{gray}{\text{\scriptsize$\pm$0.11}}$ & $5.76\textcolor{gray}{\text{\scriptsize$\pm$0.11}}$ & $5.86\textcolor{gray}{\text{\scriptsize$\pm$0.12}}$ & $5.78\textcolor{gray}{\text{\scriptsize$\pm$0.12}}$ \\

& & MAPE (\%) & $23.51\textcolor{gray}{\text{\scriptsize$\pm$1.10}}$ & $27.40\textcolor{gray}{\text{\scriptsize$\pm$0.97}}$ & $32.33\textcolor{gray}{\text{\scriptsize$\pm$0.83}}$ & $27.33\textcolor{gray}{\text{\scriptsize$\pm$0.97}}$
& $19.16\textcolor{gray}{\text{\scriptsize$\pm$0.12}}$ & $20.26\textcolor{gray}{\text{\scriptsize$\pm$0.13}}$ & $23.11\textcolor{gray}{\text{\scriptsize$\pm$0.34}}$ & $20.56\textcolor{gray}{\text{\scriptsize$\pm$0.15}}$
& $51.13\textcolor{gray}{\text{\scriptsize$\pm$4.72}}$ & $51.59\textcolor{gray}{\text{\scriptsize$\pm$4.53}}$ & $52.88\textcolor{gray}{\text{\scriptsize$\pm$4.20}}$ & $51.77\textcolor{gray}{\text{\scriptsize$\pm$4.52}}$ \\

\hline

\multirow{3}{*}{\begin{tabular}[c]{@{}c@{}}\textbf{Regularization-}\\\textbf{based}\end{tabular}}
& \multirow{3}{*}{\textit{EWC}}
& MAE & $18.90\textcolor{gray}{\text{\scriptsize$\pm$0.46}}$ & $21.71\textcolor{gray}{\text{\scriptsize$\pm$0.40}}$ & $24.90\textcolor{gray}{\text{\scriptsize$\pm$0.33}}$ & $21.53\textcolor{gray}{\text{\scriptsize$\pm$0.40}}$
& $13.83\textcolor{gray}{\text{\scriptsize$\pm$0.08}}$ & $14.86\textcolor{gray}{\text{\scriptsize$\pm$0.05}}$ & $17.16\textcolor{gray}{\text{\scriptsize$\pm$0.06}}$ & $15.07\textcolor{gray}{\text{\scriptsize$\pm$0.06}}$
& $5.40\textcolor{gray}{\text{\scriptsize$\pm$0.09}}$ & $5.40\textcolor{gray}{\text{\scriptsize$\pm$0.09}}$ & $5.41\textcolor{gray}{\text{\scriptsize$\pm$0.09}}$ & $5.40\textcolor{gray}{\text{\scriptsize$\pm$0.09}}$ \\

& & RMSE & $29.75\textcolor{gray}{\text{\scriptsize$\pm$1.02}}$ & $34.67\textcolor{gray}{\text{\scriptsize$\pm$0.80}}$ & $39.77\textcolor{gray}{\text{\scriptsize$\pm$0.61}}$ & $34.15\textcolor{gray}{\text{\scriptsize$\pm$0.84}}$
& $22.96\textcolor{gray}{\text{\scriptsize$\pm$0.19}}$ & $24.81\textcolor{gray}{\text{\scriptsize$\pm$0.15}}$ & $28.70\textcolor{gray}{\text{\scriptsize$\pm$0.19}}$ & $25.12\textcolor{gray}{\text{\scriptsize$\pm$0.17}}$
& $5.54\textcolor{gray}{\text{\scriptsize$\pm$0.08}}$ & $5.58\textcolor{gray}{\text{\scriptsize$\pm$0.08}}$ & $5.69\textcolor{gray}{\text{\scriptsize$\pm$0.07}}$ & $5.60\textcolor{gray}{\text{\scriptsize$\pm$0.08}}$ \\

& & MAPE (\%) & $23.68\textcolor{gray}{\text{\scriptsize$\pm$0.55}}$ & $27.63\textcolor{gray}{\text{\scriptsize$\pm$0.43}}$ & $32.55\textcolor{gray}{\text{\scriptsize$\pm$0.31}}$ & $27.54\textcolor{gray}{\text{\scriptsize$\pm$0.42}}$
& $19.04\textcolor{gray}{\text{\scriptsize$\pm$0.57}}$ & $20.22\textcolor{gray}{\text{\scriptsize$\pm$0.59}}$ & $23.28\textcolor{gray}{\text{\scriptsize$\pm$0.68}}$ & $20.56\textcolor{gray}{\text{\scriptsize$\pm$0.61}}$
& $49.59\textcolor{gray}{\text{\scriptsize$\pm$1.50}}$ & $50.20\textcolor{gray}{\text{\scriptsize$\pm$1.56}}$ & $51.51\textcolor{gray}{\text{\scriptsize$\pm$1.63}}$ & $50.31\textcolor{gray}{\text{\scriptsize$\pm$1.56}}$ \\

\hline

\multirow{3}{*}{\begin{tabular}[c]{@{}c@{}}\textbf{Replay-}\\\textbf{based}\end{tabular}}
& \multirow{3}{*}{\textit{Replay}}
& MAE & $\textcolor{red}{\mathbf{17.97}}\textcolor{gray}{\text{\scriptsize$\pm$0.28}}$ & $20.91\textcolor{gray}{\text{\scriptsize$\pm$0.24}}$ & $24.22\textcolor{gray}{\text{\scriptsize$\pm$0.18}}$ & $\textcolor{red}{\mathbf{20.71}}\textcolor{gray}{\text{\scriptsize$\pm$0.23}}$
& $13.61\textcolor{gray}{\text{\scriptsize$\pm$0.09}}$ & $14.66\textcolor{gray}{\text{\scriptsize$\pm$0.08}}$ & $16.98\textcolor{gray}{\text{\scriptsize$\pm$0.11}}$ & $14.88\textcolor{gray}{\text{\scriptsize$\pm$0.09}}$
& $5.43\textcolor{gray}{\text{\scriptsize$\pm$0.02}}$ & $5.42\textcolor{gray}{\text{\scriptsize$\pm$0.03}}$ & $5.41\textcolor{gray}{\text{\scriptsize$\pm$0.05}}$ & $5.42\textcolor{gray}{\text{\scriptsize$\pm$0.03}}$ \\

& & RMSE & $28.63\textcolor{gray}{\text{\scriptsize$\pm$0.75}}$ & $33.69\textcolor{gray}{\text{\scriptsize$\pm$0.56}}$ & $38.89\textcolor{gray}{\text{\scriptsize$\pm$0.40}}$ & $33.14\textcolor{gray}{\text{\scriptsize$\pm$0.60}}$
& $22.05\textcolor{gray}{\text{\scriptsize$\pm$0.17}}$ & $23.91\textcolor{gray}{\text{\scriptsize$\pm$0.14}}$ & $27.85\textcolor{gray}{\text{\scriptsize$\pm$0.17}}$ & $24.24\textcolor{gray}{\text{\scriptsize$\pm$0.16}}$
& $5.57\textcolor{gray}{\text{\scriptsize$\pm$0.02}}$ & $5.60\textcolor{gray}{\text{\scriptsize$\pm$0.03}}$ & $5.69\textcolor{gray}{\text{\scriptsize$\pm$0.05}}$ & $5.61\textcolor{gray}{\text{\scriptsize$\pm$0.04}}$ \\

& & MAPE (\%) & $\textcolor{red}{\mathbf{22.77}}\textcolor{gray}{\text{\scriptsize$\pm$0.39}}$ & $26.90\textcolor{gray}{\text{\scriptsize$\pm$0.26}}$ & $32.06\textcolor{gray}{\text{\scriptsize$\pm$0.12}}$ & $26.80\textcolor{gray}{\text{\scriptsize$\pm$0.27}}$
& $20.53\textcolor{gray}{\text{\scriptsize$\pm$0.11}}$ & $21.58\textcolor{gray}{\text{\scriptsize$\pm$0.37}}$ & $24.39\textcolor{gray}{\text{\scriptsize$\pm$0.49}}$ & $21.90\textcolor{gray}{\text{\scriptsize$\pm$0.30}}$
& $49.88\textcolor{gray}{\text{\scriptsize$\pm$1.97}}$ & $50.45\textcolor{gray}{\text{\scriptsize$\pm$1.99}}$ & $51.64\textcolor{gray}{\text{\scriptsize$\pm$2.12}}$ & $50.54\textcolor{gray}{\text{\scriptsize$\pm$2.03}}$ \\

\hline

\multirow{3}{*}{\begin{tabular}[c]{@{}c@{}}\textbf{Retrieval-}\\\textbf{based}\end{tabular}}
& \multirow{3}{*}{\textit{\methodname}}
& MAE & ${18.59}\textcolor{gray}{\text{\scriptsize$\pm$0.62}}$ & $\textcolor{red}{\mathbf{20.91}}\textcolor{gray}{\text{\scriptsize$\pm$0.60}}$ & $\textcolor{red}{\mathbf{23.69}}\textcolor{gray}{\text{\scriptsize$\pm$0.53}}$ & ${20.81}\textcolor{gray}{\text{\scriptsize$\pm$0.59}}$
& $\textcolor{red}{\mathbf{12.31}}\textcolor{gray}{\text{\scriptsize$\pm$0.02}}$ & $\textcolor{red}{\mathbf{13.21}}\textcolor{gray}{\text{\scriptsize$\pm$0.04}}$ & $\textcolor{red}{\mathbf{15.23}}\textcolor{gray}{\text{\scriptsize$\pm$0.04}}$ & $\textcolor{red}{\mathbf{13.39}}\textcolor{gray}{\text{\scriptsize$\pm$0.03}}$
& $\textcolor{red}{\mathbf{4.96}}\textcolor{gray}{\text{\scriptsize$\pm$0.10}}$ & $\textcolor{red}{\mathbf{4.97}}\textcolor{gray}{\text{\scriptsize$\pm$0.11}}$ & $\textcolor{red}{\mathbf{5.00}}\textcolor{gray}{\text{\scriptsize$\pm$0.10}}$ & $\textcolor{red}{\mathbf{4.98}}\textcolor{gray}{\text{\scriptsize$\pm$0.10}}$ \\

& & RMSE & $\textcolor{red}{\mathbf{27.66}}\textcolor{gray}{\text{\scriptsize$\pm$1.06}}$ & $\textcolor{red}{\mathbf{31.67}}\textcolor{gray}{\text{\scriptsize$\pm$0.92}}$ & $\textcolor{red}{\mathbf{36.11}}\textcolor{gray}{\text{\scriptsize$\pm$0.81}}$ & $\textcolor{red}{\mathbf{31.36}}\textcolor{gray}{\text{\scriptsize$\pm$0.94}}$
& $\textcolor{red}{\mathbf{18.81}}\textcolor{gray}{\text{\scriptsize$\pm$0.05}}$ & $\textcolor{red}{\mathbf{20.36}}\textcolor{gray}{\text{\scriptsize$\pm$0.07}}$ & $\textcolor{red}{\mathbf{23.76}}\textcolor{gray}{\text{\scriptsize$\pm$0.07}}$ & $\textcolor{red}{\mathbf{20.66}}\textcolor{gray}{\text{\scriptsize$\pm$0.07}}$
& $\textcolor{red}{\mathbf{5.17}}\textcolor{gray}{\text{\scriptsize$\pm$0.08}}$ & $\textcolor{red}{\mathbf{5.22}}\textcolor{gray}{\text{\scriptsize$\pm$0.09}}$ & $\textcolor{red}{\mathbf{5.34}}\textcolor{gray}{\text{\scriptsize$\pm$0.08}}$ & $\textcolor{red}{\mathbf{5.23}}\textcolor{gray}{\text{\scriptsize$\pm$0.08}}$ \\

& & MAPE (\%) & ${23.53}\textcolor{gray}{\text{\scriptsize$\pm$0.59}}$ & $\textcolor{red}{\mathbf{26.52}}\textcolor{gray}{\text{\scriptsize$\pm$0.45}}$ & $\textcolor{red}{\mathbf{30.36}}\textcolor{gray}{\text{\scriptsize$\pm$0.30}}$ & $\textcolor{red}{\mathbf{26.48}}\textcolor{gray}{\text{\scriptsize$\pm$0.44}}$
& $\textcolor{red}{\mathbf{16.94}}\textcolor{gray}{\text{\scriptsize$\pm$1.02}}$ & $\textcolor{red}{\mathbf{17.98}}\textcolor{gray}{\text{\scriptsize$\pm$0.83}}$ & $\textcolor{red}{\mathbf{20.87}}\textcolor{gray}{\text{\scriptsize$\pm$0.38}}$ & $\textcolor{red}{\mathbf{18.33}}\textcolor{gray}{\text{\scriptsize$\pm$0.78}}$
& $\textcolor{red}{\mathbf{42.83}}\textcolor{gray}{\text{\scriptsize$\pm$3.85}}$ & $\textcolor{red}{\mathbf{43.53}}\textcolor{gray}{\text{\scriptsize$\pm$3.72}}$ & $\textcolor{red}{\mathbf{44.89}}\textcolor{gray}{\text{\scriptsize$\pm$3.50}}$ & $\textcolor{red}{\mathbf{43.65}}\textcolor{gray}{\text{\scriptsize$\pm$3.73}}$ \\

\hline
\end{tabular}
}}
\end{small}
\vspace{-1mm}
\end{table*}

\begin{table*}[htbp]
\caption{Comparison of the overall performance of different methods (ASTGNN backbone).}
\label{overall_performance_ASTGNN}
\centering
\setlength{\tabcolsep}{0.8mm}
\renewcommand{\arraystretch}{1.03}
\begin{small}
\scalebox{0.67}{
\resizebox{1.5\linewidth}{!}{\begin{tabular}{ccccccccccccccc}
\hline
\rowcolor[gray]{0.95}
\multicolumn{3}{c}{\textbf{Datasets}}&\multicolumn{4}{c}{\textit{\textbf{Air-Stream}}}&\multicolumn{4}{c}{\textit{\textbf{PEMS-Stream}}}&\multicolumn{4}{c}{\textit{\textbf{Energy-Stream}}} \\

\hline
\rowcolor[gray]{0.95}
\textbf{Category} & \textbf{Method} & \textbf{Metric} & 3 & 6 & 12 & \textbf{Avg.} & 3 & 6 & 12 & \textbf{Avg.} & 3 & 6 & 12 & \textbf{Avg.} \\
\hline

\multirow{6}{*}{\begin{tabular}[c]{@{}c@{}}\textbf{Back-}\\\textbf{bone}\end{tabular}}
& \multirow{3}{*}{\textit{Pretrain}}
& MAE & $18.48\textcolor{gray}{\text{\scriptsize±0.69}}$ & $21.32\textcolor{gray}{\text{\scriptsize±0.49}}$ & $24.58\textcolor{gray}{\text{\scriptsize±0.41}}$ & $21.11\textcolor{gray}{\text{\scriptsize±0.55}}$ 
& $14.32\textcolor{gray}{\text{\scriptsize±0.37}}$ & $15.55\textcolor{gray}{\text{\scriptsize±0.29}}$ & $18.16\textcolor{gray}{\text{\scriptsize±0.24}}$ & $15.76\textcolor{gray}{\text{\scriptsize±0.30}}$
& $10.66\textcolor{gray}{\text{\scriptsize±0.06}}$ & $10.67\textcolor{gray}{\text{\scriptsize±0.08}}$ & $10.69\textcolor{gray}{\text{\scriptsize±0.10}}$ & $10.67\textcolor{gray}{\text{\scriptsize±0.07}}$ \\

& & RMSE & $29.35\textcolor{gray}{\text{\scriptsize±0.69}}$ & $34.06\textcolor{gray}{\text{\scriptsize±0.31}}$ & $39.12\textcolor{gray}{\text{\scriptsize±0.21}}$ & $33.58\textcolor{gray}{\text{\scriptsize±0.45}}$
& $22.43\textcolor{gray}{\text{\scriptsize±0.23}}$ & $24.80\textcolor{gray}{\text{\scriptsize±0.16}}$ & $29.48\textcolor{gray}{\text{\scriptsize±0.03}}$ & $25.12\textcolor{gray}{\text{\scriptsize±0.14}}$
& $10.86\textcolor{gray}{\text{\scriptsize±0.16}}$ & $10.91\textcolor{gray}{\text{\scriptsize±0.19}}$ & $11.04\textcolor{gray}{\text{\scriptsize±0.26}}$ & $10.93\textcolor{gray}{\text{\scriptsize±0.21}}$ \\

& & MAPE (\%) & $26.02\textcolor{gray}{\text{\scriptsize±2.94}}$ & $29.85\textcolor{gray}{\text{\scriptsize±2.14}}$ & $34.32\textcolor{gray}{\text{\scriptsize±1.21}}$ & $29.63\textcolor{gray}{\text{\scriptsize±2.19}}$
& $32.06\textcolor{gray}{\text{\scriptsize±6.21}}$ & $33.36\textcolor{gray}{\text{\scriptsize±5.42}}$ & $37.08\textcolor{gray}{\text{\scriptsize±4.89}}$ & $33.81\textcolor{gray}{\text{\scriptsize±5.52}}$
& $170.60\textcolor{gray}{\text{\scriptsize±6.09}}$ & $171.99\textcolor{gray}{\text{\scriptsize±7.13}}$ & $172.01\textcolor{gray}{\text{\scriptsize±5.81}}$ & $171.16\textcolor{gray}{\text{\scriptsize±5.95}}$ \\

\cmidrule{2-15}

& \multirow{3}{*}{\textit{Retrain}}
& MAE & $19.03\textcolor{gray}{\text{\scriptsize±0.38}}$ & $21.83\textcolor{gray}{\text{\scriptsize±0.31}}$ & $25.03\textcolor{gray}{\text{\scriptsize±0.31}}$ & $21.62\textcolor{gray}{\text{\scriptsize±0.32}}$
& $13.13\textcolor{gray}{\text{\scriptsize±0.07}}$ & $14.37\textcolor{gray}{\text{\scriptsize±0.06}}$ & $16.94\textcolor{gray}{\text{\scriptsize±0.12}}$ & $14.57\textcolor{gray}{\text{\scriptsize±0.08}}$
& $5.45\textcolor{gray}{\text{\scriptsize±0.10}}$ & $5.38\textcolor{gray}{\text{\scriptsize±0.12}}$ & $5.43\textcolor{gray}{\text{\scriptsize±0.12}}$ & $5.43\textcolor{gray}{\text{\scriptsize±0.09}}$ \\

& & RMSE & $29.88\textcolor{gray}{\text{\scriptsize±0.66}}$ & $34.55\textcolor{gray}{\text{\scriptsize±0.46}}$ & $39.54\textcolor{gray}{\text{\scriptsize±0.34}}$ & $34.05\textcolor{gray}{\text{\scriptsize±0.50}}$
& $21.23\textcolor{gray}{\text{\scriptsize±0.12}}$ & $23.49\textcolor{gray}{\text{\scriptsize±0.10}}$ & $27.86\textcolor{gray}{\text{\scriptsize±0.19}}$ & $23.77\textcolor{gray}{\text{\scriptsize±0.13}}$
& $5.63\textcolor{gray}{\text{\scriptsize±0.07}}$ & $5.62\textcolor{gray}{\text{\scriptsize±0.08}}$ & $5.79\textcolor{gray}{\text{\scriptsize±0.08}}$ & $5.69\textcolor{gray}{\text{\scriptsize±0.04}}$ \\

& & MAPE (\%) & $25.59\textcolor{gray}{\text{\scriptsize±0.68}}$ & $29.35\textcolor{gray}{\text{\scriptsize±0.58}}$ & $33.80\textcolor{gray}{\text{\scriptsize±0.50}}$ & $29.14\textcolor{gray}{\text{\scriptsize±0.55}}$
& $18.78\textcolor{gray}{\text{\scriptsize±1.18}}$ & $20.45\textcolor{gray}{\text{\scriptsize±1.15}}$ & $24.53\textcolor{gray}{\text{\scriptsize±1.17}}$ & $20.88\textcolor{gray}{\text{\scriptsize±1.13}}$
& $57.86\textcolor{gray}{\text{\scriptsize±7.56}}$ & $58.05\textcolor{gray}{\text{\scriptsize±6.98}}$ & $59.28\textcolor{gray}{\text{\scriptsize±6.42}}$ & $58.39\textcolor{gray}{\text{\scriptsize±7.00}}$ \\

\hline

\multirow{21}{*}{\begin{tabular}[c]{@{}c@{}}\textbf{Architecture-}\\\textbf{Based}\end{tabular}}
& \multirow{3}{*}{\textit{TrafficStream}}
& MAE & $18.71\textcolor{gray}{\text{\scriptsize±0.94}}$ & $21.69\textcolor{gray}{\text{\scriptsize±0.69}}$ & $25.00\textcolor{gray}{\text{\scriptsize±0.43}}$ & $21.43\textcolor{gray}{\text{\scriptsize±0.72}}$
& $12.98\textcolor{gray}{\text{\scriptsize±0.03}}$ & $14.29\textcolor{gray}{\text{\scriptsize±0.08}}$ & $16.98\textcolor{gray}{\text{\scriptsize±0.20}}$ & $14.50\textcolor{gray}{\text{\scriptsize±0.09}}$
& $5.47\textcolor{gray}{\text{\scriptsize±0.01}}$ & $5.47\textcolor{gray}{\text{\scriptsize±0.01}}$ & $5.50\textcolor{gray}{\text{\scriptsize±0.05}}$ & $5.48\textcolor{gray}{\text{\scriptsize±0.02}}$ \\

& & RMSE & $29.04\textcolor{gray}{\text{\scriptsize±1.01}}$ & $34.21\textcolor{gray}{\text{\scriptsize±0.72}}$ & $39.51\textcolor{gray}{\text{\scriptsize±0.45}}$ & $33.57\textcolor{gray}{\text{\scriptsize±0.77}}$
& $21.07\textcolor{gray}{\text{\scriptsize±0.09}}$ & $23.51\textcolor{gray}{\text{\scriptsize±0.18}}$ & $28.20\textcolor{gray}{\text{\scriptsize±0.41}}$ & $23.81\textcolor{gray}{\text{\scriptsize±0.20}}$
& $5.60\textcolor{gray}{\text{\scriptsize±0.00}}$ & $5.64\textcolor{gray}{\text{\scriptsize±0.01}}$ & $5.78\textcolor{gray}{\text{\scriptsize±0.04}}$ & $5.67\textcolor{gray}{\text{\scriptsize±0.01}}$ \\

& & MAPE (\%) & $24.29\textcolor{gray}{\text{\scriptsize±1.52}}$ & $28.26\textcolor{gray}{\text{\scriptsize±0.98}}$ & $33.10\textcolor{gray}{\text{\scriptsize±0.54}}$ & $28.09\textcolor{gray}{\text{\scriptsize±1.04}}$
& $17.43\textcolor{gray}{\text{\scriptsize±0.67}}$ & $19.21\textcolor{gray}{\text{\scriptsize±0.42}}$ & $23.51\textcolor{gray}{\text{\scriptsize±0.38}}$ & $19.65\textcolor{gray}{\text{\scriptsize±0.30}}$
& $58.91\textcolor{gray}{\text{\scriptsize±6.22}}$ & $59.53\textcolor{gray}{\text{\scriptsize±5.90}}$ & $60.87\textcolor{gray}{\text{\scriptsize±5.34}}$ & $59.62\textcolor{gray}{\text{\scriptsize±5.85}}$ \\

\cmidrule{2-15}

& \multirow{3}{*}{\textit{ST-LoRA}}
& MAE & $18.60\textcolor{gray}{\text{\scriptsize±0.50}}$ & $21.46\textcolor{gray}{\text{\scriptsize±0.38}}$ & $24.73\textcolor{gray}{\text{\scriptsize±0.31}}$ & $21.26\textcolor{gray}{\text{\scriptsize±0.41}}$
& $12.92\textcolor{gray}{\text{\scriptsize±0.05}}$ & $14.19\textcolor{gray}{\text{\scriptsize±0.11}}$ & $16.69\textcolor{gray}{\text{\scriptsize±0.27}}$ & $14.36\textcolor{gray}{\text{\scriptsize±0.13}}$
& $5.35\textcolor{gray}{\text{\scriptsize±0.23}}$ & $5.34\textcolor{gray}{\text{\scriptsize±0.25}}$ & $5.41\textcolor{gray}{\text{\scriptsize±0.18}}$ & $5.37\textcolor{gray}{\text{\scriptsize±0.20}}$ \\

& & RMSE & $29.25\textcolor{gray}{\text{\scriptsize±0.70}}$ & $34.16\textcolor{gray}{\text{\scriptsize±0.41}}$ & $39.33\textcolor{gray}{\text{\scriptsize±0.25}}$ & $33.62\textcolor{gray}{\text{\scriptsize±0.50}}$
& $20.89\textcolor{gray}{\text{\scriptsize±0.16}}$ & $23.17\textcolor{gray}{\text{\scriptsize±0.28}}$ & $27.42\textcolor{gray}{\text{\scriptsize±0.64}}$ & $23.41\textcolor{gray}{\text{\scriptsize±0.33}}$
& $5.50\textcolor{gray}{\text{\scriptsize±0.20}}$ & $5.53\textcolor{gray}{\text{\scriptsize±0.21}}$ & $5.72\textcolor{gray}{\text{\scriptsize±0.13}}$ & $5.59\textcolor{gray}{\text{\scriptsize±0.16}}$ \\

& & MAPE (\%) & $23.69\textcolor{gray}{\text{\scriptsize±0.88}}$ & $27.71\textcolor{gray}{\text{\scriptsize±0.67}}$ & $32.58\textcolor{gray}{\text{\scriptsize±0.46}}$ & $27.55\textcolor{gray}{\text{\scriptsize±0.69}}$
& $18.06\textcolor{gray}{\text{\scriptsize±0.50}}$ & $19.92\textcolor{gray}{\text{\scriptsize±0.47}}$ & $24.48\textcolor{gray}{\text{\scriptsize±0.91}}$ & $20.40\textcolor{gray}{\text{\scriptsize±0.43}}$
& $49.91\textcolor{gray}{\text{\scriptsize±1.86}}$ & $50.77\textcolor{gray}{\text{\scriptsize±1.85}}$ & $52.62\textcolor{gray}{\text{\scriptsize±1.63}}$ & $50.98\textcolor{gray}{\text{\scriptsize±1.73}}$ \\

\cmidrule{2-15}

& \multirow{3}{*}{\textit{STKEC}}
& MAE & $19.02\textcolor{gray}{\text{\scriptsize±0.33}}$ & $21.86\textcolor{gray}{\text{\scriptsize±0.31}}$ & $25.09\textcolor{gray}{\text{\scriptsize±0.27}}$ & $21.66\textcolor{gray}{\text{\scriptsize±0.31}}$
& $13.29\textcolor{gray}{\text{\scriptsize±0.12}}$ & $14.48\textcolor{gray}{\text{\scriptsize±0.06}}$ & $17.15\textcolor{gray}{\text{\scriptsize±0.06}}$ & $14.73\textcolor{gray}{\text{\scriptsize±0.09}}$
& $5.32\textcolor{gray}{\text{\scriptsize±0.13}}$ & $5.32\textcolor{gray}{\text{\scriptsize±0.12}}$ & $5.35\textcolor{gray}{\text{\scriptsize±0.12}}$ & $5.32\textcolor{gray}{\text{\scriptsize±0.12}}$ \\

& & RMSE & $29.77\textcolor{gray}{\text{\scriptsize±0.33}}$ & $34.75\textcolor{gray}{\text{\scriptsize±0.24}}$ & $39.86\textcolor{gray}{\text{\scriptsize±0.18}}$ & $34.19\textcolor{gray}{\text{\scriptsize±0.26}}$
& $21.63\textcolor{gray}{\text{\scriptsize±0.19}}$ & $23.86\textcolor{gray}{\text{\scriptsize±0.14}}$ & $28.44\textcolor{gray}{\text{\scriptsize±0.24}}$ & $24.21\textcolor{gray}{\text{\scriptsize±0.18}}$
& $5.46\textcolor{gray}{\text{\scriptsize±0.12}}$ & $5.51\textcolor{gray}{\text{\scriptsize±0.12}}$ & $5.64\textcolor{gray}{\text{\scriptsize±0.10}}$ & $5.53\textcolor{gray}{\text{\scriptsize±0.11}}$ \\

& & MAPE (\%) & $24.06\textcolor{gray}{\text{\scriptsize±0.60}}$ & $27.73\textcolor{gray}{\text{\scriptsize±0.48}}$ & $32.25\textcolor{gray}{\text{\scriptsize±0.43}}$ & $27.62\textcolor{gray}{\text{\scriptsize±0.52}}$
& $17.61\textcolor{gray}{\text{\scriptsize±0.65}}$ & $19.02\textcolor{gray}{\text{\scriptsize±0.60}}$ & $22.32\textcolor{gray}{\text{\scriptsize±0.69}}$ & $19.34\textcolor{gray}{\text{\scriptsize±0.61}}$
& $48.18\textcolor{gray}{\text{\scriptsize±3.06}}$ & $48.91\textcolor{gray}{\text{\scriptsize±3.11}}$ & $50.69\textcolor{gray}{\text{\scriptsize±3.21}}$ & $49.12\textcolor{gray}{\text{\scriptsize±2.96}}$ \\

\cmidrule{2-15}

& \multirow{3}{*}{\textit{EAC}}
& MAE & $19.29\textcolor{gray}{\text{\scriptsize±1.39}}$ & $22.04\textcolor{gray}{\text{\scriptsize±0.89}}$ & $25.14\textcolor{gray}{\text{\scriptsize±0.59}}$ & $21.87\textcolor{gray}{\text{\scriptsize±1.06}}$
& $13.21\textcolor{gray}{\text{\scriptsize±0.13}}$ & $14.36\textcolor{gray}{\text{\scriptsize±0.05}}$ & $16.75\textcolor{gray}{\text{\scriptsize±0.26}}$ & $14.54\textcolor{gray}{\text{\scriptsize±0.07}}$
& $5.37\textcolor{gray}{\text{\scriptsize±0.14}}$ & $5.42\textcolor{gray}{\text{\scriptsize±0.10}}$ & $5.47\textcolor{gray}{\text{\scriptsize±0.10}}$ & $5.42\textcolor{gray}{\text{\scriptsize±0.11}}$ \\

& & RMSE & $30.02\textcolor{gray}{\text{\scriptsize±2.10}}$ & $34.67\textcolor{gray}{\text{\scriptsize±1.17}}$ & $39.57\textcolor{gray}{\text{\scriptsize±0.66}}$ & $34.22\textcolor{gray}{\text{\scriptsize±1.49}}$
& $21.18\textcolor{gray}{\text{\scriptsize±0.11}}$ & $23.23\textcolor{gray}{\text{\scriptsize±0.12}}$ & $27.16\textcolor{gray}{\text{\scriptsize±0.48}}$ & $23.46\textcolor{gray}{\text{\scriptsize±0.16}}$
& $5.56\textcolor{gray}{\text{\scriptsize±0.05}}$ & $5.67\textcolor{gray}{\text{\scriptsize±0.00}}$ & $5.84\textcolor{gray}{\text{\scriptsize±0.05}}$ & $5.68\textcolor{gray}{\text{\scriptsize±0.01}}$ \\

& & MAPE (\%) & $25.39\textcolor{gray}{\text{\scriptsize±2.27}}$ & $29.06\textcolor{gray}{\text{\scriptsize±1.65}}$ & $33.39\textcolor{gray}{\text{\scriptsize±1.04}}$ & $28.86\textcolor{gray}{\text{\scriptsize±1.70}}$
& $21.96\textcolor{gray}{\text{\scriptsize±1.39}}$ & $22.80\textcolor{gray}{\text{\scriptsize±0.82}}$ & $25.96\textcolor{gray}{\text{\scriptsize±1.01}}$ & $23.27\textcolor{gray}{\text{\scriptsize±0.86}}$
& $52.93\textcolor{gray}{\text{\scriptsize±1.72}}$ & $54.20\textcolor{gray}{\text{\scriptsize±2.07}}$ & $55.47\textcolor{gray}{\text{\scriptsize±1.36}}$ & $53.97\textcolor{gray}{\text{\scriptsize±1.62}}$ \\

\cmidrule{2-15}

& \multirow{3}{*}{\textit{ST-Adapter}}
& MAE & $18.73\textcolor{gray}{\text{\scriptsize±0.70}}$ & $21.73\textcolor{gray}{\text{\scriptsize±0.53}}$ & $24.96\textcolor{gray}{\text{\scriptsize±0.30}}$ & $21.46\textcolor{gray}{\text{\scriptsize±0.52}}$
& $12.81\textcolor{gray}{\text{\scriptsize±0.01}}$ & $13.96\textcolor{gray}{\text{\scriptsize±0.06}}$ & $16.20\textcolor{gray}{\text{\scriptsize±0.18}}$ & $14.11\textcolor{gray}{\text{\scriptsize±0.07}}$
& $5.40\textcolor{gray}{\text{\scriptsize±0.11}}$ & $5.38\textcolor{gray}{\text{\scriptsize±0.14}}$ & $5.42\textcolor{gray}{\text{\scriptsize±0.09}}$ & $5.41\textcolor{gray}{\text{\scriptsize±0.10}}$ \\

& & RMSE & $28.82\textcolor{gray}{\text{\scriptsize±0.82}}$ & $34.16\textcolor{gray}{\text{\scriptsize±0.61}}$ & $39.40\textcolor{gray}{\text{\scriptsize±0.32}}$ & $33.46\textcolor{gray}{\text{\scriptsize±0.59}}$
& $20.64\textcolor{gray}{\text{\scriptsize±0.04}}$ & $22.74\textcolor{gray}{\text{\scriptsize±0.16}}$ & $26.58\textcolor{gray}{\text{\scriptsize±0.44}}$ & $22.94\textcolor{gray}{\text{\scriptsize±0.19}}$
& $5.56\textcolor{gray}{\text{\scriptsize±0.09}}$ & $5.59\textcolor{gray}{\text{\scriptsize±0.12}}$ & $5.74\textcolor{gray}{\text{\scriptsize±0.04}}$ & $5.64\textcolor{gray}{\text{\scriptsize±0.06}}$ \\

& & MAPE (\%) & $23.57\textcolor{gray}{\text{\scriptsize±0.59}}$ & $27.51\textcolor{gray}{\text{\scriptsize±0.38}}$ & $32.28\textcolor{gray}{\text{\scriptsize±0.17}}$ & $27.35\textcolor{gray}{\text{\scriptsize±0.41}}$
& $18.47\textcolor{gray}{\text{\scriptsize±0.27}}$ & $19.83\textcolor{gray}{\text{\scriptsize±0.31}}$ & $23.09\textcolor{gray}{\text{\scriptsize±0.32}}$ & $20.14\textcolor{gray}{\text{\scriptsize±0.30}}$
& $55.35\textcolor{gray}{\text{\scriptsize±1.47}}$ & $56.20\textcolor{gray}{\text{\scriptsize±1.27}}$ & $57.71\textcolor{gray}{\text{\scriptsize±1.28}}$ & $56.22\textcolor{gray}{\text{\scriptsize±1.34}}$ \\

\cmidrule{2-15}

& \multirow{3}{*}{\textit{GraphPro}}
& MAE & $18.53\textcolor{gray}{\text{\scriptsize±0.73}}$ & $21.56\textcolor{gray}{\text{\scriptsize±0.58}}$ & $24.89\textcolor{gray}{\text{\scriptsize±0.47}}$ & $21.31\textcolor{gray}{\text{\scriptsize±0.62}}$
& $12.88\textcolor{gray}{\text{\scriptsize±0.06}}$ & $14.13\textcolor{gray}{\text{\scriptsize±0.12}}$ & $16.58\textcolor{gray}{\text{\scriptsize±0.24}}$ & $14.29\textcolor{gray}{\text{\scriptsize±0.13}}$
& $5.39\textcolor{gray}{\text{\scriptsize±0.23}}$ & $5.35\textcolor{gray}{\text{\scriptsize±0.27}}$ & $5.38\textcolor{gray}{\text{\scriptsize±0.19}}$ & $5.39\textcolor{gray}{\text{\scriptsize±0.21}}$ \\

& & RMSE & $29.04\textcolor{gray}{\text{\scriptsize±0.79}}$ & $34.32\textcolor{gray}{\text{\scriptsize±0.61}}$ & $39.58\textcolor{gray}{\text{\scriptsize±0.53}}$ & $33.65\textcolor{gray}{\text{\scriptsize±0.66}}$
& $20.82\textcolor{gray}{\text{\scriptsize±0.09}}$ & $23.05\textcolor{gray}{\text{\scriptsize±0.21}}$ & $27.19\textcolor{gray}{\text{\scriptsize±0.46}}$ & $23.28\textcolor{gray}{\text{\scriptsize±0.23}}$
& $5.55\textcolor{gray}{\text{\scriptsize±0.20}}$ & $5.56\textcolor{gray}{\text{\scriptsize±0.23}}$ & $5.71\textcolor{gray}{\text{\scriptsize±0.14}}$ & $5.62\textcolor{gray}{\text{\scriptsize±0.17}}$ \\

& & MAPE (\%) & $\textcolor{red}{\mathbf{23.16}}\textcolor{gray}{\text{\scriptsize±0.74}}$ & $27.44\textcolor{gray}{\text{\scriptsize±0.63}}$ & $32.56\textcolor{gray}{\text{\scriptsize±0.65}}$ & $27.26\textcolor{gray}{\text{\scriptsize±0.67}}$
& $17.68\textcolor{gray}{\text{\scriptsize±0.27}}$ & $19.38\textcolor{gray}{\text{\scriptsize±0.47}}$ & $23.50\textcolor{gray}{\text{\scriptsize±0.94}}$ & $19.79\textcolor{gray}{\text{\scriptsize±0.51}}$
& $52.80\textcolor{gray}{\text{\scriptsize±1.23}}$ & $53.68\textcolor{gray}{\text{\scriptsize±1.32}}$ & $55.45\textcolor{gray}{\text{\scriptsize±1.18}}$ & $53.81\textcolor{gray}{\text{\scriptsize±1.24}}$ \\

\cmidrule{2-15}

& \multirow{3}{*}{\textit{PECPM}}
& MAE & $18.63\textcolor{gray}{\text{\scriptsize±0.67}}$ & $21.51\textcolor{gray}{\text{\scriptsize±0.50}}$ & $24.78\textcolor{gray}{\text{\scriptsize±0.39}}$ & $21.29\textcolor{gray}{\text{\scriptsize±0.54}}$
& $13.04\textcolor{gray}{\text{\scriptsize±0.06}}$ & $14.25\textcolor{gray}{\text{\scriptsize±0.06}}$ & $16.68\textcolor{gray}{\text{\scriptsize±0.08}}$ & $14.42\textcolor{gray}{\text{\scriptsize±0.07}}$
& $5.28\textcolor{gray}{\text{\scriptsize±0.31}}$ & $5.25\textcolor{gray}{\text{\scriptsize±0.35}}$ & $5.32\textcolor{gray}{\text{\scriptsize±0.27}}$ & $5.30\textcolor{gray}{\text{\scriptsize±0.29}}$ \\

& & RMSE & $29.26\textcolor{gray}{\text{\scriptsize±0.83}}$ & $34.22\textcolor{gray}{\text{\scriptsize±0.52}}$ & $39.39\textcolor{gray}{\text{\scriptsize±0.34}}$ & $33.65\textcolor{gray}{\text{\scriptsize±0.61}}$
& $21.06\textcolor{gray}{\text{\scriptsize±0.09}}$ & $23.24\textcolor{gray}{\text{\scriptsize±0.12}}$ & $27.31\textcolor{gray}{\text{\scriptsize±0.16}}$ & $23.47\textcolor{gray}{\text{\scriptsize±0.12}}$
& $5.43\textcolor{gray}{\text{\scriptsize±0.27}}$ & $5.46\textcolor{gray}{\text{\scriptsize±0.30}}$ & $5.64\textcolor{gray}{\text{\scriptsize±0.20}}$ & $5.52\textcolor{gray}{\text{\scriptsize±0.23}}$ \\

& & MAPE (\%) & $23.75\textcolor{gray}{\text{\scriptsize±1.00}}$ & $27.75\textcolor{gray}{\text{\scriptsize±0.70}}$ & $32.60\textcolor{gray}{\text{\scriptsize±0.47}}$ & $27.58\textcolor{gray}{\text{\scriptsize±0.75}}$
& $17.95\textcolor{gray}{\text{\scriptsize±0.14}}$ & $19.57\textcolor{gray}{\text{\scriptsize±0.19}}$ & $23.44\textcolor{gray}{\text{\scriptsize±0.56}}$ & $19.96\textcolor{gray}{\text{\scriptsize±0.19}}$
& $52.31\textcolor{gray}{\text{\scriptsize±4.48}}$ & $53.28\textcolor{gray}{\text{\scriptsize±4.30}}$ & $54.94\textcolor{gray}{\text{\scriptsize±4.32}}$ & $53.34\textcolor{gray}{\text{\scriptsize±4.34}}$ \\
\hline

\multirow{3}{*}{\begin{tabular}[c]{@{}c@{}}\textbf{Regularization-}\\\textbf{based}\end{tabular}}
& \multirow{3}{*}{\textit{EWC}}
& MAE & $19.09\textcolor{gray}{\text{\scriptsize±0.52}}$ & $22.01\textcolor{gray}{\text{\scriptsize±0.36}}$ & $25.28\textcolor{gray}{\text{\scriptsize±0.24}}$ & $21.77\textcolor{gray}{\text{\scriptsize±0.38}}$
& $13.13\textcolor{gray}{\text{\scriptsize±0.12}}$ & $14.51\textcolor{gray}{\text{\scriptsize±0.13}}$ & $17.40\textcolor{gray}{\text{\scriptsize±0.21}}$ & $14.75\textcolor{gray}{\text{\scriptsize±0.14}}$
& $5.48\textcolor{gray}{\text{\scriptsize±0.15}}$ & $5.46\textcolor{gray}{\text{\scriptsize±0.19}}$ & $5.53\textcolor{gray}{\text{\scriptsize±0.11}}$ & $5.50\textcolor{gray}{\text{\scriptsize±0.13}}$ \\

& & RMSE & $29.62\textcolor{gray}{\text{\scriptsize±0.81}}$ & $34.75\textcolor{gray}{\text{\scriptsize±0.49}}$ & $40.06\textcolor{gray}{\text{\scriptsize±0.22}}$ & $34.14\textcolor{gray}{\text{\scriptsize±0.55}}$
& $21.41\textcolor{gray}{\text{\scriptsize±0.32}}$ & $24.00\textcolor{gray}{\text{\scriptsize±0.39}}$ & $29.15\textcolor{gray}{\text{\scriptsize±0.59}}$ & $24.37\textcolor{gray}{\text{\scriptsize±0.42}}$
& $5.63\textcolor{gray}{\text{\scriptsize±0.13}}$ & $5.67\textcolor{gray}{\text{\scriptsize±0.15}}$ & $5.85\textcolor{gray}{\text{\scriptsize±0.06}}$ & $5.72\textcolor{gray}{\text{\scriptsize±0.09}}$ \\

& & MAPE (\%) & $24.51\textcolor{gray}{\text{\scriptsize±0.85}}$ & $28.33\textcolor{gray}{\text{\scriptsize±0.67}}$ & $32.99\textcolor{gray}{\text{\scriptsize±0.54}}$ & $28.16\textcolor{gray}{\text{\scriptsize±0.69}}$
& $17.35\textcolor{gray}{\text{\scriptsize±0.62}}$ & $19.29\textcolor{gray}{\text{\scriptsize±0.59}}$ & $23.83\textcolor{gray}{\text{\scriptsize±0.70}}$ & $19.75\textcolor{gray}{\text{\scriptsize±0.59}}$
& $52.49\textcolor{gray}{\text{\scriptsize±0.98}}$ & $53.32\textcolor{gray}{\text{\scriptsize±1.00}}$ & $54.91\textcolor{gray}{\text{\scriptsize±0.85}}$ & $53.45\textcolor{gray}{\text{\scriptsize±0.90}}$ \\

\hline

\multirow{3}{*}{\begin{tabular}[c]{@{}c@{}}\textbf{Replay-}\\\textbf{based}\end{tabular}}
& \multirow{3}{*}{\textit{Replay}}
& MAE & $18.53\textcolor{gray}{\text{\scriptsize±0.51}}$ & $21.51\textcolor{gray}{\text{\scriptsize±0.38}}$ & $24.83\textcolor{gray}{\text{\scriptsize±0.22}}$ & $21.26\textcolor{gray}{\text{\scriptsize±0.39}}$
& $13.02\textcolor{gray}{\text{\scriptsize±0.02}}$ & $14.34\textcolor{gray}{\text{\scriptsize±0.03}}$ & $17.05\textcolor{gray}{\text{\scriptsize±0.09}}$ & $14.55\textcolor{gray}{\text{\scriptsize±0.04}}$
& $5.23\textcolor{gray}{\text{\scriptsize±0.15}}$ & $5.22\textcolor{gray}{\text{\scriptsize±0.18}}$ & $5.28\textcolor{gray}{\text{\scriptsize±0.13}}$ & $5.25\textcolor{gray}{\text{\scriptsize±0.14}}$ \\

& & RMSE & $28.85\textcolor{gray}{\text{\scriptsize±0.55}}$ & $34.01\textcolor{gray}{\text{\scriptsize±0.38}}$ & $39.27\textcolor{gray}{\text{\scriptsize±0.20}}$ & $33.37\textcolor{gray}{\text{\scriptsize±0.38}}$
& $21.19\textcolor{gray}{\text{\scriptsize±0.03}}$ & $23.62\textcolor{gray}{\text{\scriptsize±0.08}}$ & $28.40\textcolor{gray}{\text{\scriptsize±0.21}}$ & $23.95\textcolor{gray}{\text{\scriptsize±0.09}}$
& $5.39\textcolor{gray}{\text{\scriptsize±0.12}}$ & $5.43\textcolor{gray}{\text{\scriptsize±0.14}}$ & $5.62\textcolor{gray}{\text{\scriptsize±0.08}}$ & $5.48\textcolor{gray}{\text{\scriptsize±0.09}}$ \\

& & MAPE (\%) & $24.06\textcolor{gray}{\text{\scriptsize±0.47}}$ & $27.95\textcolor{gray}{\text{\scriptsize±0.20}}$ & $32.70\textcolor{gray}{\text{\scriptsize±0.02}}$ & $27.78\textcolor{gray}{\text{\scriptsize±0.25}}$
& $17.75\textcolor{gray}{\text{\scriptsize±0.85}}$ & $19.38\textcolor{gray}{\text{\scriptsize±0.78}}$ & $23.42\textcolor{gray}{\text{\scriptsize±0.65}}$ & $19.82\textcolor{gray}{\text{\scriptsize±0.76}}$
& $50.77\textcolor{gray}{\text{\scriptsize±2.47}}$ & $51.56\textcolor{gray}{\text{\scriptsize±2.66}}$ & $53.22\textcolor{gray}{\text{\scriptsize±2.56}}$ & $51.73\textcolor{gray}{\text{\scriptsize±2.52}}$ \\

\hline

\multirow{3}{*}{\begin{tabular}[c]{@{}c@{}}\textbf{Retrieval-}\\\textbf{based}\end{tabular}}
& \multirow{3}{*}{\textit{\methodname}}
& MAE & $\textcolor{red}{\mathbf{18.05}}\textcolor{gray}{\text{\scriptsize±0.70}}$ & $\textcolor{red}{\mathbf{20.61}}\textcolor{gray}{\text{\scriptsize±0.57}}$ & $\textcolor{red}{\mathbf{23.54}}\textcolor{gray}{\text{\scriptsize±0.39}}$ & $\textcolor{red}{\mathbf{20.44}}\textcolor{gray}{\text{\scriptsize±0.57}}$
& $\textcolor{red}{\mathbf{12.26}}\textcolor{gray}{\text{\scriptsize±0.03}}$ & $\textcolor{red}{\mathbf{13.31}}\textcolor{gray}{\text{\scriptsize±0.05}}$ & $\textcolor{red}{\mathbf{15.53}}\textcolor{gray}{\text{\scriptsize±0.11}}$ & $\textcolor{red}{\mathbf{13.49}}\textcolor{gray}{\text{\scriptsize±0.05}}$
& $\textcolor{red}{\mathbf{4.89}}\textcolor{gray}{\text{\scriptsize±0.04}}$ & $\textcolor{red}{\mathbf{4.90}}\textcolor{gray}{\text{\scriptsize±0.04}}$ & $\textcolor{red}{\mathbf{4.92}}\textcolor{gray}{\text{\scriptsize±0.03}}$ & $\textcolor{red}{\mathbf{4.90}}\textcolor{gray}{\text{\scriptsize±0.03}}$ \\

& & RMSE & $\textcolor{red}{\mathbf{26.28}}\textcolor{gray}{\text{\scriptsize±0.51}}$ & $\textcolor{red}{\mathbf{30.74}}\textcolor{gray}{\text{\scriptsize±0.43}}$ & $\textcolor{red}{\mathbf{35.45}}\textcolor{gray}{\text{\scriptsize±0.21}}$ & $\textcolor{red}{\mathbf{30.28}}\textcolor{gray}{\text{\scriptsize±0.40}}$
& $\textcolor{red}{\mathbf{18.71}}\textcolor{gray}{\text{\scriptsize±0.06}}$ & $\textcolor{red}{\mathbf{20.48}}\textcolor{gray}{\text{\scriptsize±0.09}}$ & $\textcolor{red}{\mathbf{24.15}}\textcolor{gray}{\text{\scriptsize±0.20}}$ & $\textcolor{red}{\mathbf{20.77}}\textcolor{gray}{\text{\scriptsize±0.11}}$
& $\textcolor{red}{\mathbf{5.00}}\textcolor{gray}{\text{\scriptsize±0.05}}$ & $\textcolor{red}{\mathbf{5.05}}\textcolor{gray}{\text{\scriptsize±0.04}}$ & $\textcolor{red}{\mathbf{5.19}}\textcolor{gray}{\text{\scriptsize±0.04}}$ & $\textcolor{red}{\mathbf{5.07}}\textcolor{gray}{\text{\scriptsize±0.04}}$ \\

& & MAPE (\%) & $23.93\textcolor{gray}{\text{\scriptsize±1.44}}$ & $\textcolor{red}{\mathbf{27.22}}\textcolor{gray}{\text{\scriptsize±1.02}}$ & $\textcolor{red}{\mathbf{31.32}}\textcolor{gray}{\text{\scriptsize±0.60}}$ & $\textcolor{red}{\mathbf{27.13}}\textcolor{gray}{\text{\scriptsize±1.06}}$
& $\textcolor{red}{\mathbf{16.36}}\textcolor{gray}{\text{\scriptsize±0.18}}$ & $\textcolor{red}{\mathbf{17.65}}\textcolor{gray}{\text{\scriptsize±0.15}}$ & $\textcolor{red}{\mathbf{20.75}}\textcolor{gray}{\text{\scriptsize±0.32}}$ & $\textcolor{red}{\mathbf{17.97}}\textcolor{gray}{\text{\scriptsize±0.18}}$
& $\textcolor{red}{\mathbf{43.09}}\textcolor{gray}{\text{\scriptsize±1.38}}$ & $\textcolor{red}{\mathbf{43.76}}\textcolor{gray}{\text{\scriptsize±1.35}}$ & $\textcolor{red}{\mathbf{45.12}}\textcolor{gray}{\text{\scriptsize±1.37}}$ & $\textcolor{red}{\mathbf{43.89}}\textcolor{gray}{\text{\scriptsize±1.34}}$ \\

\hline
\end{tabular}
}}
\end{small}
\vspace{-1mm}
\end{table*}

\begin{table*}[htbp]
\caption{Comparison of the overall performance of different methods (DCRNN backbone).}
\label{overall_performance_DCRNN}
\centering
\setlength{\tabcolsep}{0.8mm}
\renewcommand{\arraystretch}{1.03}
\begin{small}
\scalebox{0.67}{
\resizebox{1.5\linewidth}{!}{\begin{tabular}{ccccccccccccccc}
\hline
\rowcolor[gray]{0.95}
\multicolumn{3}{c}{\textbf{Datasets}}&\multicolumn{4}{c}{\textit{\textbf{Air-Stream}}}&\multicolumn{4}{c}{\textit{\textbf{PEMS-Stream}}}&\multicolumn{4}{c}{\textit{\textbf{Energy-Stream}}} \\

\hline
\rowcolor[gray]{0.95}
\textbf{Category} & \textbf{Method} & \textbf{Metric} & 3 & 6 & 12 & \textbf{Avg.} & 3 & 6 & 12 & \textbf{Avg.} & 3 & 6 & 12 & \textbf{Avg.} \\
\hline

\multirow{6}{*}{\begin{tabular}[c]{@{}c@{}}\textbf{Back-}\\\textbf{bone}\end{tabular}}
& \multirow{3}{*}{\textit{Pretrain-ST}}
& MAE & $21.52\textcolor{gray}{\text{\scriptsize±2.34}}$ & $23.67\textcolor{gray}{\text{\scriptsize±2.05}}$ & $26.30\textcolor{gray}{\text{\scriptsize±1.72}}$ & $23.61\textcolor{gray}{\text{\scriptsize±2.07}}$ 
& $16.41\textcolor{gray}{\text{\scriptsize±0.34}}$ & $16.75\textcolor{gray}{\text{\scriptsize±0.25}}$ & $18.60\textcolor{gray}{\text{\scriptsize±0.16}}$ & $17.09\textcolor{gray}{\text{\scriptsize±0.25}}$
& $10.58\textcolor{gray}{\text{\scriptsize±0.02}}$ & $10.61\textcolor{gray}{\text{\scriptsize±0.02}}$ & $10.68\textcolor{gray}{\text{\scriptsize±0.11}}$ & $10.63\textcolor{gray}{\text{\scriptsize±0.05}}$ \\

& & RMSE & $33.09\textcolor{gray}{\text{\scriptsize±3.79}}$ & $36.97\textcolor{gray}{\text{\scriptsize±3.10}}$ & $41.35\textcolor{gray}{\text{\scriptsize±2.46}}$ & $36.71\textcolor{gray}{\text{\scriptsize±3.21}}$
& $26.04\textcolor{gray}{\text{\scriptsize±0.59}}$ & $26.71\textcolor{gray}{\text{\scriptsize±0.50}}$ & $29.74\textcolor{gray}{\text{\scriptsize±0.36}}$ & $27.21\textcolor{gray}{\text{\scriptsize±0.50}}$
& $10.82\textcolor{gray}{\text{\scriptsize±0.13}}$ & $10.88\textcolor{gray}{\text{\scriptsize±0.18}}$ & $11.01\textcolor{gray}{\text{\scriptsize±0.26}}$ & $10.91\textcolor{gray}{\text{\scriptsize±0.21}}$ \\

& & MAPE (\%) & $27.13\textcolor{gray}{\text{\scriptsize±3.27}}$ & $30.22\textcolor{gray}{\text{\scriptsize±2.70}}$ & $34.35\textcolor{gray}{\text{\scriptsize±1.87}}$ & $30.25\textcolor{gray}{\text{\scriptsize±2.66}}$
& $35.77\textcolor{gray}{\text{\scriptsize±1.38}}$ & $36.24\textcolor{gray}{\text{\scriptsize±1.35}}$ & $39.30\textcolor{gray}{\text{\scriptsize±1.16}}$ & $36.87\textcolor{gray}{\text{\scriptsize±1.24}}$
& $176.11\textcolor{gray}{\text{\scriptsize±9.69}}$ & $177.89\textcolor{gray}{\text{\scriptsize±11.24}}$ & $180.44\textcolor{gray}{\text{\scriptsize±13.15}}$ & $178.33\textcolor{gray}{\text{\scriptsize±11.75}}$ \\

\cmidrule{2-15}

& \multirow{3}{*}{\textit{Retrain-ST}}
& MAE & $22.14\textcolor{gray}{\text{\scriptsize±1.45}}$ & $24.26\textcolor{gray}{\text{\scriptsize±1.22}}$ & $26.79\textcolor{gray}{\text{\scriptsize±0.98}}$ & $24.17\textcolor{gray}{\text{\scriptsize±1.24}}$
& $14.16\textcolor{gray}{\text{\scriptsize±0.17}}$ & $14.73\textcolor{gray}{\text{\scriptsize±0.10}}$ & $16.81\textcolor{gray}{\text{\scriptsize±0.11}}$ & $15.05\textcolor{gray}{\text{\scriptsize±0.13}}$
& $5.26\textcolor{gray}{\text{\scriptsize±0.02}}$ & $5.21\textcolor{gray}{\text{\scriptsize±0.09}}$ & $5.19\textcolor{gray}{\text{\scriptsize±0.14}}$ & $5.21\textcolor{gray}{\text{\scriptsize±0.10}}$ \\

& & RMSE & $33.67\textcolor{gray}{\text{\scriptsize±2.79}}$ & $37.58\textcolor{gray}{\text{\scriptsize±2.19}}$ & $41.86\textcolor{gray}{\text{\scriptsize±1.62}}$ & $37.27\textcolor{gray}{\text{\scriptsize±2.28}}$
& $22.91\textcolor{gray}{\text{\scriptsize±0.33}}$ & $23.98\textcolor{gray}{\text{\scriptsize±0.20}}$ & $27.50\textcolor{gray}{\text{\scriptsize±0.18}}$ & $24.48\textcolor{gray}{\text{\scriptsize±0.24}}$
& $5.46\textcolor{gray}{\text{\scriptsize±0.06}}$ & $5.46\textcolor{gray}{\text{\scriptsize±0.04}}$ & $5.53\textcolor{gray}{\text{\scriptsize±0.09}}$ & $5.47\textcolor{gray}{\text{\scriptsize±0.05}}$ \\

& & MAPE (\%) & $28.94\textcolor{gray}{\text{\scriptsize±3.30}}$ & $31.61\textcolor{gray}{\text{\scriptsize±3.03}}$ & $35.30\textcolor{gray}{\text{\scriptsize±2.72}}$ & $31.65\textcolor{gray}{\text{\scriptsize±3.01}}$
& $23.84\textcolor{gray}{\text{\scriptsize±1.27}}$ & $24.55\textcolor{gray}{\text{\scriptsize±1.23}}$ & $27.36\textcolor{gray}{\text{\scriptsize±1.28}}$ & $25.01\textcolor{gray}{\text{\scriptsize±1.25}}$
& $50.38\textcolor{gray}{\text{\scriptsize±1.10}}$ & $50.95\textcolor{gray}{\text{\scriptsize±1.00}}$ & $52.03\textcolor{gray}{\text{\scriptsize±0.82}}$ & $51.01\textcolor{gray}{\text{\scriptsize±0.95}}$ \\

\hline

\multirow{21}{*}{\begin{tabular}[c]{@{}c@{}}\textbf{Architecture-}\\\textbf{Based}\end{tabular}}
& \multirow{3}{*}{\textit{TrafficStream}}
& MAE & $18.40\textcolor{gray}{\text{\scriptsize±0.64}}$ & $21.26\textcolor{gray}{\text{\scriptsize±0.53}}$ & $24.54\textcolor{gray}{\text{\scriptsize±0.43}}$ & $21.08\textcolor{gray}{\text{\scriptsize±0.54}}$
& $13.64\textcolor{gray}{\text{\scriptsize±0.08}}$ & $14.56\textcolor{gray}{\text{\scriptsize±0.08}}$ & $16.70\textcolor{gray}{\text{\scriptsize±0.08}}$ & $14.78\textcolor{gray}{\text{\scriptsize±0.08}}$
& $5.35\textcolor{gray}{\text{\scriptsize±0.07}}$ & $5.29\textcolor{gray}{\text{\scriptsize±0.13}}$ & $5.28\textcolor{gray}{\text{\scriptsize±0.16}}$ & $5.29\textcolor{gray}{\text{\scriptsize±0.13}}$ \\

& & RMSE & $28.74\textcolor{gray}{\text{\scriptsize±0.77}}$ & $33.79\textcolor{gray}{\text{\scriptsize±0.62}}$ & $39.07\textcolor{gray}{\text{\scriptsize±0.51}}$ & $33.28\textcolor{gray}{\text{\scriptsize±0.65}}$
& $22.12\textcolor{gray}{\text{\scriptsize±0.14}}$ & $23.78\textcolor{gray}{\text{\scriptsize±0.14}}$ & $27.39\textcolor{gray}{\text{\scriptsize±0.15}}$ & $24.10\textcolor{gray}{\text{\scriptsize±0.14}}$
& $5.52\textcolor{gray}{\text{\scriptsize±0.09}}$ & $5.51\textcolor{gray}{\text{\scriptsize±0.08}}$ & $5.58\textcolor{gray}{\text{\scriptsize±0.10}}$ & $5.52\textcolor{gray}{\text{\scriptsize±0.08}}$ \\

& & MAPE (\%) & $23.09\textcolor{gray}{\text{\scriptsize±0.80}}$ & $26.96\textcolor{gray}{\text{\scriptsize±0.62}}$ & $31.89\textcolor{gray}{\text{\scriptsize±0.45}}$ & $26.90\textcolor{gray}{\text{\scriptsize±0.62}}$
& $20.63\textcolor{gray}{\text{\scriptsize±0.79}}$ & $21.70\textcolor{gray}{\text{\scriptsize±0.67}}$ & $24.59\textcolor{gray}{\text{\scriptsize±0.67}}$ & $22.05\textcolor{gray}{\text{\scriptsize±0.70}}$
& $50.97\textcolor{gray}{\text{\scriptsize±1.04}}$ & $51.74\textcolor{gray}{\text{\scriptsize±1.25}}$ & $53.07\textcolor{gray}{\text{\scriptsize±1.14}}$ & $51.79\textcolor{gray}{\text{\scriptsize±1.14}}$ \\

\cmidrule{2-15}

& \multirow{3}{*}{\textit{ST-LoRA}}
& MAE & $19.65\textcolor{gray}{\text{\scriptsize±0.45}}$ & $22.17\textcolor{gray}{\text{\scriptsize±0.43}}$ & $25.16\textcolor{gray}{\text{\scriptsize±0.39}}$ & $22.05\textcolor{gray}{\text{\scriptsize±0.42}}$
& $13.07\textcolor{gray}{\text{\scriptsize±0.10}}$ & $13.96\textcolor{gray}{\text{\scriptsize±0.06}}$ & $16.02\textcolor{gray}{\text{\scriptsize±0.06}}$ & $14.16\textcolor{gray}{\text{\scriptsize±0.08}}$
& $5.28\textcolor{gray}{\text{\scriptsize±0.07}}$ & $5.23\textcolor{gray}{\text{\scriptsize±0.10}}$ & $5.21\textcolor{gray}{\text{\scriptsize±0.12}}$ & $5.23\textcolor{gray}{\text{\scriptsize±0.10}}$ \\

& & RMSE & $30.05\textcolor{gray}{\text{\scriptsize±1.11}}$ & $34.73\textcolor{gray}{\text{\scriptsize±0.94}}$ & $39.74\textcolor{gray}{\text{\scriptsize±0.78}}$ & $34.29\textcolor{gray}{\text{\scriptsize±0.96}}$
& $21.06\textcolor{gray}{\text{\scriptsize±0.16}}$ & $22.66\textcolor{gray}{\text{\scriptsize±0.10}}$ & $26.17\textcolor{gray}{\text{\scriptsize±0.10}}$ & $22.97\textcolor{gray}{\text{\scriptsize±0.12}}$
& $5.46\textcolor{gray}{\text{\scriptsize±0.11}}$ & $5.45\textcolor{gray}{\text{\scriptsize±0.07}}$ & $5.53\textcolor{gray}{\text{\scriptsize±0.07}}$ & $5.46\textcolor{gray}{\text{\scriptsize±0.07}}$ \\

& & MAPE (\%) & $24.60\textcolor{gray}{\text{\scriptsize±0.55}}$ & $28.01\textcolor{gray}{\text{\scriptsize±0.54}}$ & $32.54\textcolor{gray}{\text{\scriptsize±0.50}}$ & $28.03\textcolor{gray}{\text{\scriptsize±0.52}}$
& $20.09\textcolor{gray}{\text{\scriptsize±0.96}}$ & $21.12\textcolor{gray}{\text{\scriptsize±1.00}}$ & $23.88\textcolor{gray}{\text{\scriptsize±1.30}}$ & $21.43\textcolor{gray}{\text{\scriptsize±1.06}}$
& $50.77\textcolor{gray}{\text{\scriptsize±1.85}}$ & $51.48\textcolor{gray}{\text{\scriptsize±1.94}}$ & $52.86\textcolor{gray}{\text{\scriptsize±1.91}}$ & $51.55\textcolor{gray}{\text{\scriptsize±1.91}}$ \\

\cmidrule{2-15}

& \multirow{3}{*}{\textit{STKEC}}
& MAE & $19.15\textcolor{gray}{\text{\scriptsize±0.72}}$ & $21.84\textcolor{gray}{\text{\scriptsize±0.62}}$ & $24.96\textcolor{gray}{\text{\scriptsize±0.58}}$ & $21.70\textcolor{gray}{\text{\scriptsize±0.66}}$
& $14.03\textcolor{gray}{\text{\scriptsize±0.66}}$ & $14.87\textcolor{gray}{\text{\scriptsize±0.39}}$ & $16.99\textcolor{gray}{\text{\scriptsize±0.32}}$ & $15.11\textcolor{gray}{\text{\scriptsize±0.47}}$
& $5.38\textcolor{gray}{\text{\scriptsize±0.09}}$ & $5.38\textcolor{gray}{\text{\scriptsize±0.09}}$ & $5.39\textcolor{gray}{\text{\scriptsize±0.08}}$ & $5.38\textcolor{gray}{\text{\scriptsize±0.09}}$ \\

& & RMSE & $29.65\textcolor{gray}{\text{\scriptsize±1.05}}$ & $34.45\textcolor{gray}{\text{\scriptsize±0.79}}$ & $39.46\textcolor{gray}{\text{\scriptsize±0.66}}$ & $33.97\textcolor{gray}{\text{\scriptsize±0.88}}$
& $23.29\textcolor{gray}{\text{\scriptsize±0.97}}$ & $24.76\textcolor{gray}{\text{\scriptsize±0.55}}$ & $28.31\textcolor{gray}{\text{\scriptsize±0.44}}$ & $25.13\textcolor{gray}{\text{\scriptsize±0.66}}$
& $5.51\textcolor{gray}{\text{\scriptsize±0.08}}$ & $5.55\textcolor{gray}{\text{\scriptsize±0.08}}$ & $5.66\textcolor{gray}{\text{\scriptsize±0.08}}$ & $5.56\textcolor{gray}{\text{\scriptsize±0.09}}$ \\

& & MAPE (\%) & $23.85\textcolor{gray}{\text{\scriptsize±0.87}}$ & $27.58\textcolor{gray}{\text{\scriptsize±0.68}}$ & $32.41\textcolor{gray}{\text{\scriptsize±0.49}}$ & $27.58\textcolor{gray}{\text{\scriptsize±0.66}}$
& $19.82\textcolor{gray}{\text{\scriptsize±1.44}}$ & $20.73\textcolor{gray}{\text{\scriptsize±1.18}}$ & $23.44\textcolor{gray}{\text{\scriptsize±1.22}}$ & $21.08\textcolor{gray}{\text{\scriptsize±1.28}}$
& $52.87\textcolor{gray}{\text{\scriptsize±1.08}}$ & $53.45\textcolor{gray}{\text{\scriptsize±1.08}}$ & $54.70\textcolor{gray}{\text{\scriptsize±1.09}}$ & $53.57\textcolor{gray}{\text{\scriptsize±1.07}}$ \\

\cmidrule{2-15}

& \multirow{3}{*}{\textit{EAC}}
& MAE & $19.88\textcolor{gray}{\text{\scriptsize±0.14}}$ & $22.26\textcolor{gray}{\text{\scriptsize±0.10}}$ & $25.09\textcolor{gray}{\text{\scriptsize±0.07}}$ & $22.16\textcolor{gray}{\text{\scriptsize±0.10}}$
& $12.61\textcolor{gray}{\text{\scriptsize±0.08}}$ & $13.24\textcolor{gray}{\text{\scriptsize±0.09}}$ & $14.47\textcolor{gray}{\text{\scriptsize±0.13}}$ & $13.32\textcolor{gray}{\text{\scriptsize±0.09}}$
& $5.26\textcolor{gray}{\text{\scriptsize±0.02}}$ & $5.22\textcolor{gray}{\text{\scriptsize±0.09}}$ & $5.25\textcolor{gray}{\text{\scriptsize±0.07}}$ & $5.24\textcolor{gray}{\text{\scriptsize±0.05}}$ \\

& & RMSE & $30.23\textcolor{gray}{\text{\scriptsize±0.13}}$ & $34.63\textcolor{gray}{\text{\scriptsize±0.18}}$ & $39.39\textcolor{gray}{\text{\scriptsize±0.15}}$ & $34.26\textcolor{gray}{\text{\scriptsize±0.15}}$
& $20.11\textcolor{gray}{\text{\scriptsize±0.14}}$ & $21.22\textcolor{gray}{\text{\scriptsize±0.14}}$ & $23.25\textcolor{gray}{\text{\scriptsize±0.20}}$ & $21.33\textcolor{gray}{\text{\scriptsize±0.16}}$
& $5.48\textcolor{gray}{\text{\scriptsize±0.08}}$ & $5.48\textcolor{gray}{\text{\scriptsize±0.08}}$ & $5.60\textcolor{gray}{\text{\scriptsize±0.06}}$ & $5.51\textcolor{gray}{\text{\scriptsize±0.07}}$ \\

& & MAPE (\%) & $24.66\textcolor{gray}{\text{\scriptsize±0.43}}$ & $27.81\textcolor{gray}{\text{\scriptsize±0.41}}$ & $32.04\textcolor{gray}{\text{\scriptsize±0.33}}$ & $27.85\textcolor{gray}{\text{\scriptsize±0.39}}$
& $18.61\textcolor{gray}{\text{\scriptsize±0.50}}$ & $19.34\textcolor{gray}{\text{\scriptsize±0.50}}$ & $20.90\textcolor{gray}{\text{\scriptsize±0.59}}$ & $19.47\textcolor{gray}{\text{\scriptsize±0.51}}$
& $52.79\textcolor{gray}{\text{\scriptsize±2.78}}$ & $54.10\textcolor{gray}{\text{\scriptsize±3.71}}$ & $55.23\textcolor{gray}{\text{\scriptsize±2.95}}$ & $53.93\textcolor{gray}{\text{\scriptsize±2.97}}$ \\

\cmidrule{2-15}

& \multirow{3}{*}{\textit{ST-Adapter}}
& MAE & $20.12\textcolor{gray}{\text{\scriptsize±0.24}}$ & $22.61\textcolor{gray}{\text{\scriptsize±0.19}}$ & $25.52\textcolor{gray}{\text{\scriptsize±0.20}}$ & $22.47\textcolor{gray}{\text{\scriptsize±0.21}}$
& $13.12\textcolor{gray}{\text{\scriptsize±0.18}}$ & $13.99\textcolor{gray}{\text{\scriptsize±0.13}}$ & $15.99\textcolor{gray}{\text{\scriptsize±0.15}}$ & $14.19\textcolor{gray}{\text{\scriptsize±0.15}}$
& $5.36\textcolor{gray}{\text{\scriptsize±0.07}}$ & $5.32\textcolor{gray}{\text{\scriptsize±0.15}}$ & $5.30\textcolor{gray}{\text{\scriptsize±0.20}}$ & $5.32\textcolor{gray}{\text{\scriptsize±0.15}}$ \\

& & RMSE & $30.56\textcolor{gray}{\text{\scriptsize±0.51}}$ & $35.28\textcolor{gray}{\text{\scriptsize±0.46}}$ & $40.20\textcolor{gray}{\text{\scriptsize±0.38}}$ & $34.80\textcolor{gray}{\text{\scriptsize±0.46}}$
& $21.04\textcolor{gray}{\text{\scriptsize±0.30}}$ & $22.63\textcolor{gray}{\text{\scriptsize±0.22}}$ & $26.07\textcolor{gray}{\text{\scriptsize±0.27}}$ & $22.92\textcolor{gray}{\text{\scriptsize±0.26}}$
& $5.57\textcolor{gray}{\text{\scriptsize±0.05}}$ & $5.58\textcolor{gray}{\text{\scriptsize±0.11}}$ & $5.65\textcolor{gray}{\text{\scriptsize±0.16}}$ & $5.58\textcolor{gray}{\text{\scriptsize±0.11}}$ \\

& & MAPE (\%) & $24.93\textcolor{gray}{\text{\scriptsize±0.72}}$ & $28.15\textcolor{gray}{\text{\scriptsize±0.42}}$ & $32.53\textcolor{gray}{\text{\scriptsize±0.24}}$ & $28.20\textcolor{gray}{\text{\scriptsize±0.46}}$
& $20.31\textcolor{gray}{\text{\scriptsize±0.49}}$ & $21.23\textcolor{gray}{\text{\scriptsize±0.34}}$ & $23.65\textcolor{gray}{\text{\scriptsize±0.24}}$ & $21.50\textcolor{gray}{\text{\scriptsize±0.36}}$
& $52.95\textcolor{gray}{\text{\scriptsize±2.44}}$ & $53.99\textcolor{gray}{\text{\scriptsize±2.61}}$ & $55.13\textcolor{gray}{\text{\scriptsize±2.71}}$ & $53.91\textcolor{gray}{\text{\scriptsize±2.67}}$ \\

\cmidrule{2-15}

& \multirow{3}{*}{\textit{GraphPro}}
& MAE & $19.80\textcolor{gray}{\text{\scriptsize±0.23}}$ & $22.29\textcolor{gray}{\text{\scriptsize±0.19}}$ & $25.23\textcolor{gray}{\text{\scriptsize±0.15}}$ & $22.17\textcolor{gray}{\text{\scriptsize±0.20}}$
& $13.11\textcolor{gray}{\text{\scriptsize±0.10}}$ & $14.01\textcolor{gray}{\text{\scriptsize±0.07}}$ & $16.08\textcolor{gray}{\text{\scriptsize±0.08}}$ & $14.21\textcolor{gray}{\text{\scriptsize±0.08}}$
& $5.33\textcolor{gray}{\text{\scriptsize±0.14}}$ & $5.28\textcolor{gray}{\text{\scriptsize±0.15}}$ & $5.28\textcolor{gray}{\text{\scriptsize±0.19}}$ & $5.28\textcolor{gray}{\text{\scriptsize±0.16}}$ \\

& & RMSE & $30.03\textcolor{gray}{\text{\scriptsize±0.58}}$ & $34.76\textcolor{gray}{\text{\scriptsize±0.47}}$ & $39.75\textcolor{gray}{\text{\scriptsize±0.36}}$ & $34.30\textcolor{gray}{\text{\scriptsize±0.49}}$
& $21.11\textcolor{gray}{\text{\scriptsize±0.17}}$ & $22.70\textcolor{gray}{\text{\scriptsize±0.09}}$ & $26.20\textcolor{gray}{\text{\scriptsize±0.11}}$ & $23.01\textcolor{gray}{\text{\scriptsize±0.11}}$
& $5.52\textcolor{gray}{\text{\scriptsize±0.15}}$ & $5.52\textcolor{gray}{\text{\scriptsize±0.14}}$ & $5.61\textcolor{gray}{\text{\scriptsize±0.16}}$ & $5.53\textcolor{gray}{\text{\scriptsize±0.14}}$ \\

& & MAPE (\%) & $24.43\textcolor{gray}{\text{\scriptsize±0.39}}$ & $27.79\textcolor{gray}{\text{\scriptsize±0.32}}$ & $32.32\textcolor{gray}{\text{\scriptsize±0.29}}$ & $27.83\textcolor{gray}{\text{\scriptsize±0.33}}$
& $20.13\textcolor{gray}{\text{\scriptsize±0.62}}$ & $21.25\textcolor{gray}{\text{\scriptsize±0.63}}$ & $24.00\textcolor{gray}{\text{\scriptsize±0.82}}$ & $21.54\textcolor{gray}{\text{\scriptsize±0.65}}$
& $49.54\textcolor{gray}{\text{\scriptsize±2.95}}$ & $50.23\textcolor{gray}{\text{\scriptsize±2.79}}$ & $51.71\textcolor{gray}{\text{\scriptsize±2.63}}$ & $50.33\textcolor{gray}{\text{\scriptsize±2.77}}$ \\

\cmidrule{2-15}

& \multirow{3}{*}{\textit{PECPM}}
& MAE & $19.47\textcolor{gray}{\text{\scriptsize±0.74}}$ & $22.06\textcolor{gray}{\text{\scriptsize±0.60}}$ & $25.11\textcolor{gray}{\text{\scriptsize±0.47}}$ & $21.93\textcolor{gray}{\text{\scriptsize±0.62}}$
& $13.32\textcolor{gray}{\text{\scriptsize±0.16}}$ & $14.19\textcolor{gray}{\text{\scriptsize±0.13}}$ & $16.19\textcolor{gray}{\text{\scriptsize±0.12}}$ & $14.38\textcolor{gray}{\text{\scriptsize±0.14}}$
& $5.30\textcolor{gray}{\text{\scriptsize±0.06}}$ & $5.24\textcolor{gray}{\text{\scriptsize±0.03}}$ & $5.21\textcolor{gray}{\text{\scriptsize±0.05}}$ & $5.23\textcolor{gray}{\text{\scriptsize±0.04}}$ \\

& & RMSE & $29.86\textcolor{gray}{\text{\scriptsize±1.32}}$ & $34.60\textcolor{gray}{\text{\scriptsize±0.99}}$ & $39.66\textcolor{gray}{\text{\scriptsize±0.71}}$ & $34.16\textcolor{gray}{\text{\scriptsize±1.05}}$
& $21.61\textcolor{gray}{\text{\scriptsize±0.25}}$ & $23.14\textcolor{gray}{\text{\scriptsize±0.20}}$ & $26.50\textcolor{gray}{\text{\scriptsize±0.22}}$ & $23.44\textcolor{gray}{\text{\scriptsize±0.23}}$
& $5.52\textcolor{gray}{\text{\scriptsize±0.13}}$ & $5.50\textcolor{gray}{\text{\scriptsize±0.09}}$ & $5.57\textcolor{gray}{\text{\scriptsize±0.06}}$ & $5.51\textcolor{gray}{\text{\scriptsize±0.08}}$ \\

& & MAPE (\%) & $24.34\textcolor{gray}{\text{\scriptsize±0.82}}$ & $27.82\textcolor{gray}{\text{\scriptsize±0.66}}$ & $32.38\textcolor{gray}{\text{\scriptsize±0.47}}$ & $27.81\textcolor{gray}{\text{\scriptsize±0.66}}$
& $20.67\textcolor{gray}{\text{\scriptsize±0.22}}$ & $21.66\textcolor{gray}{\text{\scriptsize±0.30}}$ & $24.43\textcolor{gray}{\text{\scriptsize±0.58}}$ & $22.00\textcolor{gray}{\text{\scriptsize±0.35}}$
& $49.46\textcolor{gray}{\text{\scriptsize±0.47}}$ & $50.12\textcolor{gray}{\text{\scriptsize±0.59}}$ & $51.37\textcolor{gray}{\text{\scriptsize±0.68}}$ & $50.19\textcolor{gray}{\text{\scriptsize±0.57}}$ \\
\hline

\multirow{3}{*}{\begin{tabular}[c]{@{}c@{}}\textbf{Regularization-}\\\textbf{based}\end{tabular}}
& \multirow{3}{*}{\textit{EWC}}
& MAE & $18.61\textcolor{gray}{\text{\scriptsize±0.91}}$ & $21.43\textcolor{gray}{\text{\scriptsize±0.76}}$ & $24.67\textcolor{gray}{\text{\scriptsize±0.62}}$ & $21.26\textcolor{gray}{\text{\scriptsize±0.78}}$
& $14.35\textcolor{gray}{\text{\scriptsize±0.15}}$ & $15.24\textcolor{gray}{\text{\scriptsize±0.14}}$ & $17.37\textcolor{gray}{\text{\scriptsize±0.11}}$ & $15.46\textcolor{gray}{\text{\scriptsize±0.14}}$
& $5.22\textcolor{gray}{\text{\scriptsize±0.06}}$ & $5.16\textcolor{gray}{\text{\scriptsize±0.07}}$ & $5.12\textcolor{gray}{\text{\scriptsize±0.11}}$ & $5.16\textcolor{gray}{\text{\scriptsize±0.08}}$ \\

& & RMSE & $29.27\textcolor{gray}{\text{\scriptsize±1.14}}$ & $34.22\textcolor{gray}{\text{\scriptsize±0.91}}$ & $39.42\textcolor{gray}{\text{\scriptsize±0.73}}$ & $33.73\textcolor{gray}{\text{\scriptsize±0.96}}$
& $23.87\textcolor{gray}{\text{\scriptsize±0.16}}$ & $25.42\textcolor{gray}{\text{\scriptsize±0.14}}$ & $28.98\textcolor{gray}{\text{\scriptsize±0.06}}$ & $25.76\textcolor{gray}{\text{\scriptsize±0.13}}$
& $5.40\textcolor{gray}{\text{\scriptsize±0.11}}$ & $5.38\textcolor{gray}{\text{\scriptsize±0.04}}$ & $5.44\textcolor{gray}{\text{\scriptsize±0.03}}$ & $5.39\textcolor{gray}{\text{\scriptsize±0.04}}$ \\

& & MAPE (\%) & $\textcolor{red}{\mathbf{23.05}}\textcolor{gray}{\text{\scriptsize±1.31}}$ & $26.93\textcolor{gray}{\text{\scriptsize±1.05}}$ & $31.89\textcolor{gray}{\text{\scriptsize±0.84}}$ & $26.87\textcolor{gray}{\text{\scriptsize±1.09}}$
& $20.90\textcolor{gray}{\text{\scriptsize±0.99}}$ & $21.98\textcolor{gray}{\text{\scriptsize±1.04}}$ & $24.70\textcolor{gray}{\text{\scriptsize±1.14}}$ & $22.28\textcolor{gray}{\text{\scriptsize±1.05}}$
& $49.78\textcolor{gray}{\text{\scriptsize±1.43}}$ & $50.41\textcolor{gray}{\text{\scriptsize±1.30}}$ & $51.50\textcolor{gray}{\text{\scriptsize±1.16}}$ & $50.43\textcolor{gray}{\text{\scriptsize±1.23}}$ \\

\hline

\multirow{3}{*}{\begin{tabular}[c]{@{}c@{}}\textbf{Replay-}\\\textbf{based}\end{tabular}}
& \multirow{3}{*}{\textit{Replay}}
& MAE & $\textcolor{red}{\mathbf{18.35}}\textcolor{gray}{\text{\scriptsize±0.53}}$ & $21.20\textcolor{gray}{\text{\scriptsize±0.38}}$ & $24.47\textcolor{gray}{\text{\scriptsize±0.25}}$ & $21.03\textcolor{gray}{\text{\scriptsize±0.40}}$
& $13.99\textcolor{gray}{\text{\scriptsize±0.10}}$ & $14.92\textcolor{gray}{\text{\scriptsize±0.10}}$ & $17.12\textcolor{gray}{\text{\scriptsize±0.10}}$ & $15.15\textcolor{gray}{\text{\scriptsize±0.10}}$
& $5.28\textcolor{gray}{\text{\scriptsize±0.07}}$ & $5.21\textcolor{gray}{\text{\scriptsize±0.15}}$ & $5.15\textcolor{gray}{\text{\scriptsize±0.20}}$ & $5.20\textcolor{gray}{\text{\scriptsize±0.16}}$ \\

& & RMSE & $28.73\textcolor{gray}{\text{\scriptsize±0.61}}$ & $33.71\textcolor{gray}{\text{\scriptsize±0.46}}$ & $38.96\textcolor{gray}{\text{\scriptsize±0.37}}$ & $33.23\textcolor{gray}{\text{\scriptsize±0.49}}$
& $22.81\textcolor{gray}{\text{\scriptsize±0.19}}$ & $24.41\textcolor{gray}{\text{\scriptsize±0.15}}$ & $28.09\textcolor{gray}{\text{\scriptsize±0.12}}$ & $24.78\textcolor{gray}{\text{\scriptsize±0.15}}$
& $5.45\textcolor{gray}{\text{\scriptsize±0.04}}$ & $5.42\textcolor{gray}{\text{\scriptsize±0.09}}$ & $5.46\textcolor{gray}{\text{\scriptsize±0.14}}$ & $5.43\textcolor{gray}{\text{\scriptsize±0.10}}$ \\

& & MAPE (\%) & $23.09\textcolor{gray}{\text{\scriptsize±0.42}}$ & $26.90\textcolor{gray}{\text{\scriptsize±0.19}}$ & $31.82\textcolor{gray}{\text{\scriptsize±0.13}}$ & $26.85\textcolor{gray}{\text{\scriptsize±0.22}}$
& $22.16\textcolor{gray}{\text{\scriptsize±0.25}}$ & $23.22\textcolor{gray}{\text{\scriptsize±0.30}}$ & $26.06\textcolor{gray}{\text{\scriptsize±0.34}}$ & $23.56\textcolor{gray}{\text{\scriptsize±0.29}}$
& $50.14\textcolor{gray}{\text{\scriptsize±2.63}}$ & $50.49\textcolor{gray}{\text{\scriptsize±2.50}}$ & $51.59\textcolor{gray}{\text{\scriptsize±2.43}}$ & $50.61\textcolor{gray}{\text{\scriptsize±2.49}}$ \\

\hline

\multirow{3}{*}{\begin{tabular}[c]{@{}c@{}}\textbf{Retrieval-}\\\textbf{based}\end{tabular}}
& \multirow{3}{*}{\textit{\methodname}}
& MAE & ${18.86}\textcolor{gray}{\text{\scriptsize±1.16}}$ & $\textcolor{red}{\mathbf{21.10}}\textcolor{gray}{\text{\scriptsize±0.93}}$ & $\textcolor{red}{\mathbf{23.77}}\textcolor{gray}{\text{\scriptsize±0.68}}$ & $\textcolor{red}{\mathbf{20.99}}\textcolor{gray}{\text{\scriptsize±0.95}}$
& $\textcolor{red}{\mathbf{12.36}}\textcolor{gray}{\text{\scriptsize±0.28}}$ & $\textcolor{red}{\mathbf{13.18}}\textcolor{gray}{\text{\scriptsize±0.18}}$ & $\textcolor{red}{\mathbf{15.12}}\textcolor{gray}{\text{\scriptsize±0.16}}$ & $\textcolor{red}{\mathbf{13.38}}\textcolor{gray}{\text{\scriptsize±0.21}}$
& $\textcolor{red}{\mathbf{4.89}}\textcolor{gray}{\text{\scriptsize±0.04}}$ & $\textcolor{red}{\mathbf{4.90}}\textcolor{gray}{\text{\scriptsize±0.04}}$ & $\textcolor{red}{\mathbf{4.92}}\textcolor{gray}{\text{\scriptsize±0.03}}$ & $\textcolor{red}{\mathbf{4.90}}\textcolor{gray}{\text{\scriptsize±0.03}}$ \\

& & RMSE & $\textcolor{red}{\mathbf{27.24}}\textcolor{gray}{\text{\scriptsize±1.61}}$ & $\textcolor{red}{\mathbf{31.35}}\textcolor{gray}{\text{\scriptsize±1.31}}$ & $\textcolor{red}{\mathbf{35.82}}\textcolor{gray}{\text{\scriptsize±0.99}}$ & $\textcolor{red}{\mathbf{31.00}}\textcolor{gray}{\text{\scriptsize±1.34}}$
& $\textcolor{red}{\mathbf{18.76}}\textcolor{gray}{\text{\scriptsize±0.43}}$ & $\textcolor{red}{\mathbf{20.17}}\textcolor{gray}{\text{\scriptsize±0.27}}$ & $\textcolor{red}{\mathbf{23.42}}\textcolor{gray}{\text{\scriptsize±0.25}}$ & $\textcolor{red}{\mathbf{20.48}}\textcolor{gray}{\text{\scriptsize±0.33}}$
& $\textcolor{red}{\mathbf{5.00}}\textcolor{gray}{\text{\scriptsize±0.05}}$ & $\textcolor{red}{\mathbf{5.05}}\textcolor{gray}{\text{\scriptsize±0.04}}$ & $\textcolor{red}{\mathbf{5.19}}\textcolor{gray}{\text{\scriptsize±0.04}}$ & $\textcolor{red}{\mathbf{5.07}}\textcolor{gray}{\text{\scriptsize±0.04}}$ \\

& & MAPE (\%) & ${23.79}\textcolor{gray}{\text{\scriptsize±0.93}}$ & $\textcolor{red}{\mathbf{26.75}}\textcolor{gray}{\text{\scriptsize±0.56}}$ & $\textcolor{red}{\mathbf{30.74}}\textcolor{gray}{\text{\scriptsize±0.22}}$ & $\textcolor{red}{\mathbf{26.77}}\textcolor{gray}{\text{\scriptsize±0.62}}$
& $\textcolor{red}{\mathbf{18.13}}\textcolor{gray}{\text{\scriptsize±0.46}}$ & $\textcolor{red}{\mathbf{18.82}}\textcolor{gray}{\text{\scriptsize±0.36}}$ & $\textcolor{red}{\mathbf{21.07}}\textcolor{gray}{\text{\scriptsize±0.40}}$ & $\textcolor{red}{\mathbf{19.14}}\textcolor{gray}{\text{\scriptsize±0.38}}$
& $\textcolor{red}{\mathbf{43.09}}\textcolor{gray}{\text{\scriptsize±1.38}}$ & $\textcolor{red}{\mathbf{43.76}}\textcolor{gray}{\text{\scriptsize±1.35}}$ & $\textcolor{red}{\mathbf{45.12}}\textcolor{gray}{\text{\scriptsize±1.37}}$ & $\textcolor{red}{\mathbf{43.89}}\textcolor{gray}{\text{\scriptsize±1.34}}$ \\
\hline
\end{tabular}
}}
\end{small}
\vspace{-1mm}
\end{table*}

\subsection{Implementation Details and Evaluation}
\label{app.set}
We establish a data split ratio of training/validation/test = 6/2/2 for all experiments. For fair comparisons, we set the learning rate to either 0.03 or 0.01 based on the specific situation of each dataset and model requirements (Table ~\ref{tab:hyperparameters}). The parameters of baselines were set based on their original papers and any accompanying code. We utilized either their default parameters or the best-reported parameters from their reported publications.

\begin{table}[htbp]
\centering
\caption{Hyperparameter Settings for All Methods Across Different Datasets}
\label{tab:hyperparameters}
\small
\begin{tabular}{l@{\hspace{0.9cm}}c@{\hspace{0.9cm}}c@{\hspace{0.9cm}}c@{\hspace{0.9cm}}c@{\hspace{0.9cm}}c@{\hspace{0.9cm}}c}
\toprule
\multirow{2}{*}{\textbf{Method}} & \multicolumn{2}{c}{\textbf{AIR-Stream}} & \multicolumn{2}{c}{\textbf{PEMS-Stream}} & \multicolumn{2}{c}{\textbf{ENERGY-Stream}} \\
\cmidrule(lr){2-3} \cmidrule(lr){4-5} \cmidrule(lr){6-7}
 & \textbf{LR} & \textbf{BS} & \textbf{LR} & \textbf{BS} & \textbf{LR} & \textbf{BS} \\
\midrule
Pretrain & 0.03 & 128 & 0.03 & 128 & 0.01 & 128 \\
Retrain & 0.03 & 128 & 0.03 & 128 & 0.03 & 128 \\
TrafficStream & 0.01 & 128 & 0.03 & 128 & 0.03 & 128 \\
ST-LoRA & 0.03 & 128 & 0.03 & 128 & 0.03 & 128 \\
STKEC & 0.01 & 128 & 0.01 & 128 & 0.03 & 128 \\
EAC & 0.03 & 128 & 0.03 & 128 & 0.03 & 128 \\
ST-Adapter & 0.03 & 128 & 0.03 & 128 & 0.03 & 128 \\
GraphPro & 0.03 & 128 & 0.03 & 128 & 0.03 & 128 \\
PECPM & 0.03 & 128 & 0.03 & 128 & 0.03 & 128 \\
EWC & 0.01 & 128 & 0.03 & 128 & 0.03 & 128 \\
Replay & 0.01 & 128 & 0.03 & 128 & 0.03 & 128 \\
RAGraph & 0.03 & 128 & 0.03 & 128 & 0.03 & 128 \\
PRODIGY & 0.03 & 128 & 0.03 & 128 & 0.03 & 128 \\
\textbf{STRAP} & \textbf{0.03} & \textbf{128} & \textbf{0.03} & \textbf{128} & \textbf{0.03} & \textbf{128} \\
\bottomrule
\end{tabular}
\end{table}

All experiments were conducted with the same early stopping mechanism to prevent overfitting. To ensure robust results, we repeated each experiment with 3 different random seeds and reported the average performance. All experiments were performed using NVIDIA A100 80G GPUs to ensure consistency in computational resources and reproducibility.
For evaluations, we employed three standard evaluation metrics to comprehensively assess model performance: Mean Absolute Error (MAE), Root Mean Square Error (RMSE), and Mean Absolute Percentage Error (MAPE). Lower values across these metrics indicate better performance.

\subsection{Backbone and Baseline Details}
\label{app.bb}
\paragraph{Backbone Architectures.} We conducted experiments using four different spatio-temporal graph neural network architectures as backbones to evaluate the robustness and versatility of our approach:

\begin{itemize}[leftmargin=*]
    \item \textbf{STGNN} \citep{wang2020traffic}: Spatial-Temporal Graph Neural Network integrates graph convolution with 1D convolution to capture spatial dependencies and temporal dynamics simultaneously. It employs a hierarchical structure where graph convolution extracts spatial features while temporal convolution captures sequential patterns, making it effective for traffic forecasting tasks.
    
    \item \textbf{ASTGNN} \citep{guo2021learning}: Attention-based Spatial-Temporal Graph Neural Network enhances STGNN with multi-head attention mechanisms to dynamically capture both spatial and temporal dependencies. It introduces adaptive adjacency matrices that evolve over time, allowing the model to identify important connections that might change across different time periods.
    
    \item \textbf{DCRNN} \citep{li2018diffusion}: Diffusion Convolutional Recurrent Neural Network combines diffusion convolution with recurrent neural networks to model spatio-temporal dependencies. It formulates the traffic flow as a diffusion process on a directed graph and employs an encoder-decoder architecture with scheduled sampling for sequence prediction tasks.
    
    \item \textbf{TGCN} \citep{zhao2019t}: Temporal Graph Convolutional Network integrates graph convolutional networks with gated recurrent units to capture spatial dependencies and temporal dynamics simultaneously. It maintains a balance between model complexity and predictive power, making it particularly suitable for traffic prediction tasks with limited computational resources.
\end{itemize}

\paragraph{Baseline Methods.} We compared our proposed \methodname framework against various state-of-the-art methods spanning multiple categories:

\ding{182} \textbf{Backbone-based Methods.}
\begin{itemize}[leftmargin=*]
    \item \textbf{Pretrain}: This approach involves training the backbone model on historical data and directly applying it to new streaming data without any adaptation. It serves as a lower bound baseline that illustrates the performance degradation when models fail to adapt to distribution shifts.
    
    \item \textbf{Retrain}: This method completely retrains the backbone model from scratch whenever new data arrives. While it can adapt to new distributions, it suffers from catastrophic forgetting of previously learned patterns and incurs substantial computational costs.
\end{itemize}

\ding{183} \textbf{Architecture-based Methods.}

\begin{itemize}[leftmargin=*]
    \item \textbf{TrafficStream} \citep{chen2021trafficstream}: A streaming traffic flow forecasting framework based on continual learning principles. It maintains a memory buffer of historical samples and employs experience replay to mitigate catastrophic forgetting while adapting to evolving traffic patterns.
    
    \item \textbf{ST-LoRA} \citep{ruan2024low}: Spatio-Temporal Low-Rank Adaptation introduces parameter-efficient fine-tuning for spatio-temporal models by inserting lightweight, trainable low-rank matrices while keeping most pretrained parameters frozen, enabling efficient adaptation to new distributions with minimal computational overhead.
    
    \item \textbf{STKEC} \citep{wang2023knowledge}: Spatio-Temporal Knowledge Expansion and Consolidation framework specifically designed for continual traffic prediction with expanding graphs. It maintains knowledge from historical graphs while accommodating new nodes and edges through specialized knowledge transfer mechanisms.
    
    \item \textbf{EAC} \citep{chen2024expand}: Expand and Compress introduces a parameter-tuning framework for continual spatio-temporal graph forecasting that freezes the base model to preserve prior knowledge while adapting to new data through prompt parameter pools, effectively balancing stability and plasticity.
    
    \item \textbf{ST-Adapter} \citep{pan2022st}: Originally designed for image-to-video transfer learning, this approach adapts pretrained models to spatio-temporal tasks by inserting lightweight adapter modules that capture temporal dynamics while preserving spatial representations from the pretrained model.
    
    \item \textbf{GraphPro} \citep{yang2024graphpro}: A graph pre-training and prompt learning framework that utilizes task-specific prompts to adapt pretrained graph neural networks to downstream tasks, achieving efficient knowledge transfer with minimal parameter updates.
\end{itemize}

\ding{184} \textbf{Regularization-based Methods.}
\textbf{EWC} \citep{liu2018rotate}: Elastic Weight Consolidation selectively constrains important parameters for previously learned tasks while allowing flexibility in less critical parameters. It utilizes Fisher information matrices to estimate parameter importance and applies regularization accordingly, preventing catastrophic forgetting during continual learning.

\ding{185} \textbf{Replay-based Methods.}
\textbf{Replay} \citep{foster2017replay}: This approach maintains a memory buffer of representative samples from historical data distributions and periodically replays them during training on new data. It explicitly preserves knowledge of previous distributions, though at the cost of increased memory requirements and potentially inefficient knowledge consolidation.

\ding{186} \textbf{Pattern Matching-based Methods.}
\textbf{PECPM} \citep{wang2023pattern}: Pattern Expansion and Consolidation on evolving graphs for continual traffic prediction leverages pattern matching techniques to identify relevant historical patterns. It constructs a pattern memory module that stores and retrieves patterns based on spatio-temporal similarity, enabling adaptation to evolving graph structures.

\subsection{Datasets Statistics}
\label{app.dataset}
Our experiments use real-world natural streaming datasets, and detailed statistics for each dataset are shown in Table~\ref{tab:details_datasets} below:
\begin{table}[htbp]
\centering
\caption{Dataset Details}
\label{tab:details_datasets}
\footnotesize % 使表格更小
\renewcommand{\arraystretch}{1.1} % 略微减小行距
\setlength{\tabcolsep}{2pt} % 减少列间距
\begin{tabular}{llcc>{\centering\arraybackslash}p{2.8cm}cc}
\toprule
\textbf{Dataset} & \textbf{Domain} & \textbf{Time Range} & \textbf{Period} & \textbf{Node Evolution} & \textbf{Frequency} & \textbf{Frames} \\
\hline
Air-Stream & Weather & 01/01/2016 - 12/31/2019 & 4 & 
\begin{tabular}[c]{@{}c@{}}
1087 $\rightarrow$ 1115 $\rightarrow$ 1154 \\
$\rightarrow$ 1193 $\rightarrow$ 1202
\end{tabular} & 1 hour & 34,065 \\
\hline
PEMS-Stream & Traffic & 07/10/2011 - 09/08/2017 & 7 & 
\begin{tabular}[c]{@{}c@{}}
655 $\rightarrow$ 713 $\rightarrow$ 786 \\
$\rightarrow$ 822 $\rightarrow$ 834 $\rightarrow$ 850 \\
$\rightarrow$ 871
\end{tabular} & 5 min & 61,992 \\
\midrule
Energy-Stream & Energy & Unknown (245 days) & 4 & 
\begin{tabular}[c]{@{}c@{}}
103 $\rightarrow$ 113 \\
$\rightarrow$ 122 $\rightarrow$ 134
\end{tabular} & 10 min & 34,560 \\
\bottomrule
\end{tabular}
\end{table}

We use three real-world streaming graph datasets for our analysis. First, the \textit{PEMS-Stream} dataset serves as a benchmark for traffic flow prediction, where the goal is to predict future traffic flow based on historical observations from a directed sensor graph. Second, we utilize the \textit{AIR-Stream} dataset for meteorological domain analysis, which focuses on predicting future air quality index (AQI) flow based on observations from various environmental monitoring stations located in China. We segment the data into four periods, corresponding to four years. Third, the \textit{ENERGY-Stream} dataset examines wind power in the energy domain, where the objective is to predict future indicators based on the generation metrics of a wind farm operated by a specific company (using temperature inside the turbine nacelle as a substitute for active power flow observations). This dataset is also divided into four periods, corresponding to four years, according to the appropriate sub-period dataset size.

For all datasets, we follow conventional practices \cite{li2018diffusion} to define the graph topology. Specifically, we compute the adjacency matrix $A$ for each year using a thresholded Gaussian kernel, defined as follows:
\begin{equation}
A_{i[t]j} = 
\begin{cases}
\exp \left( -\frac{d_{ij}^2}{\sigma^2} \right) & \text{if } \exp \left( -\frac{d_{ij}^2}{\sigma^2} \right) \geq r \text{ and } i \neq j \\
0 & \text{otherwise}
\end{cases}
\end{equation}
where $d_{ij}$ represents the distance between sensors $i$ and $j$, $\sigma$ is the standard deviation of all distances, and $r$ is the threshold. Empirically, we select $r$ values of 0.5 and 0.99 for the air quality and wind power datasets, respectively.

\newpage
\subsection{Hyper-parameter Study (RQ3)}
\label{app.c1}
% pattern lib size drop(0.1-0.9)
% fusion ratio(0.1-0.9)
%ablation experiments(s, st, t)
% retrieval counts(param k)
\paragraph{Fusion Ratio.} 
\begin{wrapfigure}{r}{8cm}
\vspace{-0.5cm}
\includegraphics[width=\linewidth]{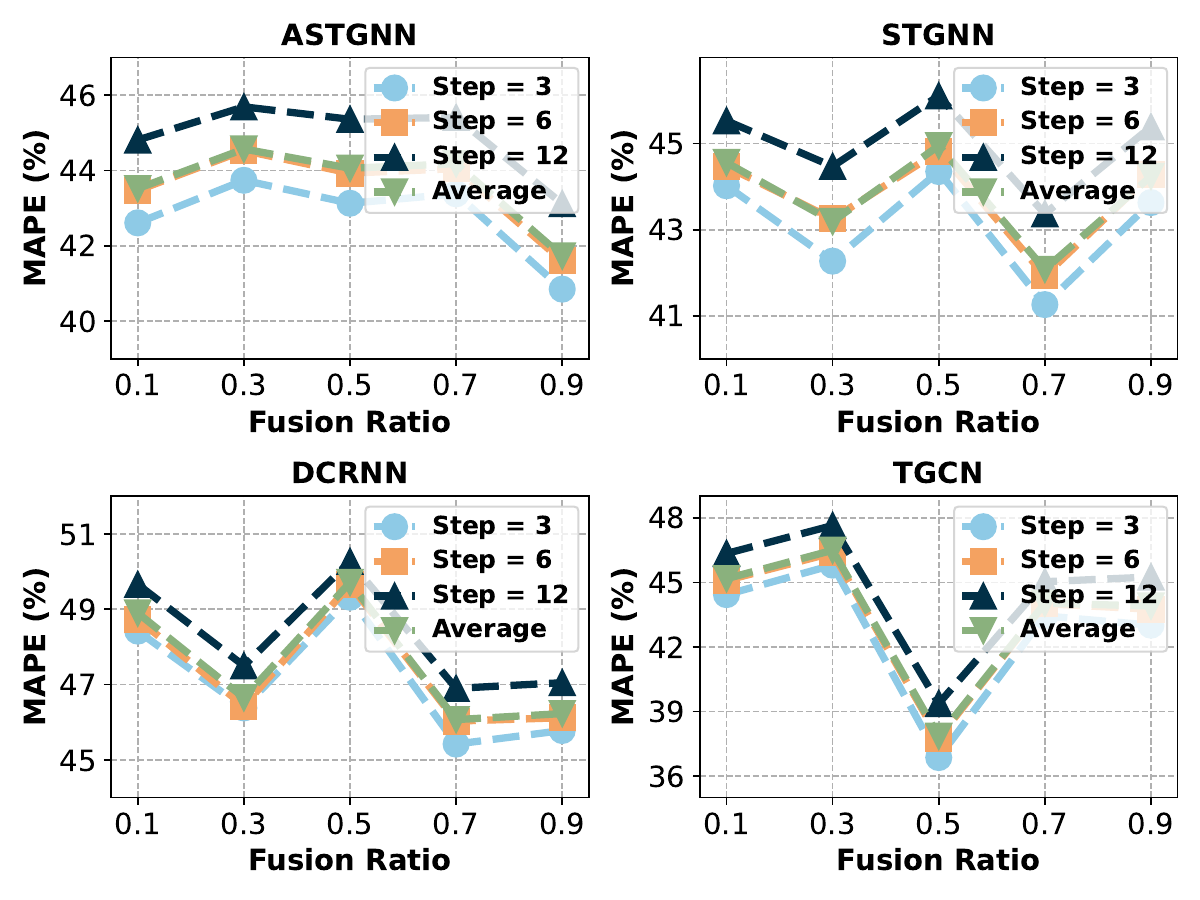}
\vspace{-0.3cm}
\caption{\footnotesize{Performance analysis of different backbone across various fusion ratios $\gamma$.}}
\vspace{-0.3cm}
\label{fig:fusion}
\end{wrapfigure}
Figure~\ref{fig:fusion} presents the effect of varying fusion ratios on different backbone architectures, revealing distinct optimal balancing points between historical knowledge and current observations. For ASTGNN, we observe a clear performance improvement as the fusion ratio increases to 0.9, with MAPE decreasing from 43.6\% to 40.7\% on average, suggesting this attention-based architecture benefits significantly from prioritizing current observations. In contrast, STGNN shows optimal performance at a moderate fusion ratio of 0.7, with performance degrading at both extremes.

DCRNN and TGCN display markedly different patterns. DCRNN performs best at fusion ratios of 0.3 and 0.7-0.9, indicating its dual-phase nature benefits from either stronger historical knowledge incorporation or current observation emphasis, but struggles with equal weighting. Most notably, TGCN demonstrates the most pronounced sensitivity to the fusion ratio, achieving its optimal performance at precisely 0.5, with substantial performance degradation at other ratios. This suggests TGCN's temporal gating mechanisms particularly benefit from balanced integration of historical patterns and current data.

\paragraph{Dropout Ratio.} 
\begin{wrapfigure}{r}{8cm}
\vspace{-0.5cm}
\includegraphics[width=\linewidth]{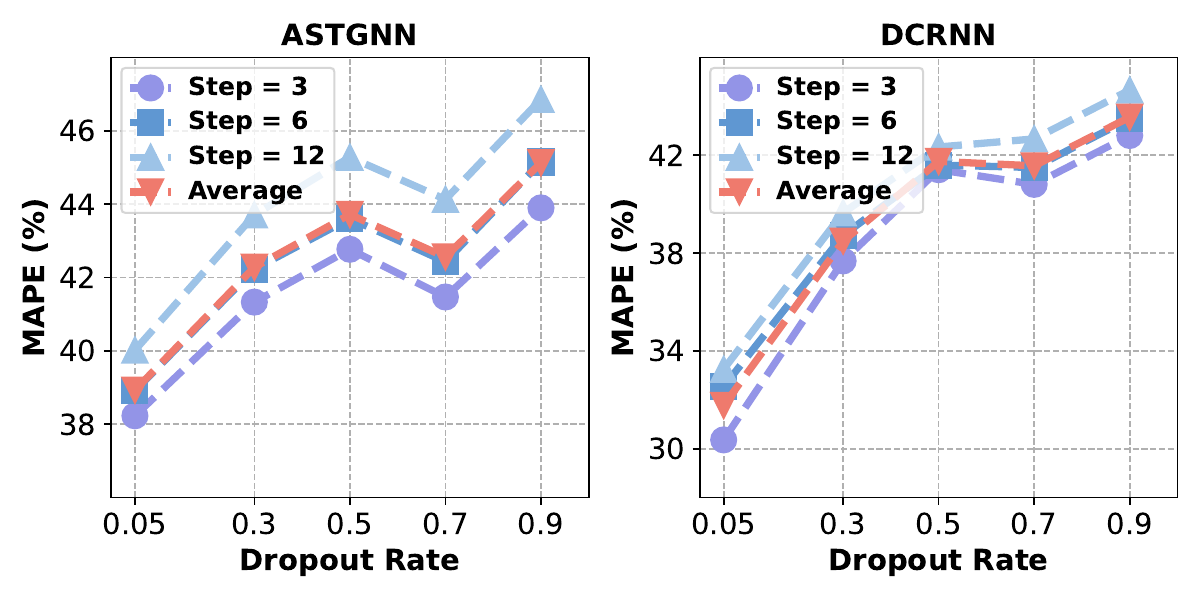}
\vspace{-0.3cm}
\caption{\footnotesize{Performance analysis (MAPE) of different pattern library dropout ratios.}}
\vspace{-0.3cm}
\label{fig:libdrop}
\end{wrapfigure}
Figure~\ref{fig:libdrop} demonstrates the impact of pattern library dropout rate on model performance across ASTGNN and DCRNN backbones. The consistently increasing MAPE with higher dropout rates provides compelling evidence for the effectiveness of our pattern library construction methodology.
This clear degradation trend confirms that each pattern stored in our library contributes valuable information for accurate forecasting, with minimal redundancy or noise. The near-linear performance decline with increasing dropout suggests our pattern selection and extraction mechanisms effectively capture essential spatio-temporal dynamics while filtering out irrelevant information. The particularly steep performance drop in DCRNN further emphasizes the high information density of our constructed library, where even moderate pattern removal significantly impacts predictive capability. 

%\newpage
\paragraph{Retrieval Count.} 
\begin{wrapfigure}{r}{8cm}
\vspace{-0.5cm}
\includegraphics[width=\linewidth]{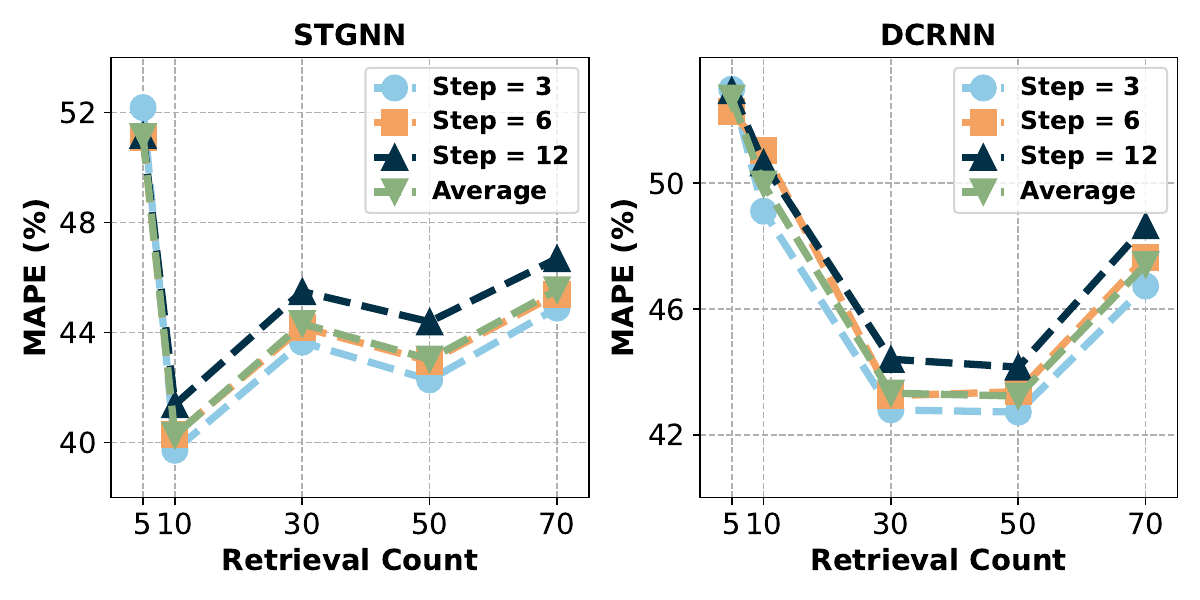}
\vspace{-0.3cm}
\caption{\footnotesize{Performance analysis of different retrieval counts.}}
\vspace{-0.3cm}
\label{fig:rc}
\end{wrapfigure}
Figure~\ref{fig:rc} examines how the number of retrieved patterns ($k$) from our pattern library affects prediction performance across STGNN and DCRNN backbones, revealing an optimal retrieval range that balances sufficient historical knowledge representation with minimal noise introduction.
The results demonstrate a clear U-shaped relationship between retrieval count and prediction error, where both insufficient retrieval ($k=5$) and excessive retrieval ($k=70$) lead to suboptimal performance, while moderate retrieval counts (10 for STGNN and 30-50 for DCRNN) achieve the lowest MAPE. This pattern suggests that retrieving too few patterns fails to capture sufficient historical knowledge for accurate predictions, while excessive retrieval introduces noise and potentially irrelevant patterns that dilute the quality of predictions. Notably, the optimal retrieval count differs between architectures, with the more complex DCRNN benefiting from a larger pattern set (30-50) compared to STGNN (10).

\subsection{Case Study (RQ4)}
\label{app.c2}
\begin{figure}[htbp]
    \centering
    % 第一行两张图片
    \begin{subfigure}[b]{0.48\textwidth}
        \includegraphics[width=\textwidth]{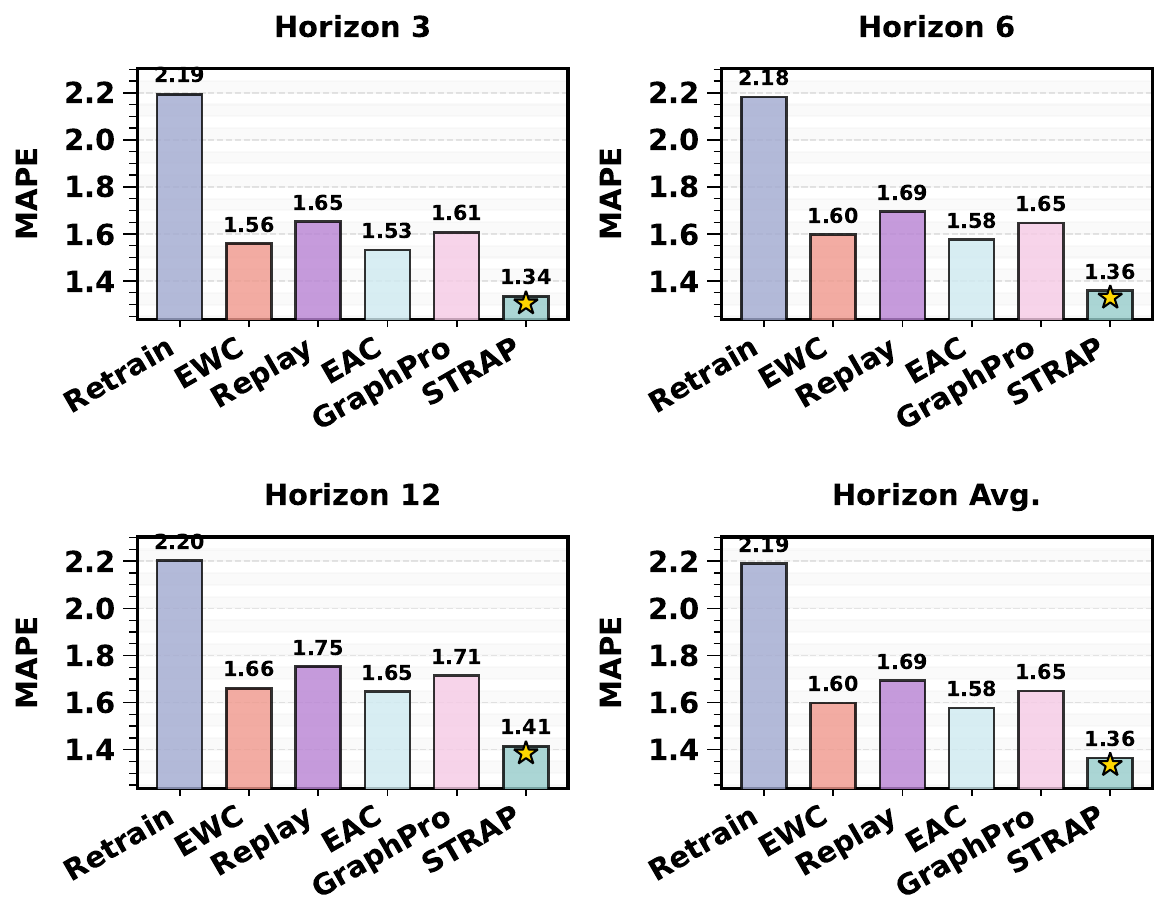}
        \caption{\footnotesize{Performance analysis (MAPE) of different horizons on ASTGNN backbone.}}
        \label{fig:image1}
    \end{subfigure}
    \hfill
    \begin{subfigure}[b]{0.48\textwidth}
        \includegraphics[width=\textwidth]{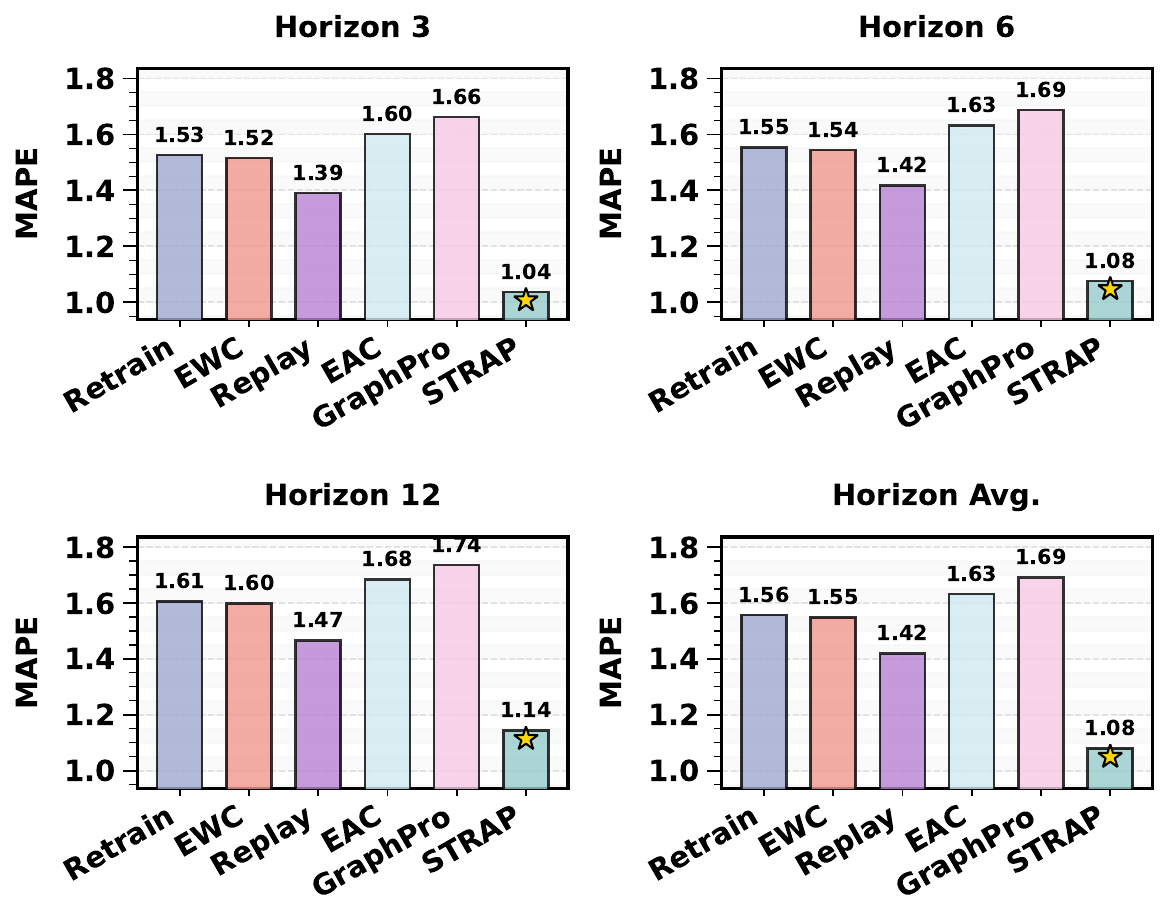}
        \caption{\footnotesize{Performance analysis (MAPE) of different horizons on TGCN backbone.}}
        \label{fig:image2}
    \end{subfigure}
    
    \vspace{-0.1cm} % 调整两行之间的垂直间距
    
    % 第二行一张图片（居中）
    \begin{subfigure}[b]{0.5\textwidth}
        \centering
        \includegraphics[width=\textwidth]{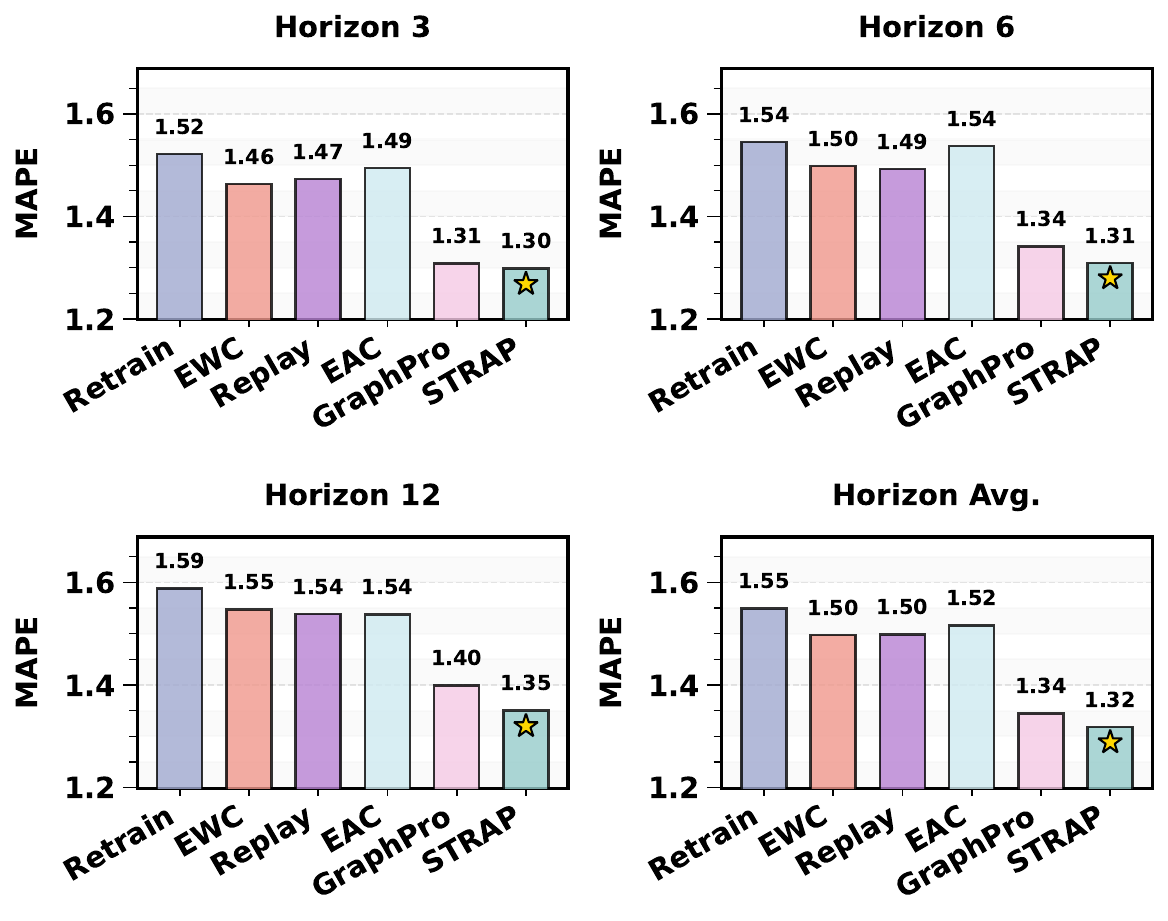}
        \caption{\footnotesize{Performance analysis (MAPE) of different horizons on DCRNN backbone.}}
        \label{fig:image3}
    \end{subfigure}
    \caption{Performance analysis (MAPE) of different horizons on different backbones.}
    \label{fig:three_images}
\end{figure}

As illustrated in Figure~\ref{fig:three_images}, we conducted a comprehensive analysis of model performance across different prediction horizons (3, 6, 12, and average) on three backbone architectures: ASTGNN, TGCN, and DCRNN. The results demonstrate that while baseline methods exhibit considerable performance variations across different backbone architectures, our \methodname consistently maintains superior performance regardless of the underlying backbone. This remarkable consistency can be attributed to two key factors: First, our approach decouples pattern extraction and retrieval from the specific neural network architecture, enabling a more robust knowledge representation that transcends the limitations of any particular backbone. Second, our multi-level pattern library framework operates as a plug-and-play enhancement that seamlessly integrates with various graph neural network foundations without requiring architecture-specific modifications. On ASTGNN, \methodname achieves average MAPE reductions of 19\%, 22\%, and 24\% compared to the best baseline for horizons 3, 6, and 12, respectively. Similarly significant improvements are observed on TGCN and DCRNN. These consistent gains across different architectures underscore the versatility and robustness of our approach, which effectively enhances prediction performance without being constrained by backbone design choices or implementation details.

\subsection{Retrieval-based Methods and Computational Efficiency (RQ5)}
\label{app.c3}
\subsubsection{Retrieval-based Methods}
We extended our experimental evaluation to incorporate two state-of-the-art methods, RAGraph ~\cite{jiang2024ragraphgeneralretrievalaugmentedgraph} and PRODIGY ~\cite{mishchenko2023prodigy}, across all four backbone architectures (STGNN, ASTGNN, DCRNN, TGCN) on the ENERGY-Stream dataset (Figure ~\ref{fig:strap_performance_comparison}). The comprehensive results demonstrate that while these advanced methods indeed provide substantial improvements over conventional baseline approaches, our proposed STRAP framework consistently achieves superior performance across all evaluation metrics and backbone architectures, further validating its effectiveness and generalizability.

\begin{figure}[htbp]
    \centering
    \includegraphics[width=\textwidth]{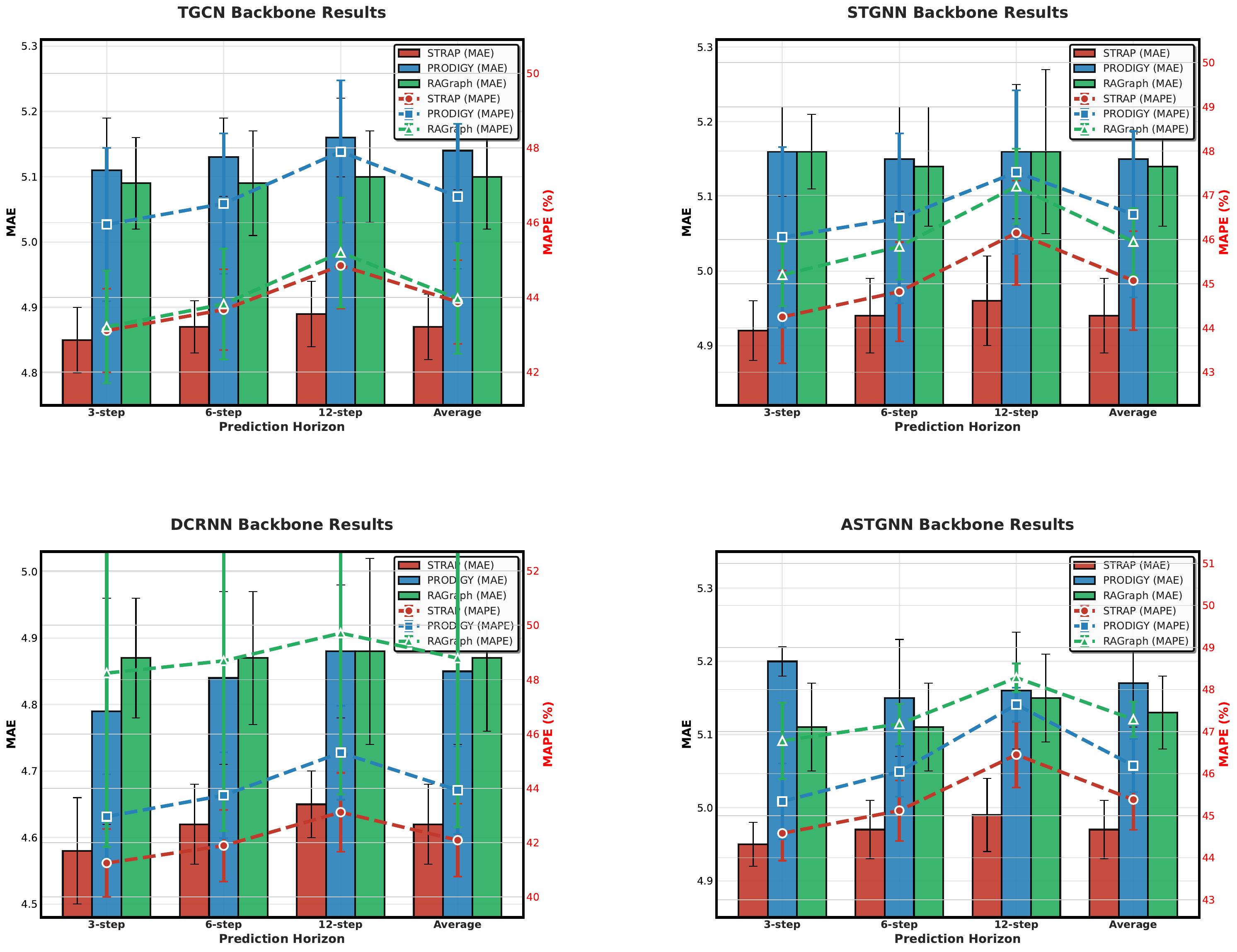}
    \caption{Performance comparison of STRAP, PRODIGY, and RAGraph across different backbone architectures. The figure shows MAE (bar charts) and MAPE (line charts) metrics for 3-step, 6-step, 12-step predictions and their averages. STRAP consistently outperforms both baseline methods across all backbone networks (TGCN, STGNN, DCRNN, ASTGNN).}
    \label{fig:strap_performance_comparison}
\end{figure}

\subsubsection{Computational Efficiency}
\begin{figure}[htbp]
    \centering
    \includegraphics[width=\textwidth]{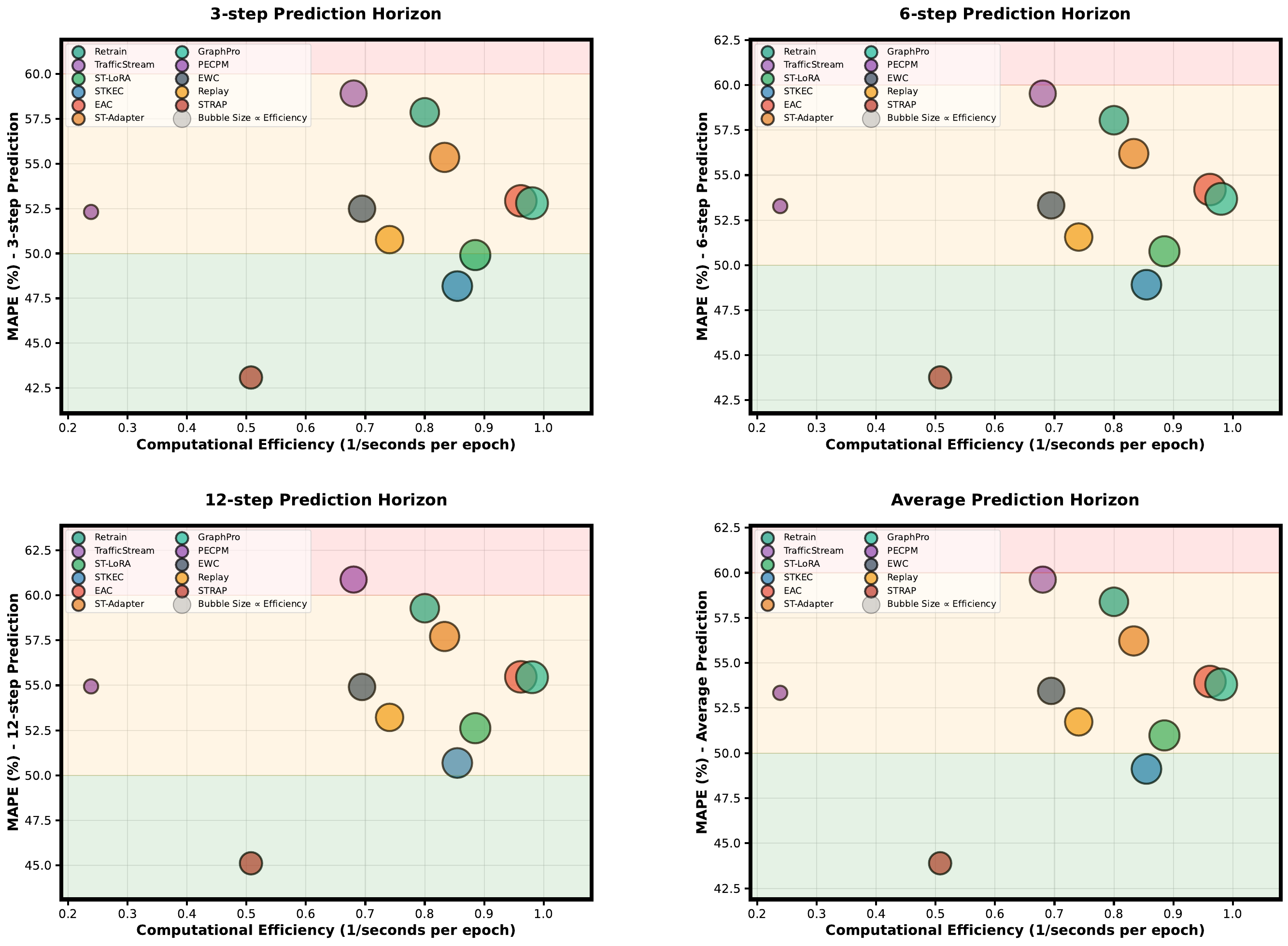}
    \caption{MAPE vs Computational Efficiency Analysis across Different Prediction Horizons. The bubble chart shows the relationship between computational efficiency (x-axis) and MAPE performance (y-axis) for all methods across 3-step, 6-step, 12-step, and average predictions on the ENERGY-Stream dataset. Bubble size represents computational efficiency, with larger bubbles indicating higher efficiency. STRAP consistently achieves the best MAPE performance across all prediction horizons while maintaining reasonable computational efficiency.}
    \label{fig:efficiency_bubble}
\end{figure}
As demonstrated in our experimental evaluation, STRAP achieves a favorable balance between computational efficiency and prediction performance. The computational cost analysis presented in Figure~\ref{fig:efficiency_bubble} reveals that STRAP's training time (measured in seconds per epoch) falls within a moderate range compared to existing methods. Specifically, using ASTGNN as the backbone architecture on the ENERGY-Stream dataset, our approach requires moderately higher computational resources than lightweight methods such as EAC, EWC, and GraphPro, but significantly less computational overhead than the most intensive baseline PECPM.

The inherent computational requirements of our retrieval-augmented framework stem from two primary components: pattern library maintenance and dynamic retrieval operations during training and inference. While this introduces unavoidable computational overhead compared to non-retrieval methods, the trade-off yields substantial benefits in prediction accuracy and enhanced robustness to distribution shifts---critical advantages in real-world traffic forecasting scenarios.

Despite the moderate computational cost increase, we argue that the performance gains justify this trade-off, particularly in applications where prediction accuracy under evolving traffic patterns is paramount. The computational overhead remains reasonable and scales efficiently with dataset size. Future research directions include exploring efficiency improvements through optimized pattern library indexing strategies and adaptive retrieval mechanisms to further optimize the performance-efficiency balance.

\end{document}